\documentclass[twoside,11pt]{article}

\usepackage{amsmath,amssymb,amsfonts,amsthm}
\usepackage{mathtools}
\usepackage{booktabs}
\usepackage{algorithm}
\usepackage{algorithmic}
\usepackage{empheq}
\usepackage{xcolor}

\usepackage{jmlr2e}
\usepackage{wrapfig}
\usepackage{needspace}
\usepackage{placeins}
\newsavebox{\admmAlgorithmBox}
\newsavebox{\rfsBranchesImageBox}

\newcommand{\R}{\mathbb{R}}
\newcommand{\N}{\mathbb{N}}
\renewcommand{\P}{\mathbb{P}}

\newcommand{\E}{\mathbb{E}}
\renewcommand{\d}{\mathrm{d}}

\newcommand{\rv}[1]{\mathbf{#1}}
\newcommand{\I}{\mathbb{I}}
\newcommand{\1}{\mathbf{1}}
\newcommand{\B}{\mathbb{B}}

\newcommand{\scriptB}{\mathcal{B}}
\newcommand{\scriptA}{\mathcal{A}}
\newcommand{\scriptC}{\mathcal{C}}
\newcommand{\scriptG}{\mathcal{G}}
\newcommand{\scriptT}{\mathcal{T}}
\newcommand{\scriptP}{\mathcal{P}}
\newcommand{\scriptM}{\mathcal{M}}
\newcommand{\scriptE}{\mathcal{E}}
\newcommand{\scriptF}{\mathcal{F}}

\newcommand{\scriptK}{\mathcal{K}}
\newcommand{\scriptN}{\mathcal{N}}
\newcommand{\scriptU}{\mathcal{U}}
\newcommand{\scriptI}{\mathcal{I}}
\newcommand{\scriptX}{\mathcal{X}}
\newcommand{\scriptL}{\mathcal{L}}
\newcommand{\scriptV}{\mathcal{V}}
\newcommand{\scriptJ}{\mathcal{J}}
\newcommand{\scriptH}{\mathcal{H}}

\newtheorem{prop}{Proposition}

\newtheorem{cor}{Corollary}

\newtheorem{defn}{Definition}
\newtheorem{rmk}{Remark}

\newcommand{\vct}[1]{\boldsymbol{#1}}

\newcommand{\josh}[1]{}

\colorlet{caseeditcolor}{red}
\DeclareRobustCommand{\caseedit}[1]{{\color{caseeditcolor}%
  \hypersetup{citecolor=caseeditcolor,linkcolor=caseeditcolor,urlcolor=caseeditcolor}#1}}

\DeclareRobustCommand{\joshrevision}[1]{{\color{red}%
  \hypersetup{citecolor=red,linkcolor=red,urlcolor=red}#1}}

\usepackage{lastpage}
\jmlrheading{[vol]}{2026}{1-\pageref{LastPage}}{xx/26; Revised xx/xx}{xx/xx}{21-0000}{Joshua Pilipovsky}

\ShortHeadings{Collaborative Autonomy for Heterogeneous UAV/UGV Teams}{Joshua Pilipovsky}
\firstpageno{1}

\makeatletter
\AtBeginDocument{%
  \colorlet{red}{black}%
  \colorlet{caseeditcolor}{black}%
  \hypersetup{allcolors=black}%
  \renewcommand{\josh}[1]{}%
  \g@addto@macro\ps@jmlrtps{%
    \let\@oddfoot\@empty
    \let\@evenfoot\@empty
  }%
}
\makeatother

\begin{document}

\title{Information-Driven Collaborative Autonomy for Heterogeneous UAV/UGV Teams}

\author{%
\name Joshua Y. Pilipovsky
\email joshua.y.pilipovsky@rtx.com \\
\addr Senior Research Engineer\\
RTX Technology Research Center (RTRC)\\
East Hartford, CT 06108, USA
}

\editor{xxx}

\maketitle

\begin{abstract}
\begingroup
This paper considers the problem of coordinating a heterogeneous team of unmanned aerial vehicles (UAVs) and unmanned ground vehicles (UGVs), where the ground vehicles must traverse a domain containing an unknown number of potentially hazardous targets.
The UAVs provide aerial sensing using onboard optical cameras, and their trajectories must adapt to both the ground routes and the information collected about the targets.
We formulate the aerial search problem using ergodic control, which distributes sensing effort according to a spatial information density to balance exploration and target localization.
The formulation incorporates dynamic camera field-of-view sensor footprints and probabilistic detection models that depend on vehicle position and attitude.
Angles-only measurements are processed through Bayesian inference and marginalization over data-association hypotheses to obtain a Poisson multi-Bernoulli (PMB) approximation of the target posterior, representing both detected and undetected targets.
This belief directs aerial sensing toward unexplored regions along the ground routes and viewpoints that improve target localization.
Ground planning combines chance-constrained target avoidance, enforced through confidence sets, with a void-probability objective that favors routes unlikely to contain threats.
We solve each team's optimization problem in a fully decentralized manner using a nested SCvx*--C-ADMM scheme, with outer sequential convex programming iterations and inner distributed consensus optimization, accounting for nonlinearities, nonseparable objectives, and continuous-time inter-agent separation constraints.
A closed-loop scheme updates both fleets' trajectories as information becomes available and permits ground motion when the remaining route risk falls below a prescribed threshold.
Numerical case studies examine the effects of sensing geometry and target uncertainty, compare decentralized and centralized solvers, and demonstrate full closed-loop collaboration between multiple UAVs and UGVs.
\endgroup
\end{abstract}

\begin{keywords}
  Ergodic control, trajectory optimization, sequential convex programming, distributed optimization, optical sensor modeling, risk-sensitive control, random finite set theory, multi-target tracking
\end{keywords}

\section{Introduction}~\label{sec:intro}
\begingroup
\hypersetup{citecolor=red,linkcolor=red,urlcolor=red}
Collaborative autonomy deals with the problem of coordinating multiple autonomous vehicles to accomplish a common objective.
In a homogeneous team, the vehicles typically share similar motion and sensing capabilities, allowing coordination to be formulated around a common vehicle and sensor model.
Heterogeneous teams introduce an additional level of coupling, since the capabilities of one vehicle class may be needed for another to complete its task.
For example, unmanned aerial vehicles (UAVs) can observe regions that are inaccessible to unmanned ground vehicles (UGVs), while the ground vehicles can carry the payloads needed to execute a mission.
This complementary use of aerial and ground platforms has been studied in coordinated exploration and mapping~\citep{dewan2013coordinated}, collaborative localization~\citep{butzke2016collaborative}, and persistent sensing with mobile recharging~\citep{naveed2025adaptive}.

In this work, we consider a UAV/UGV team in which the ground vehicles must traverse a bounded domain containing an unknown number of potentially hazardous targets.
The UAVs provide aerial overwatch, using onboard optical cameras to discover, localize, and track these targets before the UGVs encounter them.
The difficulty is that neither fleet can plan effectively in isolation.
The value of a UAV observation depends on its relevance to the ground routes, while the admissibility of those routes depends on what the UAVs have observed.
Moreover, failing to detect a target does not imply that a region is empty, since the camera has a limited field of view and a non-unit probability of detection.
The planning problem must therefore account for both uncertainty in detected targets and the possibility of targets that have not yet been observed.

A natural approach to the aerial search problem is ergodic control, which seeks trajectories whose time-averaged spatial statistics match a prescribed distribution over the search domain.
Regions assigned greater probability mass receive proportionally more sensing effort, allowing a single objective to express both broad exploration and concentrated information gathering.
The work of \citet{mathew2011metrics} introduced multiscale metrics for measuring deviations from ergodicity and designing multi-agent coverage dynamics.
Subsequent works formulated continuous-time ergodic trajectory optimization~\citep{miller2013trajectory} and used expected information densities to guide active sensing with nonlinear measurement models~\citep{miller2016information}.
The commonly used spectral metric compares weighted Fourier coefficients of the trajectory occupation measure with those of the desired spatial distribution.
Alternatively, \citet{hughes2025ergodic} formulate ergodic trajectory optimization using maximum mean discrepancy between kernel mean embeddings in a reproducing kernel Hilbert space, allowing coverage objectives to be specified from samples on more general domains.

For a fleet of UAVs, the ergodic objective depends on the aggregate coverage of all agents and thus couples their trajectories through the team Fourier coefficients.
\citet{abraham2018measurement} address this coupling through consensus on the coverage statistics, obtaining decentralized ergodic control for agents with nonlinear dynamics.
The sensing model introduces a second difficulty: the point occupation measure of a vehicle does not describe the ground area observed by a camera whose footprint changes with altitude and attitude.
Finite sensor footprints and obstacle constraints have been incorporated into ergodic coverage by \citet{ayvali2017constrained}.
Here, we derive the elliptical ground footprint from the position and attitude of the camera and construct a smooth, normalized sensing kernel from this geometry.
Integrating this kernel along a trajectory gives an occupation measure that describes the spatial distribution of camera coverage and depends on both the translational and rotational motion of the UAV.
Its spectral coefficients enter the fleet ergodic objective, so that the trajectories are optimized with respect to the area observed by the cameras.
The same camera geometry also enters a spatially varying detection probability and an angles-only measurement model with pose-dependent covariance.
This connects the coverage model used in trajectory planning with the missed detections and measurement uncertainty accounted for in Bayesian inference.

The search distribution must, in turn, reflect the information collected by the UAVs.
Random finite set (RFS) theory provides a natural Bayesian description of this information, since both the number of targets and their states are uncertain~\citep{mahler2007statistical}.
The measurements are also a finite set, containing an unknown combination of target detections and clutter, with some targets producing no measurement.
Poisson multi-Bernoulli mixture (PMBM) filtering represents undetected targets by a Poisson point process and detected targets by Bernoulli components conditioned on global data-association hypotheses~\citep{garciafernandez2018pmbm}.
Marginalizing the association uncertainty provides tractable multi-Bernoulli approximations~\citep{williams2015marginal}.
These ideas have also been used for multi-agent planning that balances the discovery of new objects with the tracking of existing ones~\citep{nguyen2020multiobjective}.
Here, we specialize the measurement update to camera-based, angles-only observations with pose-dependent detection and measurement quality, and maintain a Poisson multi-Bernoulli (PMB) approximation of the target posterior.
This distinction between detected and undetected targets is particularly useful for ground planning, since avoiding known tracks alone does not account for threats in regions that have not been sufficiently explored.

Planning around the estimated target positions alone neglects this uncertainty.
Chance-constrained trajectory optimization instead limits the probability of violating a safety requirement~\citep{blackmore2011chance}, and confidence sets provide a geometric means of accounting for uncertain target locations.
For a multi-object belief, the corresponding question is whether an entire region contains any targets.
The RFS void probability, namely, the probability that a set contains no objects, has been used to impose safety constraints in sensor control~\citep{beard2017void}; more recently, \citet{han2026risk} have developed belief control barrier functions from a Poisson representation of multi-object uncertainty.
We use the PMB void probability over the corridor swept by a candidate UGV route, which accounts for both the residual undetected-target intensity and the existence and location uncertainty of detected tracks.
Its negative logarithm supplies a route-risk cost, while a separate threshold on the remaining fleet corridor determines when the UGVs may proceed under the current belief.
As sensing reduces uncertainty or reveals a threat, the ground routes and the aerial search distribution are updated accordingly.

Solving the resulting fleet trajectory problems at a centralized ground station requires aggregating all vehicle decision variables and constraints.
A decentralized formulation instead allows each vehicle to solve a local problem and exchange the quantities needed to coordinate with its neighbors.
Consensus alternating direction method of multipliers (C-ADMM) provides an established framework for this purpose when the local problems are convex~\citep{boyd2011admm}.
For trajectory planning, \citet{rey2018decentralized} combine decentralized ADMM with adaptive linearization of collision constraints for agents with linear dynamics.
For nonlinear model predictive control, \citet{stomberg2026realtime} develop a closely related nested scheme in which decentralized sequential quadratic programming uses ADMM to solve the coupled quadratic subproblems.
In our setting, nonlinear vehicle dynamics, camera footprints, and collision constraints likewise require an outer approximation method before the fleet problem can be distributed into convex local solves.

To this end, we use successive convexification (SCvx), which replaces the nonlinear optimal control problem by a sequence of convex subproblems, with virtual controls and trust regions to handle infeasible linearizations and control the approximation error~\citep{mao2016successive}.
The SCvx* extension embeds these iterations in an augmented Lagrangian framework to address feasibility of the original nonlinear problem~\citep{oguri2023scvxstar}.
We also adopt the continuous-time constraint formulation of \citet{elango2024ctcs}, in which integrated constraint violations are represented by auxiliary dynamics, to account for inter-agent separation between discretization nodes.
Within each SCvx* iteration, local copies and graph flows separate the shared collision constraints and the global ergodic coefficient sum, allowing the convex subproblem to be solved using neighbor-to-neighbor C-ADMM updates.

The main contributions of this work are as follows:
\begin{enumerate}
    \item We formulate multi-UAV ergodic trajectory optimization with optical sensor footprints that depend on the full camera pose.
    Starting from a conic field of view, we derive the position, orientation, and extent of the ground footprint and construct a smooth footprint kernel that defines the team occupation measure and its Fourier coefficients.
    The resulting ergodic objective measures coverage of the ground area observed by the cameras throughout their motion.
    \item We connect Bayesian inference from UAV angles-only measurements to both aerial and ground planning through a shared PMB target belief.
    The resulting formulations direct aerial sensing toward unexplored regions along the ground routes and informative views of detected targets.
    The same belief accounts for detected and undetected threats in ground planning through confidence-set avoidance and a route-risk cost based on the PMB void probability.
    \item \begingroup\emergencystretch=1em
    We develop a novel decentralized method for non-convex fleet trajectory optimization problems by combining outer SCvx iterations with inner C-ADMM solves.
    The method accommodates nonlinear dynamics, continuous-time inter-agent collision avoidance, and nonseparable team objectives.
    For the UAV problem, we additionally introduce a new graph-flow formulation of C-ADMM that replaces the global coefficient-sum constraint with local balance equations and neighbor agreement.
    We establish its equivalence at consensus to the centralized convex subproblem, enabling decentralized optimization of the coupled ergodic objective without a central coefficient aggregator.
    \par\endgroup
    \item We construct a closed-loop coordination scheme that updates aerial sensing plans as the target belief evolves, replans ground routes when newly confirmed threats affect them, and authorizes ground motion according to the remaining route risk.
    Numerical case studies assess UAV search through spatial coverage and ergodic mismatch, and UGV planning through route risk, control effort, and progress toward the goals.
    We also evaluate solver convergence, computational scaling, and solution quality relative to centralized optimization.
\end{enumerate}

The paper is organized as follows.
Sections~\ref{sec:notation} and~\ref{sec:prelim} introduce the notation and mathematical preliminaries.
Section~\ref{sec:UAV-traj-opt} formulates the UAV sensing and trajectory optimization problem and its decentralized solution.
Section~\ref{sec:rfs-inference} derives the RFS measurement update and PMB posterior approximation.
Section~\ref{sec:UGV-traj-opt} presents the UGV risk-aware trajectory optimization problem.
Section~\ref{sec:coordination} describes the feedback between the two fleets, followed by numerical case studies in Section~\ref{sec:examples} and concluding remarks in Section~\ref{sec:conclusion}.
\endgroup

\section{Notation}\label{sec:notation}
Real-valued vectors are lowercase $x \in \R^{d}$ and matrices are uppercase $X \in \R^{c\times d}$.
{The operator $\operatorname{col}(B_1,\ldots,B_q)\triangleq[B_1^\intercal,\ldots,B_q^\intercal]^\intercal$ denotes vertical stacking of vectors or matrix blocks with the same number of columns.
Indexed families are stacked in a fixed order.
For vector-valued functions, the operator acts pointwise.}
Random vectors are denoted by boldface $\vct x$.
{The notation $\scriptN(m,P)$ denotes a Gaussian distribution with mean $m$ and covariance $P$.
For $P\succ0$, $\scriptN(x;m,P)$ denotes its probability density evaluated at $x$.}
Denote by $\scriptM_{+}(\Omega)$ the cone of non-negative Borel measures over the support $\Omega\subset\R^{d}$ and $\scriptP(\Omega) = \{\mu \in \scriptM_{+}(\Omega) : \mu(\Omega) = 1\}$ the space of all probability measures.
We denote by $\delta_{x}$ the Dirac delta distribution concentrating unit mass at the atom $x \in \R^{d}$.
The disjoint union of two sets $\scriptA,\scriptB\subseteq\R^{d}$ is denoted by $\scriptA \sqcup \scriptB$.
Given a set $\scriptX \subseteq \R^{d}$, we denote by $2^{\scriptX} = \{S : S \subseteq \scriptX\}$ the power set of $\scriptX$.
The set $\mathcal{F}(\mathcal{X})$ represents the space of all finite subsets of $\mathcal{X}$.
More explicitly, $\scriptF(\scriptX) = \{S \subseteq \scriptX : |S| < \infty\}$, or, equivalently, $\scriptF(\scriptX) = \sqcup_{n=0}^{\infty} \scriptF_{n}(\scriptX)$, where $\scriptF_{n}(\scriptX) = \{S \subseteq \scriptX : |S| = n\}$.
Given an absolutely continuous probability distribution $\P$ supported on a set $\scriptX$, its density is denoted by $\rho(x)$ for $x \in \scriptX$.
{For a nonnegative function $g$ with $0<\int_\Omega g(z)\,\d z<\infty$, define the normalization operator $\operatorname{Norm}_\Omega[g](\xi)\triangleq g(\xi)/\int_\Omega g(z)\,\d z$.}
A Bernoulli random variable, denoted by $\vct E \sim \mathrm{Bernoulli}(p)$, satisfies $\P(\vct E = 1) = p$ and $\P(\vct E = 0) = 1-p$.
A Poisson random variable, denoted by $\vct E \sim \mathrm{Poisson}(\Lambda)$, satisfies $\P(\vct E = n) = e^{-\Lambda}\Lambda^{n} / n!$.
Given a domain $\Omega\subset\R^{d}$ and time horizon $[0,T]\subset\R_{+}$, denote by $L^p([0,T];\Omega)$ the (signal) space of functions with finite $L^p$ norm $\int_{0}^{T} |f(\tau)|^{p} \, \mathrm{d}\tau < \infty$. For an integer $s\geq0$, $W^{s,p}$ denotes the usual weak-derivative Sobolev space; fractional and negative orders below are understood through the standard spectral/interpolation definition and duality.
{For a scalar $a\in\R$, define its positive part by $[a]_+\triangleq\max\{0,a\}$.
For a vector $x\in\R^d$, $[x]_+\triangleq\max\{0,x\}\in\R^d$ applies this operation componentwise.}
An undirected communication graph is denoted by $\scriptG = (\scriptV, \scriptE)$, where $\scriptV = \{1,\ldots,M\}$ denotes the set of agents and $\scriptE \subseteq \{\{i,j\}\in\scriptV : i \neq j\}$ denotes the set of undirected communication links.
The characteristic function of a set $\scriptA\subset\R^d$ is $\1_{\scriptA}(a)=1$ if $a\in\scriptA$ and $0$ otherwise. Its extended-valued constraint indicator is $\I_{\scriptA}(a)=0$ if $a\in\scriptA$ and $+\infty$ otherwise.
For $r>0$, $\B(0,r)\triangleq\{x:\|x\|<r\}$ is the open Euclidean ball; $\scriptA\oplus\scriptB\triangleq\{a+b:a\in\scriptA,\ b\in\scriptB\}$ is the Minkowski sum, and $\scriptA\ominus\scriptB\triangleq\{x:x+\scriptB\subseteq\scriptA\}$ is the Pontryagin difference.
The cardinality of any finite set $\Pi$ is denoted by $|\Pi|$.
{For a nonempty set $\scriptA\subseteq\R^d$, the distance from $x\in\R^d$ to $\scriptA$ is $\operatorname{dist}(x,\scriptA)\triangleq\inf_{a\in\scriptA}\|x-a\|_2$.
When $\scriptA$ is also closed and convex, $\Pi_{\scriptA}(x)\triangleq\operatorname*{argmin}_{a\in\scriptA}\|x-a\|_2$ denotes its unique Euclidean projection.}
{For any positive integer $d$, $I_d$ and $0_d$ denote the $d\times d$ identity and zero matrices, respectively.
More generally, $0_{r\times c}$ is the $r\times c$ zero matrix; in particular, $0_{d\times 1}$ and $1_{d\times 1}$ denote the zero and one column vectors of length $d$.}

\section{Preliminaries}~\label{sec:prelim}
{This section introduces random finite set models for target inference in Section~\ref{sec:rfs-inference} and the spectral ergodic metric for UAV coverage in Section~\ref{sec:UAV-traj-opt}.
Sequential convex programming addresses the nonlinear UAV and UGV trajectory problems, an isoperimetric reformulation enforces continuous-time pairwise separation, and consensus ADMM distributes the resulting convex subproblems across each fleet.}
\subsection{Random Finite Set Theory}~\label{subsec:RFS}
When detecting objects%
\footnote{We will use the words `objects' and `targets' interchangeably throughout the text.}
in an unknown environment, there are often two sources of ambiguity regarding the objects, namely, both the \textit{estimated states} of detected targets and the \textit{number} (or cardinality) of the targets themselves.
Indeed, it is rare that one would ever know the true number of targets in a domain, aside from some exceptional prior reconnaissance or intelligence.
In this context, a Random Finite Set (RFS) is precisely the mathematical tool that can model uncertainty in both the cardinality of a set and the elements making up that set~\citep{mahler2007statistical}.
Below we give an overview of two classes of RFS which will be used in the sequel, namely the i) Multi-Bernoulli (MB) RFS, and the ii) Poisson Point Process (PPP).
\paragraph{Multi-Bernoulli RFS}
First, we start with a Bernoulli RFS which is a generalization of a standard Bernoulli random variable.
\begin{defn}[Bernoulli RFS]
    Let $\mathcal{X}\subset\mathbb{R}^{d}$ be the state space and let $\Upsilon$ be the sample space.
    An RFS $\rv X:\Upsilon\rightarrow\mathcal{F}(\mathcal{X})$ is a Bernoulli RFS with parameters $(p,\rho)$, where $p\in[0,1]$ is the \textit{existence} probability and $\rho$ is a probability density on $\mathcal{X}$, if $\rv X$ has the structure
    \begin{equation}
        \rv X = 
        \begin{cases}
            \varnothing, &\mathrm{w.p.} \ 1 - p, \\
            \{\rv x\},\, \rv x \sim \rho,  &\mathrm{w.p.} \ p.
        \end{cases}
    \end{equation}
\end{defn}
For brevity, we will often use the shorthand $\rv X \sim \mathrm{BernoulliRFS}(p,\rho)$.
In essence, a Bernoulli RFS generalizes a Bernoulli random variable by replacing the positive binary outcome (1 or true) with a random singleton $\{x\}$, thereby jointly modeling uncertainty in both existence and state.
As a result, the cardinality always satisfies $|\rv X| \in \{0,1\}$ and we can write the properties $\P(|\rv X| = 1) = p$ and $\rv x \mid |\rv X| = 1 \sim \rho$.
Furthermore, the complete density of the Bernoulli RFS is given by
\begin{equation}
    \rho(X) = 
    \begin{cases}
        1 - p, & X = \varnothing, \\
        p \rho(x), &X = \{x\}, \\
        0, &|X| \geq 2,
    \end{cases}
\end{equation}
where, in this context, a random finite set density $\rho(\cdot)$ is defined such that the set integral
\begin{equation*}
    \int \rho(X) \delta X = \rho(\varnothing) + \sum_{n=1}^{\infty} \frac{1}{n!} \int_{\mathcal{X}^{n}} \rho(\{x_1,\ldots,x_n\})\, \d x_1 \cdots \d x_n = 1.
\end{equation*}
This would suffice to characterize a single target in the domain.
However, to account for multiple detected targets, we generalize the Bernoulli construction to multiple independent Bernoulli events.
\begin{defn}[Multi-Bernoulli RFS]
    Let the single-object state space be $\mathcal{X}$ and consider $M$ independent Bernoulli RFSs $\rv X^{(i)} \sim \mathrm{BernoulliRFS}(r_i, \rho_i)$ for $i = 1,\ldots, M$, where $r_i \in [0,1]$ and $\rho_i$ is a probability density on $\mathcal X$.
    The random finite set 
    \begin{equation}
        \rv X = \bigcup_{i=1}^{M} \rv X^{(i)}
    \end{equation}
    is a multi-Bernoulli (MB) RFS with parameters $\{(r_i, \rho_i)\}_{i=1}^{M}$.
\end{defn}
Thus, it follows that the cardinality $|\rv X| \in \{0,\ldots, M\}$ may be at most $M$ and the discrete cardinality distribution is
\begin{equation}
    \P(|\rv X| = n) = \sum_{\scriptI \subseteq \{1,\ldots,M\} : |\scriptI| = n} \left[ \prod_{i\in \scriptI} r_i \prod_{j \notin \scriptI} (1 - r_j)\right].
\end{equation}
Importantly, we have to sum over all index subsets of cardinality $n$ in the above expression.
The corresponding density is given by
\begin{equation}
    \rho(X) = \sum_{\bigsqcup_{i=1}^{M} X^{(i)} = X} \, \, \prod_{i=1}^{M} \rho_i\big(X^{(i)}\big),
\end{equation}
where $\rho_i(\cdot)$ represent the individual Bernoulli RFS densities.
In the sequel, the MB RFS will be used to represent the distribution of \textit{detected} targets in the domain and will be updated whenever a probabilistic positive-detection event takes place.
\paragraph{Poisson Point Process}
A Poisson Point Process (PPP) is an RFS in which the number of objects is Poisson distributed, and conditional on that cardinality, the object states are i.i.d. according to a density obtained by normalizing an intensity function.
In particular, we first consider the general Poisson RFS below.
\begin{defn}
    Let $\scriptX\subset\R^{d}$ be the state space and let $\Upsilon$ be the sample space.
    An RFS $\rv X : \Upsilon \rightarrow \scriptF(\scriptX)$ is a Poisson RFS with parameters $(\Lambda, \P)$, where $\Lambda \geq 0$ is an intensity parameter and $\P$ is a probability distribution on $\scriptX$, if $\rv X$ satisfies the following properties:
    \begin{itemize}
        \item The cardinality follows the discrete Poisson distribution $|\rv X| \sim \mathrm{Poisson}(\Lambda)$.
        \item Conditional on a cardinality $N = n$, the individual elements are distributed according to $\rv x_1, \ldots, \rv x_n \overset{\text{i.i.d.}}{\sim} \P$.
    \end{itemize}
\end{defn}
Thus, the number of elements in the set is governed by the rate parameter intensity $\Lambda$, from which it follows that $\E[|\rv X|] = \Lambda$; and each individual element is distributed according to $\P$.
The PPP specializes the above definition to a structure where $\Lambda$ and $\P$ are fundamentally related by a \textit{rate parameter} as shown below.
\begin{defn}
    Let $\lambda: \scriptX \rightarrow \R_{+}$ denote the Poisson rate parameter.
    If $\Lambda\triangleq\int_{\scriptX}\lambda(x)\,\d x>0$, a Poisson Point Process is a Poisson RFS with intensity and conditional distributions given by
    \begin{equation}
        \Lambda = \int_{\scriptX} \lambda(x)\, \d x, \quad \rho(x) = \frac{\lambda(x)}{\Lambda}.
    \end{equation}
\end{defn}
As a shorthand, we write $\rv X \sim \mathrm{PPP}(\lambda)$; if $\Lambda=0$, then $\rv X=\varnothing$ almost surely.
It follows that, given a subset $\scriptA \subseteq \scriptX$, the number of targets in that domain is given by $|\rv X \cap \scriptA| \sim \mathrm{Poisson}\big(\Lambda(\scriptA)\big)$, where $\Lambda(\scriptA) \triangleq \int_{\scriptA} \lambda(x)\,\d x$.
The density of a PPP is given by
\begin{equation}
    \rho(X) = \exp(-\Lambda) \prod_{x \in X} \lambda(x).
\end{equation}
In the sequel, the PPP will be used to represent the distribution of \textit{unknown} targets in the domain and will be updated at every planning stage according to the previous stage's aerial sensing of the domain.

In summary, Table~\ref{tab:rfs-comparison} illustrates the three RFS objects used in this work in terms of their number, or cardinality, distribution, and individual (conditional) component distributions.

\begin{table}[t]
    \centering
    \caption{Common random finite set models on a state space $\scriptX$.}
    \label{tab:rfs-comparison}
    \renewcommand{\arraystretch}{1.4}
    \setlength{\tabcolsep}{10pt}
    \begin{tabular}{lll}
        \toprule
        \textbf{RFS Model}
        & \textbf{Cardinality Distribution}
        & \textbf{Spatial Distribution} \\
        \midrule

        Bernoulli
        &
        $\rv N \sim \operatorname{Bernoulli}(p)$
        &
        $\rv x \mid \rv N=1 \sim \rho(x)$
        \\

        Multi-Bernoulli
        &
        $\rv N = \displaystyle\sum_{i=1}^{M} \rv E_i,
        \ \rv E_i \sim \operatorname{Bernoulli}(r_i)$
        &
        $\rv x_i \mid \rv E_i=1 \sim \rho_i(x)$
        \\

        Poisson
        &
        $\rv N \sim \operatorname{Poisson}(\Lambda)$
        &
        $\rv x_1,\ldots,\rv x_N \mid \rv N
        \overset{\mathrm{i.i.d.}}{\sim}
        \rho(x)=\dfrac{\lambda(x)}{\Lambda}$
        \\

        \bottomrule
    \end{tabular}
\end{table}

\subsection{Spectral Ergodic Control}~\label{subsec:EC}
Ergodic control refers to the formulation and solution of optimization problems that drive a dynamical system towards ergodicity.
In the context of multi-agent collaborative search, there will be one shared base distribution over the domain which the UAVs will jointly cover.
Asymptotically, the fleet should spend more time in regions of higher (prescribed) probability density than in regions of lower density.
Having a perfect balance between the time-varying statistics of the trajectory distribution, and the underlying spatial distribution of which to search, is in essence an ergodic system.
Below we formalize this intuition.
\begin{defn}[Empirical Trajectory Distribution]
    Let the state space be $\scriptX\subset\R^n$, let $\Omega \subset \R^{d}$ be compact, and let $g:\scriptX\to\Omega$ be the sensing-location map.
    For an admissible input set $\scriptU\subset\R^m$, a dynamical system defined by the trajectory manifold
    \begin{equation}
        \scriptT = \left\{(x,u)\in W^{1,p}([0,T];\R^n)\times L^p([0,T];\R^m):
        \begin{array}{l}
        x(t)\in\scriptX,\ u(t)\in\scriptU\ \text{a.e.},\\[-1mm]
        \dot x(t)=f\big(x(t),u(t)\big)\ \text{a.e. on }[0,T]
        \end{array}\right\}
    \end{equation}
    admits the time-averaged, empirical occupation (trajectory) measure
    \begin{equation}
        \mu_{T}^{\scriptT} = \frac{1}{T}\int_{0}^{T} \delta_{g(x(\tau))} \, \d\tau \in \scriptP(\Omega). \label{eq:occupation-measure}
    \end{equation}
    For multiple independent dynamical systems $\scriptT_i$ over $i = 1,\ldots, M$, the occupation measure is 
    \begin{equation}
        \mu_{T}^{\scriptT} = \frac{1}{MT}\sum_{i=1}^{M} \int_{0}^{T} \delta_{g(x_i(\tau))} \ \d\tau,
    \end{equation}
    where $\scriptT \triangleq \scriptT_1 \times \cdots \times \scriptT_{M}$ is the multi-agent trajectory manifold.
\end{defn}
\begin{defn}
    Let $\Omega \subset \R^{d}$ be compact.
    Let $\hat\mu \in \scriptP(\Omega)$ define a spatial measure on $\Omega$.
    For $M > 0$ trajectories $x_i : [0,T] \rightarrow \scriptX$, let $\mu_{T}^{\scriptT}$ denote the joint occupation measure on $\Omega$.
    Then, the multi-agent dynamical system is ergodic with respect to $\hat\mu$ if the occupation measure converges weakly to $\hat\mu$, that is,
    \begin{equation}
        \mu_{T}^{\scriptT} \rightharpoonup \hat\mu \quad \mathrm{as} \quad T \rightarrow\infty.
    \end{equation}
\end{defn}
To quantify a ``distance" to ergodicity in the form of an objective function to minimize, it is natural to consider ergodic metrics of the form 
\begin{equation}
    \scriptE_{\hat\mu} = \left\| \psi[\mu_T^{\scriptT}] - \psi[\hat\mu] \right\|_{\psi},
\end{equation}
where $\psi$ represents a measure transform, and $\|\cdot\|_{\psi}$ an associated norm in the transformed space.
In this work, we utilize the \textit{spectral}, or Fourier, ergodic metric; for other formulations and their benefits and drawbacks, we refer the interested reader to~\citep{mathew2011metrics,miller2013trajectory,abraham2018measurement}.
The spectral ergodic metric takes the squared norm between the trajectory measure and the spatial measure in the dual Sobolev space $H^{-s}\equiv W^{-s,2}$ of fractional order $s=(d+1)/2$, and is given by
\begin{equation}
    \scriptE_{\hat\mu} = \|\mu_T^{\scriptT} - \hat\mu\|_{H^{-s}}^2 = \sum_{k \in \N_0^{d}} \Lambda_k (c_k - \phi_k)^2,
    \quad \Lambda_k \triangleq (1 + \lambda_k)^{-s},
    \quad \lambda_k\triangleq\sum_{i=1}^d\left(\frac{k_i\pi}{L_i}\right)^2,
\end{equation}
where
\begin{equation}
    c_k = \E_{\mu_{T}^{\scriptT}}[F_k] = \frac{1}{MT}\sum_{i=1}^{M}\int_{0}^{T} F_k\Big(g\big(x_i(\tau)\big)\Big)\, \d\tau, \quad \phi_k = \E_{\hat\mu}[F_k] = \int_{\Omega} F_k(\xi)\, \hat\mu(\d\xi).
\end{equation}
are the Fourier coefficients corresponding to $\mu_{T}^{\scriptT}$ and $\hat\mu$, respectively, and $F_k$ are the corresponding Fourier basis functions on the domain $\Omega$.
Over general manifolds $\Omega \subset \R^{d}$, it is a non-trivial task to determine the basis functions $F_k$, which are generally eigenfunctions of the negative Laplace--Beltrami operator~\citep{chavel1984eigenvalues} and satisfy $-\Delta F_k=\lambda_k F_k$ with $\lambda_k\geq0$.
In our context, we assume a rectangular domain of the form $\Omega = [0, L_1] \times \cdots \times [0, L_d]$, where each $L_i > 0$.
In this Euclidean space, the eigenfunctions correspond to the standard Laplacian, and the Fourier basis functions are exactly represented as the orthonormal set
\begin{equation}\label{eq:fourier-basis}
    F_k(x) = \frac{1}{h_k} \prod_{i=1}^{d} \cos\left(\frac{k_i \pi x_i}{L_i} \right), \quad k\in\N_0^d, \quad h_k^2 = \prod_{i=1}^{d}
    \begin{cases}
        L_i, &k_i = 0, \\
        L_i/2, &k_i > 0
    \end{cases}.
\end{equation}
Intuitively, the space $H^{-s}$ represents generalized spatial distributions whose high-frequency Fourier content is increasingly discounted.
As evidenced by the form of $\Lambda_k$, higher frequencies have less associated weight.
In turn, minimizing the spectral ergodic metric will prioritize driving the large-scale, low-frequency structure of the two distributions together, which results in smoother and more global coverage.

\subsection{Sequential Convex Programming}~\label{subsec:SCvx*}
Successive convexification, or sequential convex programming (SCP), refers to an optimization method to find local minima of nonlinear and non-convex programs, by iteratively linearizing the problem around a given reference, solving the linearized (convex) sub-problem, using that solution as the next reference, and iterating until convergence.
SCP is a rich and well-studied topic and has been successfully applied to numerous problems in guidance and control.
For a good overview, we refer the reader to~\citep{mao2016successive,oguri2023scvxstar}.
In this particular work, we use a variant of SCP, called SCvx*, to solve the nonlinear UAV and UGV optimization problems. Under the regularity, penalty-update, and subproblem-accuracy assumptions stated in~\citep{oguri2023scvxstar}, SCvx* provides feasibility and local convergence guarantees; whether those assumptions hold for each specialized team problem must be verified separately, and a finite numerical run yields only a candidate solution subject to the chosen stopping tolerances.
Below we outline the high-level elements of the algorithm, and defer the full treatment of SCvx* to~\citep{oguri2023scvxstar}.
Consider the general nonlinear program
\begin{equation}
    \begin{aligned}
        &\min_{z} &&J(z), \\ 
        &\ \, \mathrm{s.t.} &&g_i(z) = 0, \quad i = 1,\ldots, p, \\ 
        &&&h_j(z) \leq 0, \quad j = 1,\ldots,q, \\ 
        &&&g_{\mathrm{aff}}(z) = 0, \quad h_{\mathrm{cvx}}(z) \leq 0.
    \end{aligned}
\end{equation}
where $J \in C^1$ is assumed convex, $g_{\mathrm{aff}}$ and $g_i$ denote the affine and non-convex equality constraints, and $h_{\mathrm{cvx}}$ and $h_j$ denote the convex and non-convex inequality constraints.
The Lagrangian associated with the optimization problem (above) is
\begin{equation}
    \mathcal{L}(z, \lambda, \mu) = J(z) + \lambda^\intercal g(z) + \mu^\intercal h(z).
\end{equation}
Furthermore, the \textit{augmented} Lagrangian additionally adds the terms
\begin{equation}
    \mathcal{L}_{w}(z,\lambda,\mu) = J(z)+\lambda^\intercal g(z)+\frac{w}{2}\|g(z)\|^2+\mu^\intercal[h(z)]_+ + \frac{w}{2}\|[h(z)]_+\|^2,
\end{equation}
{Here $\mu\geq0$.}
By defining the total \textit{penalty} function as
\begin{equation}
    P(\cdot) = \lambda^\intercal g + \frac{w}{2}\|g\|^2 + \mu^\intercal \max\{0,h\} + \frac{w}{2}\|\max\{0,h\}\|^2,    
\end{equation}
the (non-convex) penalty problem that SCvx* repeatedly solves is given by
\begin{equation}
    \begin{aligned}
        &\min_{z} &&J(z) + P\big(g(z), h(z); w, \lambda, \mu), \\ 
        &\ \, \mathrm{s.t.} &&g_{\mathrm{aff}}(z) = 0, \quad h_{\mathrm{cvx}}(z) \leq 0.
    \end{aligned}
\end{equation}
Each iteration linearizes the non-convex equality and inequality constraints around a reference $\hat{z}$ as
\begin{equation*}
    \begin{aligned}
        \tilde{g}(z) &= g(\hat{z}) + \nabla g(\hat{z})^\intercal (z - \hat{z}), \\ 
        \tilde{h}(z) &= h(\hat{z}) + \nabla h(\hat{z})^\intercal (z - \hat{z}).
    \end{aligned}
\end{equation*}
One common defect of SCP methods is that their convex sub-problems suffer from artificial \textit{infeasibility} and \textit{unboundedness} due to the linearizations (and in the latter case, constraint relaxations).
To this end, instead of imposing $\tilde{g} = 0$ and $\tilde{h} \leq 0$, we relax the convex constraints to $\tilde{g}(z) = \xi$ and $\tilde{h}(z) \leq \zeta$ for some slack variables, $\xi \in \R^{p}, \zeta \in \R_{+}^{q}$, and penalize these in the objective function.
Furthermore, we bound the deviation of the decision variable $z$ from its reference through the trust region $\|z - \hat{z}\|_{\infty} \leq \Delta$, for some trust region size $\Delta > 0$.
In total, at iteration $\ell$, the convex sub-problem that is solved is given by
\begin{equation}
    \begin{aligned}
        &\min_{z,\xi,\zeta} &&L(z,\xi,\zeta) \triangleq J(z) + (\lambda^\ell)^\intercal\xi+\frac{w^\ell}{2}\|\xi\|^2+(\mu^\ell)^\intercal\zeta+\frac{w^\ell}{2}\|\zeta\|^2, \\ 
        &\ \, \mathrm{s.t.} &&g_{\mathrm{aff}}(z) = 0, \quad h_{\mathrm{cvx}}(z) \leq 0, \\[-1mm]
        &&&\tilde g(z)=\xi, \quad \tilde h(z)\leq\zeta, \quad \zeta\geq0, \quad \|z-\hat z\|_\infty\leq\Delta.
    \end{aligned}
\end{equation}
Solutions are accepted according to the ratio between their nonlinear-to-linear cost improvements, and the Lagrange multipliers $(\lambda,\mu)$, trust region radius $\Delta$, and penalty parameter $w$ are updated according to SCvx*.
For details on how these parameters are updated and the resulting convergence guarantees, please see~\citep{oguri2023scvxstar}; also see~\citep{mao2016successive,elango2024ctcs} for related SCvx methods.

\subsection{Continuous-Time Pairwise Separation}\label{subsec:ctcs-separation}
Consider a pair of agents with continuous position trajectories $s_i(t)$ and $s_j(t)$ and required separation $d_{ij}>0$.
The geometric path constraint and its equivalent normalized residual are
\begin{equation}\label{eq:pair-separation}
    \|s_i(t)-s_j(t)\|_2\geq d_{ij}
    \quad\Longleftrightarrow\quad
    g_{ij}(t)\triangleq 1-\frac{\|s_i(t)-s_j(t)\|_2^2}{d_{ij}^2}\leq0,
    \qquad t\in[0,T].
\end{equation}
Following the isoperimetric reformulation of~\citet{elango2024ctcs}, introduce the scalar violation accumulator
\begin{equation}\label{eq:pair-violation-accumulator}
    \dot y_{ij}(t)=[g_{ij}(t)]_+^2,
    \qquad y_{ij}(0)=0,
    \qquad y_{ij}(T)=0,
\end{equation}
Integrating the auxiliary dynamics and using both endpoint conditions gives
\begin{equation*}
    0=y_{ij}(T)-y_{ij}(0)=\int_0^T [g_{ij}(t)]_+^2\,\d t.
\end{equation*}
The integrand $[g_{ij}(t)]_+^2$ is nonnegative and continuous; if it were positive at any time, continuity would make its integral positive over a neighboring interval, contradicting the zero integral.
The integral therefore vanishes if and only if $g_{ij}(t)\leq0$ throughout the horizon.
Thus, it follows that \eqref{eq:pair-violation-accumulator} is equivalent to \eqref{eq:pair-separation}, and allows for continuous-time constraint satisfaction through enforcement of a corresponding (boundary-constrained) ODE.

\subsection{Decentralized Consensus-ADMM}~\label{subsec:C-ADMM}
In this work, we consider trajectory optimization problems over \textit{teams}, or fleets, of agents within a particular class (e.g., UAV or UGV).
The resulting discretized mathematical programs will necessarily involve the joint decision variables of all agents, say $z \triangleq [z_1^\intercal, \ldots, z_M^\intercal]^\intercal$ for $M$ total agents in the fleet. 
Furthermore, there may be equality, or inequality, constraints linking the decision variables of the agents together; for example, as in the context of inter-agent collision avoidance, or in the formulation of the ergodic metric.
As such, it is not possible to independently solve for each agent's decision vector, and hence one must either choose an architecture which uses a centralized node to solve the entire program, or otherwise distribute the optimization problem so that each agent solves its own decentralized \textit{local} problem, and communicates with its neighbors in order to reach \textit{consensus} over successive iterations.
We will explore this latter method presently, and use the consensus ADMM algorithm for efficient and convergent distributed optimization at each outer SCvx* iteration.
Below we give an overview of the method, and defer readers to a more thorough treatment in~\citep{boyd2011admm}.
\paragraph{General C-ADMM}
The Alternating Direction Method of Multipliers (ADMM) is a first-order optimization method for solving linearly constrained problems of the form
\begin{equation}\label{eq:admm-problem}
    \min_{z,w}\ f(z) + g(w) \quad \mathrm{s.t.} \quad Az + Bw = c,
\end{equation}
where $z$ and $w$ are blocks of decision variables, often introduced so that the terms $f$ and $g$ can be treated separately.
ADMM solves \eqref{eq:admm-problem} by seeking a saddle point of the augmented Lagrangian (AL)
\begin{equation}\label{eq:admm-augmented-lagrangian}
    \mathcal{L}_{\rho}(z,w,\lambda) = f(z) + g(w) + \lambda^\intercal (Az + Bw - c) + \frac{\rho}{2}\|Az + Bw - c\|_{2}^{2},
\end{equation}
where $\lambda$ has the same dimension as the residual $Az+Bw-c$, and $\rho > 0$ is a user-chosen penalty parameter.
In the consensus form considered here, the constraint is simply $z - w = 0$, which corresponds to the special case $A = I, B = -I$, and $c = 0$ in \eqref{eq:admm-problem}.
Defining the scaled dual variable $u \triangleq \rho^{-1} \lambda$, the AL becomes
\begin{equation}\label{eq:admm-scaled-lagrangian}
    \scriptL_{\rho}(z,w,u) = f(z) + g(w) + \frac{\rho}{2}\|z - w + u\|_{2}^{2} - \frac{\rho}{2}\|u\|_{2}^{2}.
\end{equation}
This scaled form yields particularly simple ADMM iterations, since the quadratic penalty acts directly on the disagreement between the local variable $z$ and the consensus/copy variable $w$~\citep{boyd2011admm}.

\begin{lrbox}{\admmAlgorithmBox}
\begin{minipage}{0.45\textwidth}
\begin{algorithm}[H]
\small
\setlength{\abovedisplayskip}{4pt}
\setlength{\belowdisplayskip}{4pt}
\setlength{\abovedisplayshortskip}{2pt}
\setlength{\belowdisplayshortskip}{2pt}
\caption{{Consensus ADMM}}
\label{alg:admm}
\begin{algorithmic}[1]
\STATE \textbf{Input:} \(\epsilon_{\mathrm{p}},\, \epsilon_{\mathrm{d}},\, \rho,\, k^{\max},\, z^{(0)},\, w^{(0)},\, u^{(0)}\)
\STATE \textbf{Output:} \(z^\star\)
\STATE \(k \gets 0,\quad z^\star\gets z^{(0)}\)
\WHILE{$k < k^{\max}$}
    \STATE Primal update:
    \begin{equation}~\label{eq:primal-update}
        \begin{aligned}
            &z^{(k+1)} \gets \arg\min_{z}\; \Bigl\{f(z)\\
            &\quad\qquad {}+ \frac{\rho}{2}\|z - w^{(k)} + u^{(k)}\|_{2}^{2}\Bigr\}
        \end{aligned}
    \end{equation}
    \STATE Consensus update:
    \begin{equation}~\label{eq:consensus-update}
        \begin{aligned}
            &w^{(k+1)} \gets \arg\min_{w}\; \Bigl\{g(w)\\
            &\quad\quad {}+ \frac{\rho}{2}\|z^{(k+1)} - w + u^{(k)}\|_{2}^{2}\Bigr\}
        \end{aligned}
    \end{equation}
    \STATE Dual update:
    \begin{equation}~\label{eq:dual-update}
        u^{(k+1)} \gets u^{(k)} + z^{(k+1)} - w^{(k+1)}
    \end{equation}
    \STATE Compute primal and dual residuals:
    \begin{equation}~\label{residual-update}
        \begin{aligned}
            r^{(k+1)} &\gets z^{(k+1)} - w^{(k+1)}, \\
            s^{(k+1)} &\gets \rho\bigl(w^{(k+1)} - w^{(k)}\bigr).
        \end{aligned}
    \end{equation}
    \IF{$\|r^{(k+1)}\|_{2} \le \epsilon_{\mathrm{p}}$ \textbf{and} $\|s^{(k+1)}\|_{2} \le \epsilon_{\mathrm{d}}$}
        \STATE \(z^\star \gets z^{(k+1)}\)
        \STATE \textbf{break}
    \ENDIF
    \STATE \(z^\star\gets z^{(k+1)}\)
    \STATE \(k \gets k + 1\)
\ENDWHILE
\end{algorithmic}
\end{algorithm}
\end{minipage}
\end{lrbox}
\Needspace{\dimexpr\ht\admmAlgorithmBox+\dp\admmAlgorithmBox+12pt\relax}
\begin{wrapfigure}{r}{0.45\textwidth}
\setlength{\intextsep}{6pt}
\usebox{\admmAlgorithmBox}
\end{wrapfigure}
As shown in Algorithm~\ref{alg:admm}, ADMM proceeds by alternating minimization of the AL \eqref{eq:admm-scaled-lagrangian} with respect to $z$ and $w$, followed by an ascent step on the dual variable.
In the consensus setting, the \textit{primal residual}
\begin{equation}
    r^{(k+1)} = z^{(k+1)} - w^{(k+1)},
\end{equation}
measures violation of the agreement constraint $z = w$, while the \textit{dual residual}
\begin{equation}
    s^{(k+1)} = \rho\big(w^{(k+1)} - w^{(k)}\big),
\end{equation}
measures the change in the consensus variable between successive iterations.
Thus, $r^{(k+1)}$ quantifies primal infeasibility, whereas $s^{(k+1)}$ quantifies whether the dual iterates have stabilized.

For convex programs, ADMM enjoys strong convergence guarantees under mild assumptions.
In particular, if $f$ and $g$ are closed, proper, and convex, and if a saddle point exists for \eqref{eq:admm-problem}, then the ADMM iterates generated by Algorithm~\ref{alg:admm} satisfy $r^{(k)} \rightarrow 0$ and $s^{(k)} \rightarrow 0$ as $k \rightarrow\infty$~\citep{boyd2011admm}.

\paragraph{Graph C-ADMM}
Now we specialize the preceding C-ADMM algorithm to the context of a distributed multi-agent convex optimization.
To this end, let us fix a connected, undirected communication graph $\scriptG = (\scriptV, \scriptE)$, where $\scriptV = \{1,\ldots,M\}$ denotes the set of agents and $\scriptE \subseteq \{\{i,j\}\in\scriptV : i \neq j\}$ denotes the set of undirected communication links.
The neighbor set of agent $i\in\scriptV$ is defined%
\footnote{The assumption of an undirected graph implies that $j \in \scriptN_i \iff i \in \scriptN_j$.}
as $\scriptN_i \triangleq \{j\in\scriptV\setminus\{i\} : \{i,j\} \in \scriptE\}$.
Consider the generic coupled (and centralized) convex program of the form
\begin{equation}~\label{eq:graph-optimization}
    \begin{aligned}
        \min_{\forall i: \theta_i}\quad &\sum_{i=1}^{M}\, J_i(\theta_i), \\
        \mathrm{s.t.}\quad &\theta_i\in\Theta_i, \\
        &\hat{h}_{ij}(\theta_i,\theta_j) = 0, \\
        &\forall i = 1,\ldots,M,\quad \forall j\in\scriptN_i.
    \end{aligned}
\end{equation}
where $J_i(\cdot)$ is a convex function, $\Theta_i$ is a convex set, for all $i = 1,\ldots, M$, and $\hat{h}_{ij}$ is affine, for all $\{i,j\}\in\scriptE$.
The main difficulty in distributing the program \eqref{eq:graph-optimization} is in the constraints $\hat{h}_{ij}$ coupling agent $i$'s decision vector to agent $j$'s decision vector.
To circumvent this, we introduce the consensus variables $w_i$ and $w_{ij}$ to obtain the equivalent program
\begin{equation}
    \begin{aligned}
        &\min_{\forall i,\forall j \in\scriptN_i: \theta_i, w_i, w_{ij}} &&\sum_{i=1}^{M}\, J_i(\theta_i), \\ 
        &\quad\quad\quad \mathrm{s.t.} &&\theta_i\in\Theta_i, \quad i = 1,\ldots, M, \\ 
        &&&\hat{h}_{ij}(w_i,w_{ij}) = 0, \quad \forall j \in \scriptN_i,\, \forall i = 1,\ldots, M, \\
        &&&\theta_i = w_i, \quad \theta_j = w_{ij}, \quad \forall j \in \scriptN_i,\, \forall i = 1,\ldots, M.
    \end{aligned}
\end{equation}
The variables $w_i$ can be interpreted as a copy of the local decision variables corresponding to agent $i$ and the variables $w_{ij}$ as agent $i$'s \textit{belief}, or guess, of what agent $j$'s local decision variables are. 
By stacking the variables $\{\theta_i\}_{i=1}^{M}$ into the earlier decision vector $z$ and the local and incident copies $\{w_i,w_{ji}\}_{i=1,j\in\scriptN_i}^{M}$ into $w$, the consensus equations take the general form $Dz-w=0$, where $D$ is the appropriate duplication/incidence operator. This gives the form needed for C-ADMM, with the added benefit that the primal and dual updates can be solved in parallel among the fleet.
To see this, construct the AL over all nodes $\scriptV$ in the graph as 
\begin{equation}
    \scriptL_{\rho} = \sum_{i=1}^{M} \Big(\underbrace{J_i(\theta_i) + \eta_i^\intercal (\theta_i - w_i) + \frac{\rho}{2}\|\theta_i - w_i\|_{2}^{2}}_{\scriptL_{\rho,i}(\theta_i,w_i,\eta_i)} + \sum_{j\in\scriptN_i}\big(\underbrace{\eta_{ij}^\intercal (\theta_j - w_{ij}) + \frac{\rho}{2}\|\theta_j - w_{ij}\|_{2}^{2}}_{\scriptL_{\rho,ij}(\theta_j, w_{ij}, \eta_{ij})}\big)\Big), \label{eq:graph-AL}
\end{equation}
where we identify $\scriptL_{\rho,i}$ as the per-agent contribution to the AL and $\scriptL_{\rho,ij}$ as the neighboring contribution to the AL.
The resulting C-ADMM algorithm follows the same flow as in Algorithm~\ref{alg:admm}, so it is not repeated for brevity.
Instead, we walk through the high-level sequential updates: first, and in parallel, the agents solve the primal minimization problem
\begin{equation}
    \theta_i^{k+1} = \mathrm{arg}\min_{\theta_i}\, \scriptL_{\rho,i}(\theta_i, w_i^k,\eta_i^k) + \sum_{j\in\scriptN_i}\scriptL_{\rho,ji}(\theta_i, w_{ji}^k,\eta_{ji}^k) \quad \mathrm{s.t.} \quad \theta_i \in \Theta_i.
\end{equation}
After this step, agent $i$ communicates its decision variables $\theta_i^\star = \theta_i^{(k+1)}$ to its neighbors $\scriptN_i$, and the neighbors communicate their decision variables $\{\theta_j^\star\}_{j\in\scriptN_i}$ back to agent $i$.
The next step is for the agents to solve, in parallel, the consensus minimization problem
\begin{align}
    \begin{bmatrix}
        w_i^{k+1} \\ 
        \{w_{ij}^{k+1}\}_{j\in\scriptN_i}
    \end{bmatrix} = \ &\underset{w_i,\{w_{ij}\}_{j\in\scriptN_i}}{\operatorname{argmin}}\, &&\scriptL_{\rho,i}(\theta_i^{k+1}, w_i, \eta_i^k) + \sum_{j\in\scriptN_i} \scriptL_{\rho,ij}(\theta_j^{k+1}, w_{ij}, \eta_{ij}^k) \nonumber \\ 
    &\quad \ \mathrm{s.t.} &&\hat h_{ij}(w_i, w_{ij}) = 0, \quad \forall j \in \scriptN_i.
\end{align}
Finally, the agents perform the dual ascent step, which is given by
\begin{equation}
    \begin{aligned}
        \eta_i^{k+1} &= \eta_i^k + \rho(\theta_i^{k+1} - w_i^{k+1}), \\ 
        \eta_{ij}^{k+1} &= \eta_{ij}^{k} + \rho(\theta_j^{k+1} - w_{ij}^{k+1}), \quad \forall j \in \scriptN_i.
    \end{aligned}
\end{equation}
At the end of this step, the agents perform one last round of communication, where agent $i$ communicates to its neighbors its consensus and dual variables $(w_{ij}, \eta_{ij})$ and receives from its neighbors the variables $\{w_{ji}, \eta_{ji}\}_{j\in\scriptN_i}$.
This completes one full decentralized graph C-ADMM iteration.
The process is then repeated until convergence.

\section{UAV Ergodic Search}~\label{sec:UAV-traj-opt}
At each planning instant, the aerial fleet is tasked with covering a prescribed base distribution and performing vision-based sensing to detect new targets and localize existing targets.
To rigorously formulate and solve this collaborative search and sensing problem, we proceed in three steps.
First, we present a probabilistic model for the vision-based sensors responsible for target detections, and the corresponding RFS-based measurement models used for data collection and Bayesian inference.
Then, we incorporate the resulting sensor footprints into the ergodic metric and formulate the joint centralized trajectory optimization problem, taking into account inter-agent collision avoidance.
Finally, we present a fully decentralized optimization method based on SCP and C-ADMM to seek candidate local solutions.
Figure~\ref{fig:collaborative-autonomy-architecture} summarizes the complete sequential architecture and the information exchanged between the UAV and UGV fleets.
\begin{figure}[!htb]
    \centering
    \includegraphics[width=\linewidth]{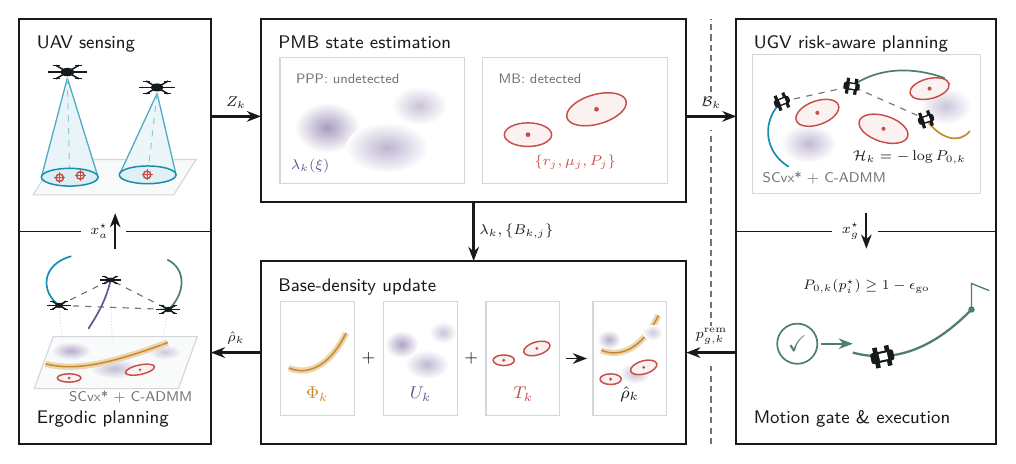}
    \caption{High-level UAV--UGV collaborative autonomy architecture. UAV measurements update the PMB posterior, which informs risk-aware UGV planning, while the committed UGV routes and posterior components update the UAV search density; the PMB void probability gates UGV execution. \josh{}}
    \label{fig:collaborative-autonomy-architecture}
\end{figure}

\subsection{Sensor and Observation Modeling}\label{subsec:sensor-model}
For each UAV $i = 1,\ldots, N_a$, we define its state by $x_i = [r_i^\intercal, v_i^\intercal, q_i^\intercal]^\intercal\in\R^{9}$ and control input by $u_i \triangleq [T_i, \omega_i^\intercal]^\intercal \in \R^{4}$.
Here $r_i \in \R^{3}$ is the position, $v_i\in\R^{3}$ is the inertial velocity, $q_i = [\phi_i, \theta_i, \psi_i]^\intercal\in\R^{3}$ is the Euler-angle parameterized (XYZ) attitude, $T_i \geq 0$ is the total thrust, and $\omega_i \in \R^{3}$ is the body angular rate.
In this setup, we assume that the total thrust and angular commands can be sufficiently tracked by a lower-level flight controller.
Denoting by $R(q_i)$ the body-to-world frame rotation, by $S(q_i)$ the matrix transforming body angular rate to Euler angle rates, and $e_3 = [0,0,1]^\intercal$, the quadrotor dynamics are given by
\begin{equation}~\label{eq:uav-dynamics}
    \begin{aligned}
        \dot{r}_i &= v_i, \\
        \dot{v}_i &= -g e_3 + \frac{T_i}{m_i} R(q_i) e_3, \\ 
        \dot{q}_i &= S(q_i)\omega_i.
    \end{aligned}
\end{equation}
Let $f_i : \R^{n}\times\R^{m}\rightarrow\R^{n}$ denote the UAV dynamics in \eqref{eq:uav-dynamics} and define $(\scriptX_i,\scriptU_i) \subset \R^{n}\times\R^{m}$ as the admissible convex state and input sets chosen to enforce actuator and flight-envelope limits.

\begin{wrapfigure}{r}{0.42\textwidth}
    \centering
    \includegraphics[width=\linewidth]{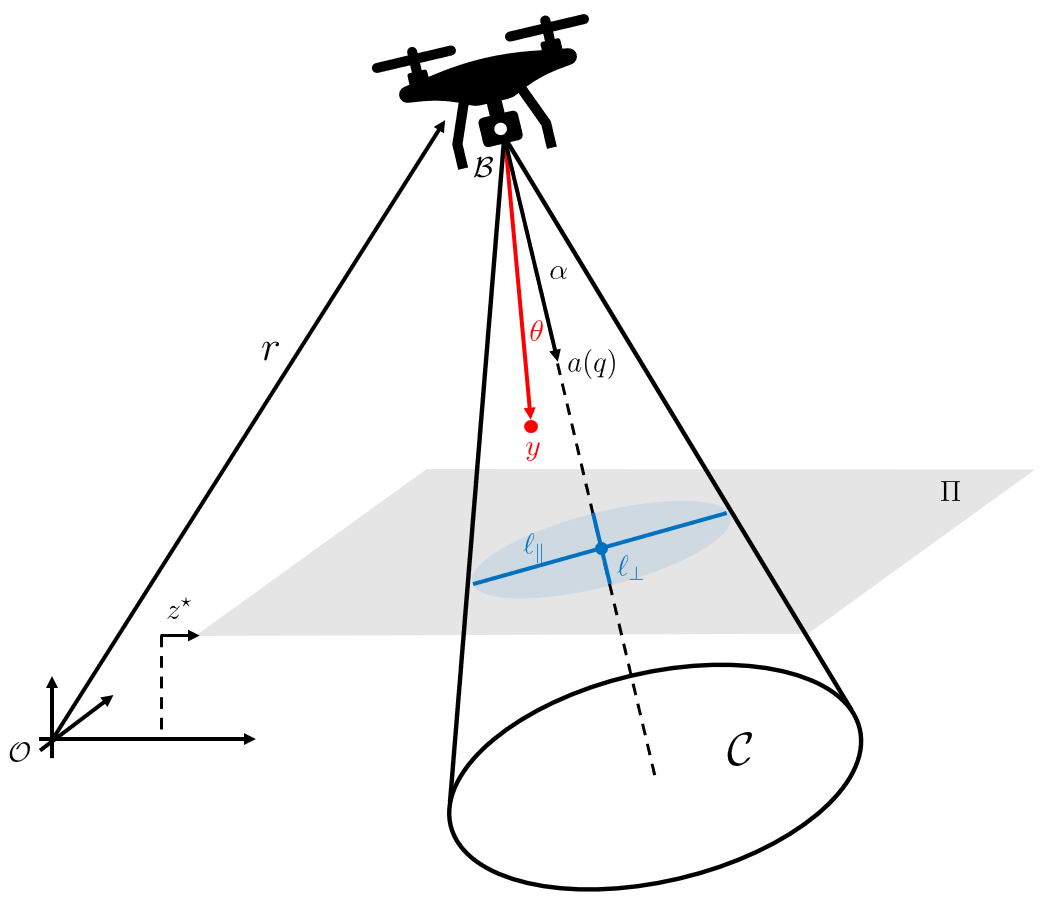}
    \makeatletter
    \long\def\@makecaption#1#2{\vskip 10pt {\raggedright #1: #2\par}}
    \makeatother
    \caption{Geometry of quadrotor body-mounted optical sensor field of view, and its intersection with the target plane. \josh{}}
    \label{fig:drone-fov}
\end{wrapfigure}
When performing collaborative sensing and target detection, we assume the targets are distributed on a two-dimensional domain $\Omega = [0,L_1] \times [0,L_2] \subset \R^{2}$ for some $L_1,L_2 > 0$, and are located on the plane $z = 0$ for simplicity.
Furthermore, we assume optical-based measurements in the form of a body-fixed camera looking along $-e_3$.
Let $C_b^w(q_i)\triangleq R(q_i)$ map body-frame coordinates to world-frame coordinates, and let the fixed camera-to-body rotation be $C_c^b=R_x(\pi)$.
Then $C_c^w(q_i)=C_b^w(q_i)C_c^b$ and $o_i\triangleq C_c^w(q_i)e_3=-R(q_i)e_3$ is the associated unit optical axis of UAV agent $i$, expressed in world coordinates; let $\alpha_i$ denote its half-angle.

Write $p_i \triangleq [r_{x,i}, r_{y,i}]^\intercal$ for the UAV's planar position and $h_i \triangleq r_{z,i} > 0$ for its altitude.
Similarly, decompose the sensor optical axis as $b_i \triangleq [o_{x,i}, o_{y,i}]^\intercal$ so that $o_i = [b_i^\intercal, o_{z,i}]^\intercal, \, o_{z,i} < 0$, and $\|b_i\|^2 + o_{z,i}^2 = 1$.
At a given position and attitude $(r_i, q_i)$, the pose-aware sensor Field-of-View (FoV) cone is defined as the set
\begin{equation}
    \scriptC_i \triangleq \left\{y \in \R^3\setminus\{r_i\} : \frac{o_i^\intercal (y - r_i)}{\|y - r_i\|} \geq \cos\alpha_i, \quad o_i^\intercal (y - r_i) > 0\right\}.
\end{equation}
Geometrically, when the sensor is nadir pointed, the FoV cone intersects the target plane to form a circle. 
In general, given some attitude $q_i$, the sensor FoV cone intersects the target plane to form an elliptical cross-section; see Figure~\ref{fig:drone-fov} for a geometric illustration.
The next result formalizes this notion.
\begin{prop}~\label{prop:FoV-ellipse}
    Let $o\in\R^{3}$ denote the UAVs sensor optical axis with half-angle $\alpha\in(0,\pi/2)$, and let $r\in\R^{3}$ denote its current position.
    Assume $o_{z}<-\sin\alpha$, or equivalently, that the downward-facing cone has a bounded intersection with the target plane, and define $D \triangleq o_{z}^2 - \sin^2\alpha>0$.
    The target-plane intersection traced out by the FoV cone is an ellipse; its restriction to the search domain is the footprint
    \begin{equation}
        \scriptE_{\mathrm{f}}(x) = \left\{\xi \in \Omega : \frac{\big(u_{\mathrm f}^\intercal (\xi - p_{\mathrm{ell}})\big)^2}{\ell_{\parallel}^2} + \frac{\big(v_{\mathrm f}^\intercal (\xi - p_{\mathrm{ell}})\big)^2}{\ell_{\perp}^2} \leq 1 \right\}, \label{eq:fov-footprint}
    \end{equation}
    where 
    \begin{equation*}
        u_{\mathrm f} \triangleq \frac{b}{\|b\|}, \quad v_{\mathrm f} \triangleq 
        \begin{bmatrix}
            0 & -1 \\
            1 & 0
        \end{bmatrix} u_{\mathrm f},
    \end{equation*}
    denote the principal sensor directions, and 
    \begin{equation*}
        D=o_{z}^2-\sin^2\alpha, \quad p_{\mathrm{ell}} = p - \frac{o_{z} h}{D}b, \quad \ell_{\parallel} = \frac{h\sin(2\alpha)}{2D}, \quad \ell_{\perp} = \frac{h\sin\alpha}{\sqrt{D}}.
    \end{equation*}
    At exact nadir ($b=0$), any orthonormal planar basis may be used and the footprint is the circle of radius $h\tan\alpha$ centered at $p$.
\end{prop}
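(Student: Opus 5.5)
The approach is to write the cone condition as a quadratic inequality and restrict it to the plane. Any point of the target plane has the form $y=[\xi^\intercal,0]^\intercal$, so $y-r=[(\xi-p)^\intercal,-h]^\intercal$. Set $w\triangleq\xi-p\in\R^2$; then $o^\intercal(y-r)=b^\intercal w-o_zh$. On the nappe $o^\intercal(y-r)>0$, both sides of the cone inequality are nonnegative, so squaring is reversible. The membership $y\in\scriptC$ is therefore equivalent to
\[
(b^\intercal w - o_z h)^2 \geq \cos^2\alpha\,\big(\|w\|^2+h^2\big), \qquad b^\intercal w - o_z h>0 .
\]

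The key step is to diagonalize this quadratic form in the orthonormal frame $(u_{\mathrm f},v_{\mathrm f})$. Write $w=s\,u_{\mathrm f}+t\,v_{\mathrm f}$ and $\beta\triangleq\|b\|$, so that $b^\intercal w=\beta s$ and $\beta^2=1-o_z^2$. Substituting gives
\[
(\cos^2\alpha-\beta^2)s^2 + 2\beta o_z h\, s + \cos^2\alpha\, t^2 \leq (o_z^2-\cos^2\alpha)h^2 .
\]
Since $\cos^2\alpha-\beta^2=o_z^2-\sin^2\alpha=D>0$, both quadratic coefficients are positive, so the region is an ellipse (bounded) with principal axes along $u_{\mathrm f}$ and $v_{\mathrm f}$. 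Completing the square in $s$ gives the center $s_0=-\beta o_z h/D$. This is $p_{\mathrm{ell}}=p-(o_zh/D)\,b$, because $\beta u_{\mathrm f}=b$. The right-hand side becomes $(o_z^2-\cos^2\alpha)h^2+\beta^2o_z^2h^2/D$, which I expect to simplify to $h^2\sin^2\alpha\cos^2\alpha/D$ using $\beta^2=1-o_z^2$. Dividing through then gives $\ell_\parallel^2=h^2\sin^2\alpha\cos^2\alpha/D^2$, so $\ell_\parallel=h\sin(2\alpha)/(2D)$, and $\ell_\perp^2=h^2\sin^2\alpha/D$. Intersecting with $\Omega$ yields \eqref{eq:fov-footprint}.

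The main obstacle is the sign condition, since squaring a priori also captures the opposite nappe. To handle it I will show that every point of the ellipse automatically satisfies $b^\intercal w-o_zh>0$. The linear function $\beta s-o_zh$ attains its minimum over the ellipse at $s=s_0-\ell_\parallel$. There it equals $-o_zh\,\big(1-\beta^2/D\big)-\beta\ell_\parallel$, which simplifies to $h\,\big(-o_z\cos^2\alpha-\beta\sin\alpha\cos\alpha\big)/D$ after using $D-\beta^2=o_z^2-1+\cos^2\alpha$ and similar identities. This is positive exactly when $-o_z\cos\alpha>\beta\sin\alpha$, i.e. $o_z^2\cos^2\alpha>(1-o_z^2)\sin^2\alpha$, i.e. $o_z^2>\sin^2\alpha$, given $o_z<0$. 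That holds by the assumption $o_z<-\sin\alpha$, so the squared inequality with the sign condition is exactly the ellipse. The same assumption $D>0$ is what gives boundedness; I would note that $D\le0$ yields a parabola or hyperbola, which justifies the equivalence stated in the hypothesis.

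Finally, for the nadir case $b=0$, we have $o_z=-1$ and $D=\cos^2\alpha$. The quadratic becomes $\cos^2\alpha\,\|w\|^2\le \sin^2\alpha\,h^2$, a circle of radius $h\tan\alpha$ centered at $p$ in any orthonormal basis. This is consistent with the general formulas, since $\ell_\parallel=\ell_\perp=h\tan\alpha$.
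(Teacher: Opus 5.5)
Your proposal is correct and follows the same route as the paper's proof in Appendix~\ref{app:A}: you square the cone condition on the plane, diagonalize in the frame $(u_{\mathrm f},v_{\mathrm f})$, and complete the square to obtain $D(s-s_0)^2+\cos^2\alpha\,t^2\le h^2\sin^2\alpha\cos^2\alpha/D$. Your version is also more careful than the paper's, which works only with the boundary equation and asserts the forward-nappe condition, whereas you verify it by minimizing $b^\intercal w-o_zh$ over the ellipse. One small slip: at $s=s_0-\ell_\parallel$ the value is $-o_zh\,(1+\beta^2/D)-\beta\ell_\parallel$, and the identity that closes the computation is $D+\beta^2=\cos^2\alpha$ (your stated $D-\beta^2=o_z^2-1+\cos^2\alpha$ is false, since its right side equals $D$). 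Your final expression $h(-o_z\cos^2\alpha-\beta\sin\alpha\cos\alpha)/D$ and the positivity argument are nonetheless correct.
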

\begin{proof}
    See Appendix~\ref{app:A}.
\end{proof}

Thus, at every observation instant, the target-plane intersection traces out a finite ellipse, whose restriction to $\Omega$ is the sensor footprint (assuming a sufficiently downward-facing camera).
The next layer to detecting an object on that target plane is the \textit{quality} of the sensor, which is in general a function of both its altitude and its attitude.
In particular, for UAV $i$ at measurement time $t_k$, we define the dimensionless pointwise sensing-quality field, and its value with respect to target $\ell$ at position $p_{\ell}$, by
\begin{equation}
    \chi_{i,k}(\xi)\triangleq\beta_s(x_i(t_k))G_i(\xi;x_i(t_k)), \qquad
    \chi_{i\ell}(x_i(t_k))\triangleq\chi_{i,k}(p_\ell),
\end{equation}
where
\begin{equation}
    G_i(p; x_i) \triangleq \exp\left[-\frac{1}{2}(p - p_{\mathrm{ell},i})^\intercal \Sigma_{\mathrm{f},i}^{-1} (p - p_{\mathrm{ell},i})\right], \label{eq:footprint-spread}
\end{equation}
is an ellipse-associated Gaussian, with regularized covariance $\Sigma_{\mathrm{f},i} \triangleq \kappa_{\mathrm{f}}^2 Q_i + \varepsilon_{\Sigma} I_2$, given some $\kappa_{\mathrm{f}} > 0,\, \varepsilon_{\Sigma} > 0$, and geometric shape matrix $Q_i \triangleq \ell_{\parallel,i}^2 u_{\mathrm f,i}u_{\mathrm f,i}^\intercal + \ell_{\perp,i}^2 v_{\mathrm f,i}v_{\mathrm f,i}^\intercal$.
The second factor
\begin{equation}
    \beta_s(x) = \frac{(-o_z)^{\gamma}}{1 + (h/\bar{h})^2} \in [0,1],
\end{equation}
represents altitude and off-nadir sensing quality, where $\gamma>0$ and $\bar h>0$ are design parameters.
Here, the Gaussian spread $G(p,x)$ models the proportional quality drop-off from the sensor's elliptical center, where targets in the center of the frame are more likely to be detected than those at the boundary of the frame; and the factor $\beta_s(x)$ models quality drop-off in altitude due to the target occupying fewer image pixels, as well as off-nadir degradation; c.f. Figure~\ref{fig:sensor-quality} for representative examples of these quality factors.
In total, at any given measurement time $t_k$ with UAV state $x_i$, the probability of detection \textit{field} is given by
\begin{equation}
    p^D_{i,k}(\xi) = \1_{\scriptE_{\mathrm{f}}(x_i)}(\xi)\, \chi_{i,k}(\xi), \quad 0 \leq \chi_{i,k}(\xi) \leq 1.
\end{equation}
\begin{figure}[!htb]
    \centering
    \includegraphics[width=\linewidth]{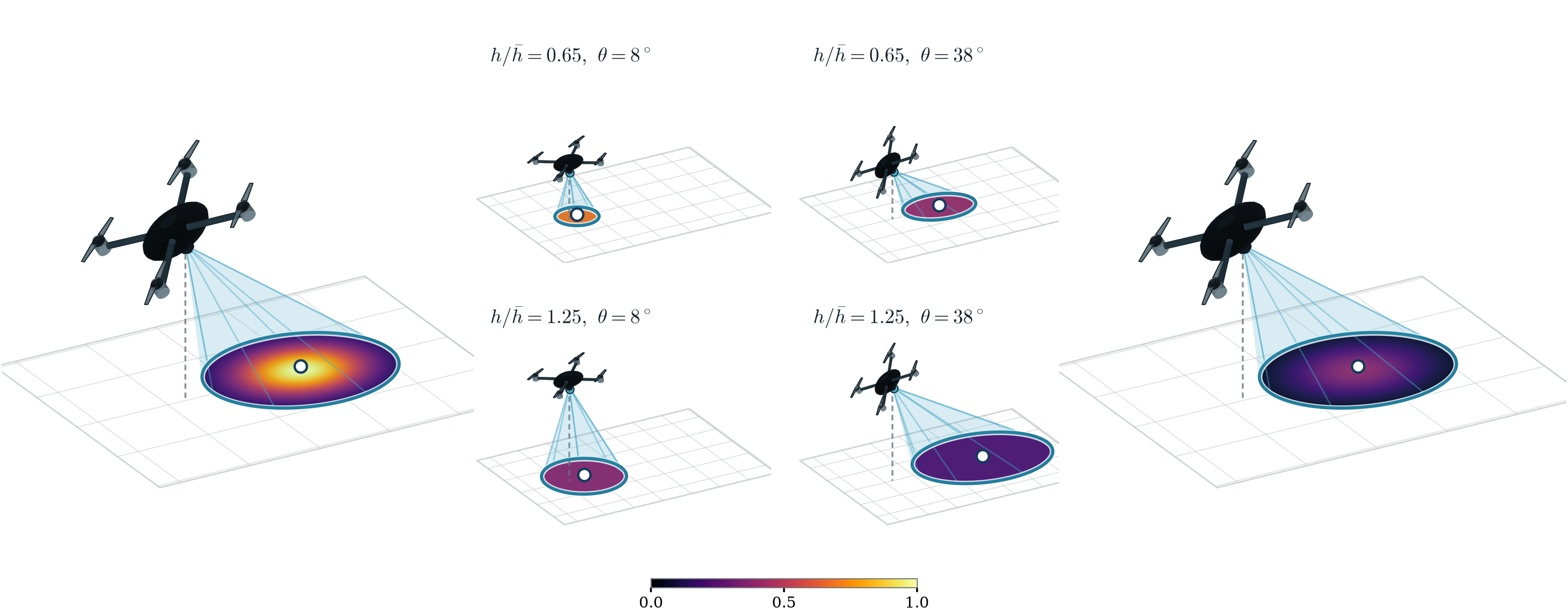}
    \caption{Pose-dependent sensing quality examples: (left) Gaussian spread factor $G(p;x)$ with respect to footprint center; (middle) altitude/attitude trade-off $\beta_s(x)$ for a 2x2 sweep of representative heights and angles; (right) the combined field $\chi(p;x) = \beta_s(x) G(p;x)$ for a fixed vehicle pose. \josh{}}
    \label{fig:sensor-quality}
\end{figure}

Conditional on a positive measurement, we assume a camera-frame angles-only observation model given by
\begin{equation}~\label{eq:meas-model}
    \begin{aligned}
    y &= h(\xi;x_i)+\nu(\xi,x_i)
      = \begin{bmatrix}
          \operatorname{atan2}(\ell_1,\ell_3) \\
          \operatorname{atan2}(\ell_2,\ell_3)
        \end{bmatrix}+\nu(\xi,x_i), \\
    \ell &= \big(C_c^w(q_i)\big)^\intercal
        \left(\begin{bmatrix}\xi\\0\end{bmatrix}-r_i\right),
        \qquad \nu\sim\scriptN\big(0,R(x_i\mid\xi)\big).
    \end{aligned}
\end{equation}
We use the heteroscedastic model $R(x_i\mid\xi)=\sigma^2(\chi_{i,k}(\xi))I_2$, with
\begin{equation}\label{eq:angular-noise-law}
    \sigma(\chi)=\Pi_{[\sigma_{\min},\sigma_{\max}]}\!\left(k_\sigma(\chi^{-1/2}-1)\right), \qquad \chi>0,
\end{equation}
where $k_\sigma>0$, $0<\sigma_{\min}\leq\sigma_{\max}$, and $\sigma(0)\triangleq\sigma_{\max}$.
The measurement covariance captures the dependence of (angular) localization accuracy on viewing quality.
Namely, when $\chi(\xi)$ is large, the target is observed under favorable sensing conditions and the corresponding measurement has smaller (angular) uncertainty.
Conversely, as $\chi(\xi)$ decreases, the target becomes more difficult to resolve and the uncertainty increases.
{For the linearized Gaussian measurement model with covariance $R$ held fixed, the Fisher information is $\scriptI=H^\intercal R^{-1}H$, where $H=\partial h/\partial\xi$~\citep{wilson2014trajectory}.
This inverse-covariance weighting motivates the $\chi^{-1/2}$ dependence in $\sigma(\chi)$.}

\subsection{Kernelized Ergodic Coverage}\label{subsec:kernel-coverage}
The starting point of the UAV fleet optimization problem is that of the spatial distribution, $\P$, which is sought to track.
Details on the construction and update of this base distribution, or base density, as we will refer to it henceforth, will be presented in Section~\ref{sec:coordination}.
In general, it will be a mixture of a UGV-path centered density along with a Gaussian mixture density with respect to existing target tracks, along with a residual undetected PPP density from the previous sensing.
In any case, suppose the base distribution $\P$ is prescribed to the UAV fleet over the planar rectangular domain $\Omega = [0,L_1]\times [0,L_2]$, and let $\phi_k$ denote the corresponding spatial Fourier coefficients.

In the traditional setup, over a horizon $T > 0$, the ordinary empirical occupation measure (cf. \eqref{eq:occupation-measure}) is given by
\begin{equation}
    \mu_T^{\mathrm{pt}} = \frac{1}{T} \int_{0}^{T} \delta_{g\big(x(\tau)\big)}\, \d\tau \iff \mu_{T}^{\mathrm{pt}}(\scriptA) = \frac{1}{T}\int_{0}^{T} \1_{\scriptA}\Big(g\big(x(\tau)\big)\Big)\, \d\tau,
\end{equation}
where $x: [0,T] \rightarrow \scriptX$ is the UAV trajectory, $g: \scriptX \rightarrow \Omega$ extracts the planar sensor location from the state, and the right-hand side equivalently shows the action of the trajectory distribution on any measurable set $\scriptA \subseteq \Omega$.
Thus, $\mu_T(\scriptA)$ tells us exactly what fraction of time the sensor \textit{point} lies in the set $\scriptA$.
In the present context, however, we have shown in Section~\ref{subsec:sensor-model} that the UAV state $x$ induces the finite sensor footprint $\scriptE_{\mathrm f}(x) \subseteq\Omega$ from \eqref{eq:fov-footprint}.
Thus, we generalize the point sensor map $x \mapsto g(x) \in \Omega$ to a finite sensor map $x \mapsto \nu_{x} \in \scriptP(\Omega)$, where $\nu_x$ can be thought of as the instantaneous spatial sensing measure induced by state $x$.

For the simplest hard, uniformly sensitive footprint, define the sensor footprint distribution as
\begin{equation}
    \nu_x(\d\xi) = \frac{\1_{\scriptE_{\mathrm f}(x)}(\xi)}{|\scriptE_{\mathrm f}(x)|}\, \d\xi \iff \nu_x(\scriptA) = \frac{|\scriptA \cap \scriptE_{\mathrm f}(x)|}{|\scriptE_{\mathrm f}(x)|},
\end{equation}
which is just a uniform distribution over the closed, convex footprint.
Now we simply average these instantaneous measures over time to obtain the finite-time occupation measure
\begin{equation}
    \mu_{T}^{f} = \frac{1}{T}\int_{0}^{T} \nu_{x(\tau)}\, \d\tau \iff \mu_{T}^{f}(\scriptA) = \frac{1}{T}\int_{0}^{T} \frac{|\scriptA\cap\scriptE_{\mathrm f}\big(x(\tau)\big)|}{|\scriptE_{\mathrm f}\big(x(\tau)\big)|}\, \d\tau.
\end{equation}
Although the above construction is a natural generalization of the point-mass sensor footprint to a pose-aware, spatially varying FoV sensor, it is rather cumbersome to work with in the overall optimization problem due to the discontinuity of the distribution at the footprint boundary $\partial\scriptE_{\mathrm f}$.
In the most general case, we can abstract away the sensor footprint measure entirely by defining a state-dependent spatial kernel
\begin{equation}
    K(\cdot;x) \in \scriptP(\Omega), \quad K(\xi;x) \geq 0,\, \forall\xi \in \Omega, \quad \int_{\Omega} K(\xi;x)\, \d\xi = 1.
\end{equation}
Then, define the associated time-averaged sensor occupancy measure as 
\begin{equation}
    \mu_T^K = \frac{1}{T}\int_{0}^{T} K\big(\cdot; x(\tau)\big)\, \d\tau \iff \mu_T^K(\scriptA) = \frac{1}{T}\int_{0}^{T} \int_{\scriptA} K\big(\xi; x(\tau)\big)\, \d\xi\, \d\tau.
\end{equation}
Given the orthonormal Fourier basis functions $\{F_k\}$ over the rectangular domain $\Omega$ as defined in \eqref{eq:fourier-basis}, the corresponding coefficients of the generalized occupation measure immediately follow as
\begin{equation}\label{eq:kernel-coefficients}
    c_k = \frac{1}{T}\int_{0}^{T} \Psi_k\big(x(\tau)\big)\, \d\tau, \quad \Psi_k(x) \triangleq \int_{\Omega} K(\xi; x) F_k(\xi)\, \d\xi,
\end{equation}
where we regard $\Psi_k$ as the \textit{spectral response} corresponding to mode $k$.
To aid in tractability when computing derivatives for SCP (as discussed in Section~\ref{subsec:SCvx*}), we therefore relax the hard sensor footprint to the Gaussian-smoothed kernel
\begin{equation}
    K(\xi;x_i) = \frac{G_i(\xi;x_i)}{\int_{\Omega} G_i(\sigma;x_i)\, \d\sigma},
\end{equation}
with $G_i(\cdot;x_i)$ as defined in \eqref{eq:footprint-spread}.
\begin{table}[t]
    \centering
    \caption{Comparison of sensor models used to construct the ergodic occupation measure on the search domain $\Omega$.}
    \label{tab:sensor-model-comparison}
    \renewcommand{\arraystretch}{1.5}
    \setlength{\tabcolsep}{3.5pt}
    \begin{tabular}{lll}
        \toprule
        \textbf{Sensor Model}
        & \textbf{Instantaneous Sensing Measure $\nu_x$}
        & \textbf{Spectral Response $\Psi_k(x)$} \\
        \midrule

        Point Mass
        &
        $\displaystyle
        \nu_x = \delta_{g(x)}
        $
        &
        $\displaystyle
        \Psi_k(x) = F_k\!\left(g(x)\right)
        $
        \\[1mm]

        Hard FoV
        &
        $\displaystyle
        \nu_x(\mathrm{d}\xi)
        =
        \frac{\1_{\mathcal{E}_f(x)}(\xi)}
        {|\mathcal{E}_f(x)|}
        \,\mathrm{d}\xi
        $
        &
        $\displaystyle
        \Psi_k(x)
        =
        \frac{1}{|\mathcal{E}_f(x)|}
        \int_{\mathcal{E}_f(x)}
        F_k(\xi)\,\mathrm{d}\xi
        $
        \\[3mm]

        Smooth FoV
        &
        $\displaystyle
        \nu_x(\mathrm{d}\xi)
        =
        K(\xi;x)\,\mathrm{d}\xi,
        \int_{\Omega}K(\xi;x)\,\mathrm{d}\xi=1
        $
        &
        $\displaystyle
        \Psi_k(x)
        =
        \int_{\Omega}
        K(\xi;x)F_k(\xi)\,\mathrm{d}\xi
        $
        \\

        \bottomrule
    \end{tabular}
\end{table}
Table~\ref{tab:sensor-model-comparison} summarizes the preceding discussion on the various sensor models and shows the progression from the canonical point-mass model to the proposed Gaussian kernel model, i.e.,
\begin{equation*}
    \delta_{g(x)} \longrightarrow \frac{\1_{\scriptE_{\mathrm f}(x)}}{|\scriptE_{\mathrm f}(x)|} \longrightarrow  \frac{G(\cdot;x)}{\int_{\Omega} G}.
\end{equation*}

\subsection{UAV Problem Formulation}\label{subsec:uav-problem}
{Let $N_a$ denote the number of UAVs, indexed by $i=1,\ldots,N_a$.}
The sensor footprint occupancy Fourier coefficients as defined in \eqref{eq:kernel-coefficients} can be readily extended to the entire multi-agent UAV fleet by averaging over the set of agents, giving the global occupancy coefficients
\begin{equation}
    C_k(x) \triangleq \frac{1}{\textcolor{black}{N_a}}\sum_{i=1}^{\textcolor{black}{N_a}} c_{k,i} = \frac{1}{\textcolor{black}{N_a}}\sum_{i=1}^{\textcolor{black}{N_a}} \frac{1}{T}\int_{0}^{T} \Psi_k\big(x_i(\tau)\big)\, \d\tau.
\end{equation}
The resulting fleet-level ergodic metric is then given by
\begin{equation}\label{eq:fleet-ergodic-metric}
    \scriptE_{\hat\mu} = \sum_{k\in\N_0^{2}} \Lambda_k \big(C_k(x) - \phi_k\big)^2 \approx \sum_{k\in\scriptK} \Lambda_k \big(C_k(x) - \phi_k\big)^2,
\end{equation}
where in practice, we truncate to some maximum number of modes, over the set $\scriptK \triangleq \{(k_1,k_2) \in \N_0^2 : 0 \leq k_i \leq K_i,\, i = 1,2\}$ for some $K_1, K_2 \in \N$.
Since $C_k(x)$ involves the integral of each UAV state over time through the individual coefficients $c_{k,i}$, it is a bit cumbersome to work with.
Instead, we can define the Fourier coefficient \textit{accumulator} state by
\begin{equation}
    \dot{a}_{k,i}(t) = \frac{1}{T}\Psi_k\big(x_i(t)\big), \quad a_{k,i}(0) = 0, \, t \in [0,T] \implies c_{k,i} = a_{k,i}(T).
\end{equation}
This state augmentation follows the auxiliary-state construction in \citet{delatorre2016stochastic} and the extended-state formulation used in RAnGE~\citep{berger2024range}. 
As a result, by stacking the mode-indexed accumulator states as $a_i \triangleq [a_{k,i}]_{k\in\scriptK} \in \R^{|\scriptK|}$ and defining the fleet-average terminal accumulator $\tau\triangleq \textcolor{black}{N_a}^{-1}\sum_{i=1}^{\textcolor{black}{N_a}}a_i(T)$, the ergodic metric can be equivalently written as the Mayer cost
\begin{equation}
    \scriptE_{\hat\mu}(\tau) = \left\|\tau-\phi\right\|_{\Lambda}^{2}, \quad
    \dot{a}_i(t) \triangleq g_a\big(x_i(t)\big) = \frac{1}{T}
    \begin{bmatrix}
        \Psi_1\big(x_i(t)\big), \\ 
        \vdots \\
        \Psi_{|\scriptK|}\big(x_i(t)\big)
    \end{bmatrix}, \quad t \in [0,T],
\end{equation}
with initial conditions $a_i(0) = 0_{|\scriptK|\times 1}$.
As a result, notice that the objective function has been reformulated as a convex function of the decision variables $\{a_i\}_{i=1}^{\textcolor{black}{N_a}}$ and the nonlinearities are relegated into the accumulator dynamics through $g_a(\cdot)$.

In addition to minimizing the fleet-level ergodic metric as in \eqref{eq:fleet-ergodic-metric}, it is also necessary to maintain safety between neighboring pairs of UAVs.
To this end, at each planning instant, we assume an undirected, connected graph $\scriptG = (\scriptV, \scriptE)$, where two agents $\{i,j\} \in \scriptE$ are connected if they are within each other's communication radius%
\footnote{We assume that the communication radius is large enough so that the resulting graph is connected, and that agents can detect each other well before they need to coordinate.}%
, that is, $\|r_i - r_j\| \leq D_{\mathrm{com}}$, for some $D_{\mathrm{com}} > 0$.
We impose geometric separation between UAV pairs as
\begin{equation}~\label{eq:uav-separation}
    \|r_i(t)-r_j(t)\|_2\geq d_{ij},
    \qquad \{i,j\}\in\scriptE,
    \qquad t\in[0,T],
\end{equation}
where $d_{ij}=R_i^a+R_j^a+\Delta>0$, $R_i^a$ is UAV $i$'s physical safety radius, and $\Delta>0$ is an additional margin.
\begingroup\color{black}
\hypersetup{citecolor=black,linkcolor=black,urlcolor=black}
Using the normalized residual in Section~\ref{subsec:ctcs-separation}, define
\begin{equation*}
    g_{ij}(x_i,x_j)\triangleq 1-\frac{\|r_i-r_j\|_2^2}{d_{ij}^2},
    \qquad g_{c,ij}(x_i,x_j)\triangleq[g_{ij}(x_i,x_j)]_+^2.
\end{equation*}
The pairwise accumulator satisfies $\dot y_{ij}=g_{c,ij}$ with $y_{ij}(0)=y_{ij}(T)=0$.
Define the augmented agent state, dynamics, and initial condition by
\begin{equation*}
    z_i\triangleq\begin{bmatrix}x_i\\a_i\end{bmatrix},\qquad
    \tilde f_i(z_i,u_i)\triangleq
    \begin{bmatrix}f_i(\Pi_xz_i,u_i)\\g_a(\Pi_xz_i)\end{bmatrix},\qquad
    z_{0,i}\triangleq\begin{bmatrix}x_{0,i}\\0_{|\scriptK|\times1}\end{bmatrix},
\end{equation*}
where $\Pi_x=[I_n\ \ 0_{n\times|\scriptK|}]$ and
$\Pi_a=[0_{|\scriptK|\times n}\ \ I_{|\scriptK|}]$ select the physical and ergodic states.
{Let $\ell_i^a(x_i,u_i)$ denote the UAV running cost.
The control-effort choice used in the transcription below is $\ell_i^a=\frac{\beta}{2}u_i^\intercal R_i u_i$, with $\beta>0$ and $R_i\succ0$; Section~\ref{subsec:case-setup} specifies the hover reference and state regularization used in the simulations.}
{With $\scriptV=\{1,\ldots,N_a\}$, the centralized problem is}
\begin{subequations}\label{prob:uav-centralized}
\begin{empheq}[box=\fbox]{align}
    \min_{\tau,z,u,y}\quad &J_a(z,u,\tau)
    ={\sum_{i=1}^{N_a}\int_0^T \ell_i^a(x_i,u_i)\,\d t}
      +\scriptE_{\hat\mu}(\tau),\\
    \mathrm{s.t.}\quad &\dot z_i=\tilde f_i(z_i,u_i),\quad z_i(0)=z_{0,i},\quad\forall i\in\scriptV,\ \forall t\in[0,T],\\
    &\dot y_{ij}=g_{c,ij}(\Pi_xz_i,\Pi_xz_j),\quad y_{ij}(0)=y_{ij}(T)=0,\quad\forall\{i,j\}\in\scriptE,\ \forall t\in[0,T],\\
    &(\Pi_xz_i,u_i)\in\scriptX_i\times\scriptU_i,\quad\forall i\in\scriptV,\ \forall t\in[0,T],\\
    &\tau=\frac{1}{N_a}\sum_{i=1}^{N_a}\Pi_a z_i(T).
\end{empheq}
\end{subequations}
Here $z$, $u$, and $y$ collect the agent and pair trajectories.
\endgroup

\subsection{UAV Problem Solution}\label{subsec:uav-solution}
\begingroup\color{black}
\hypersetup{citecolor=black,linkcolor=black,urlcolor=black}
The objective in \eqref{prob:uav-centralized} is convex in the controls and terminal ergodic accumulators.
The nonlinear agent dynamics and pairwise separation dynamics require convexification, while separation and the fleet-average constraint couple the agents.
We first transcribe the problem, then identify its local and coupled terms.

Fix $N\geq1$, $\Delta t=T/N$, and the partition $t_n=n\Delta t$, so that
$0=t_0<\cdots<t_N=T$.
On each interval $[t_n,t_{n+1})$, use zero-order-held controls $u_i(t)=u_{i,n}$,
with $u_i(T)=u_{i,N-1}$, and let $u_n=\operatorname{col}(\{u_{i,n}\}_{i\in\scriptV})$.
Stack the agent and pair states as
\begin{equation*}
    z^a=\operatorname{col}(\{z_i\}_{i\in\scriptV},\{y_{ij}\}_{\{i,j\}\in\scriptE}),\qquad
    f_{\mathrm{aug}}(z^a,u)=\operatorname{col}(\{\tilde f_i(z_i,u_i)\}_i,
    \{g_{c,ij}(\Pi_xz_i,\Pi_xz_j)\}_{\{i,j\}\in\scriptE}).
\end{equation*}
Assume the augmented dynamics have a unique solution on each interval for the chosen initial state and constant control.
The exact flow map gives the state at the end of the interval as
\begin{equation*}
    F_n^a(z_n^a,u_n)\triangleq\zeta_n(t_{n+1}),\qquad
    \dot\zeta_n(t)=f_{\mathrm{aug}}(\zeta_n(t),u_n),\quad\zeta_n(t_n)=z_n^a.
\end{equation*}
We use multiple shooting~\citep{bock1984multiple} to transcribe the dynamics, treating the node states as independent variables and linking adjacent intervals through the matching equations
$z_{n+1}^a=F_n^a(z_n^a,u_n)$.
In practice, we approximate the flow by numerical integration (e.g. RK45).

Following \citet[Section~3.1]{elango2024ctcs}, we allow each accumulator increment to be at most a prescribed tolerance $\varepsilon_{ij,n}>0$.
These interval bounds replace $y_{ij}(T)=0$ and limit the integrated squared separation violation on each interval.
Positive tolerances therefore permit residual violation between nodes.
The resulting finite-dimensional program is
\begin{subequations}\label{prob:uav-transcription}
\begin{empheq}[box=\fbox]{align}
    \min_{\tau,z^a,u}\quad &J_a^{\mathrm d}(u,\tau)
    =\frac{\beta\Delta t}{2}\sum_{i=1}^{N_a}\sum_{n=0}^{N-1}
    u_{i,n}^\intercal R_i u_{i,n}+\scriptE_{\hat\mu}(\tau),\\
    \mathrm{s.t.}\quad &z^a_{n+1}=F_n^a(z^a_n,u_n),\quad\forall n\in\{0,\ldots,N-1\},\label{eq:ctcs-augmented-flow}\\
    &z_{i,0}=z_{0,i},\ \forall i\in\scriptV,\qquad y_{ij,0}=0,\ \forall\{i,j\}\in\scriptE,\\
    &\Pi_xz_{i,n}\in\scriptX_i,\quad\forall i\in\scriptV,\quad\forall n\in\{0,\ldots,N\},\\
    &u_{i,n}\in\scriptU_i,\quad\forall i\in\scriptV,\quad\forall n\in\{0,\ldots,N-1\},\\
    &y_{ij,n+1}-y_{ij,n}\leq\varepsilon_{ij,n},\quad\forall\{i,j\}\in\scriptE,\quad\forall n\in\{0,\ldots,N-1\},\label{eq:pair-ctcs-relaxed}\\
    &\tau=\frac{1}{N_a}\sum_{i=1}^{N_a}\Pi_a z_{i,N}.
\end{empheq}
\end{subequations}

To expose the structure of \eqref{prob:uav-transcription} symmetrically, introduce two copies of each pair's accumulator sequence, that is,
\begin{equation*}
    Y_{ij}^{i}\triangleq(y_{ij,0}^{i},\ldots,y_{ij,N}^{i}),\qquad
    Y_{ij}^{j}\triangleq(y_{ij,0}^{j},\ldots,y_{ij,N}^{j}),\qquad
    Y_{ij}^{i}=Y_{ij}^{j},\quad\forall\{i,j\}\in\scriptE.
\end{equation*}
As a result, agent $i$'s decision vector stacks its state and control nodes and its pair-accumulator copies as
\begin{equation*}
    \vartheta_i\triangleq\operatorname{col}\!\left(
    z_{i,0},\ldots,z_{i,N},\;
    u_{i,0},\ldots,u_{i,N-1},\;
    \{Y_{ij}^{i}\}_{j\in\scriptN_i}\right)\in\R^{d_{\vartheta,i}},
\end{equation*}
where $d_{\vartheta,i}=(N+1)(n+|\scriptK|+|\scriptN_i|)+Nm$.

Define the full decision vector $\theta=\operatorname{col}(\tau,\vartheta_1,\ldots,\vartheta_{N_a})$, the convex local cost
$J_i(\vartheta_i)=\frac{\beta\Delta t}{2}\sum_{n=0}^{N-1}u_{i,n}^\intercal R_i u_{i,n}$,
and the terminal selector $A_i\vartheta_i=\Pi_a z_{i,N}$.
Let $\Pi(\vartheta_i)$ stack the local physical and ergodic shooting defects, and let the convex set $\Theta_i$ collect the node bounds and initial conditions, including $y_{ij,0}^{i}=0$ and $y_{ij,n+1}^{i}-y_{ij,n}^{i}\leq\varepsilon_{ij,n}$ for every $j\in\scriptN_i$ and $n=0,\ldots,N-1$.
To define one shared shooting defect per pair, set $\bar y_{ij,n}=(y_{ij,n}^{i}+y_{ij,n}^{j})/2$ and let $\bar z_n^a$ stack the agent states and these averages.
Then
\begin{equation*}
    \pi_{ij,n}(\vartheta_i,\vartheta_j)
    =\bar y_{ij,n+1}-[F_n^a(\bar z_n^a,u_n)]_{y_{ij}},\qquad n=0,\ldots,N-1,
\end{equation*}
which depends only on the two agents and their accumulator copies.
When the copies agree, their average equals the original accumulator, so the shared defect recovers the shooting equation in \eqref{prob:uav-transcription}.
Stacking these defects into $\pi_{ij}$ gives the equivalent program
\begin{subequations}
    \begin{align}
        \min_{\theta}\quad &\sum_{i=1}^{N_a}J_i(\vartheta_i)+\scriptE_{\hat\mu}(\tau),\\
        \mathrm{s.t.}\quad &\Pi(\vartheta_i)=0,\quad\vartheta_i\in\Theta_i,\quad i\in\scriptV,\\
        &\pi_{ij}(\vartheta_i,\vartheta_j)=0,\quad\{i,j\}\in\scriptE,\\
        &Y_{ij}^{i}=Y_{ij}^{j},\quad\forall\{i,j\}\in\scriptE,\\
        &\tau=\frac{1}{N_a}\sum_{i=1}^{N_a}A_i\vartheta_i.
    \end{align}
\end{subequations}
SCvx* linearizes the nonlinear local and pairwise shooting defects while retaining the convex costs, node sets, and copy-agreement constraints.
\endgroup
{We obtain the required flow Jacobians by integrating the state and control variational equations alongside the reference trajectory on each interval and evaluating them at its endpoint (see Appendix~\ref{app:flow-sensitivities}).}
Let $\hat\vartheta$ denote the current reference decision vector.
{We introduce virtual controls for artificial infeasibility and trust regions for artificial unboundedness, and minimize the augmented Lagrangian as in Section~\ref{subsec:SCvx*} to obtain the convex subproblem}
\begin{subequations}\label{prob:uav-scvx-subproblem}
    \begin{align}
        &\min_{\substack{\tau, \forall i\in\scriptV: \vartheta_i, \xi_i \\ \forall \{i,j\}\in\scriptE: \zeta_{ij}}} &&\sum_{i=1}^{\textcolor{black}{N_a}} J_i(\vartheta_i) + \scriptE_{\hat\mu}(\tau) \nonumber\\[-1mm]
        &&&\quad + \sum_{i=1}^{\textcolor{black}{N_a}} \Big(\lambda_i^\intercal \xi_i + \frac{w}{2}\|\xi_i\|^2\Big) + \sum_{\{i,j\}\in\scriptE} \Big(\mu_{ij}^\intercal \zeta_{ij} + \frac{w}{2}\|\zeta_{ij}\|^{2} \Big)\label{eq:joint-objective}\\
        &\ \mathrm{s.t.} && \Pi^{\mathrm{cvx}}(\vartheta_i) = \xi_i, \quad \vartheta_i\in\Theta_i, \quad i \in \scriptV, \label{eq:joint-local-constraints}\\
        &&&\pi_{ij}^{\mathrm{cvx}}(\vartheta_i, \vartheta_j) = \zeta_{ij}, \quad \{i,j\}\in\scriptE, \label{eq:joint-pair-shooting}\\
        &&&\textcolor{black}{Y_{ij}^{i}=Y_{ij}^{j},\quad\forall\{i,j\}\in\scriptE}, \label{eq:joint-y-consensus}\\
        &&&\|\vartheta_i - \hat{\vartheta}_i\|_{\infty} \leq \Delta_{\mathrm{tr}}, \quad i \in\scriptV, \label{eq:joint-trust-region}\\
        &&&\tau = \frac{1}{\textcolor{black}{N_a}}\sum_{i=1}^{\textcolor{black}{N_a}} \textcolor{black}{A_i}\vartheta_i,\label{eq:uav-scvx-global-sum}
    \end{align}
\end{subequations}
Here $\lambda_i$ and $\mu_{ij}$ are the unrestricted multipliers for the local and pairwise shooting equalities, respectively, and $\xi_i$ and $\zeta_{ij}$ are their signed virtual controls.
{Each endpoint copy retains its initial condition and affine increment bounds in $\Theta_i$, while copy agreement is enforced explicitly.
The shared defect has one virtual control $\zeta_{ij}$ per pair.}
\begingroup\color{black}
\hypersetup{citecolor=black,linkcolor=black,urlcolor=black}
The objective in \eqref{prob:uav-scvx-subproblem} depends on the global variable $\tau$, while \eqref{eq:uav-scvx-global-sum} couples all agents through a sum of their terminal ergodic accumulators.
To distribute the objective, introduce a local copy $\tau_i\in\R^{|\scriptK|}$ for each agent $i\in\scriptV$ and assign it the cost $N_a^{-1}\scriptE_{\hat\mu}(\tau_i)$, with the copies constrained to agree.
To distribute the global sum constraint, introduce directed flows $f_{i\rightarrow j}\in\R^{|\scriptK|}$ for each $j\in\scriptN_i$ on $\scriptG$.
These flows replace the global sum with local balance equations and antisymmetry between neighbors, through the following constraints
\begin{subequations}\label{eq:flow-lifting}
    \begin{align}
        &f_{i\rightarrow j} + f_{j\rightarrow i} = 0, \quad \forall \{i,j\}\in\scriptE \label{eq:flow-antisymmetry}\\
        &\sum_{j\in\scriptN_{i}}f_{i\rightarrow j} = \frac{1}{\textcolor{black}{N_a}}(\textcolor{black}{A_i}\vartheta_i - \tau_i), \quad \forall i \in\scriptV, \label{eq:flow-local-balance}\\
        &\tau_i=\tau_j, \quad \forall\{i,j\}\in\scriptE.\label{eq:flow-tau-consensus}
    \end{align}
\end{subequations}
For each pair $\{i,j\}\in\scriptE$, also introduce endpoint copies $\zeta_{ij}^{i}$ and $\zeta_{ij}^{j}$ of the shared virtual control $\zeta_{ij}$, with the same multiplier $\mu_{ij}$ at both endpoints.
The local objective and constraints for agent $i\in\scriptV$ are thus
\begin{subequations}\label{prob:uav-node-separable}
    \begin{align}
        &\min_{\substack{\vartheta_i, \xi_i, \tau_i \\ \forall j\in\scriptN_i: f_{i\rightarrow j},\, \zeta_{ij}^{i}}} &&J_i(\vartheta_i) + \frac{1}{\textcolor{black}{N_a}}\scriptE_{\hat\mu}(\tau_i) + \lambda_i^\intercal \xi_i + \frac{w}{2}\|\xi_i\|^2 + \frac{1}{2}\sum_{j \in \scriptN_i} \Big(\mu_{ij}^\intercal \zeta_{ij}^{i} + \frac{w}{2}\|\zeta_{ij}^{i}\|^{2} \Big)\label{eq:node-objective}\\
        &\quad\quad\ \ \mathrm{s.t.} && \Pi^{\mathrm{cvx}}(\vartheta_i) = \xi_i, \quad \vartheta_i\in\Theta_i, \label{eq:node-local-constraints}\\
        &&&\pi_{ij}^{\mathrm{cvx}}(\vartheta_i, \vartheta_j) = \zeta_{ij}^{i}, \quad \forall j\in\scriptN_i, \label{eq:node-pair-shooting}\\
        &&&\|\vartheta_i - \hat{\vartheta}_i\|_{\infty} \leq \Delta_{\mathrm{tr}}, \label{eq:node-trust-region}\\
        &&&\zeta_{ij}^{i} - \zeta_{ij}^{j} = 0, \quad \forall j \in \scriptN_i, \label{eq:node-zeta-consensus}\\
        &&&\textcolor{black}{Y_{ij}^{i}-Y_{ij}^{j}=0,\quad\forall j\in\scriptN_i}, \label{eq:node-y-consensus}\\
        &&&f_{i\rightarrow j} + f_{j\rightarrow i} = 0, \quad \forall j \in \scriptN_i, \label{eq:node-flow-antisymmetry}\\
        &&&\tau_i-\tau_j=0, \quad \forall j\in\scriptN_i, \label{eq:node-tau-consensus}\\
        &&&\sum_{j\in\scriptN_i} f_{i\rightarrow j} - \frac{1}{\textcolor{black}{N_a}}(\textcolor{black}{A_i}\vartheta_i - \tau_i) = 0.\label{eq:node-flow-balance}
    \end{align}
\end{subequations}
\begin{prop}[Equivalence at consensus]\label{prop:uav-flow-equivalence}
On a connected graph, minimizing the sum of the local objectives in \eqref{prob:uav-node-separable} subject to all agent constraints is equivalent to \eqref{prob:uav-scvx-subproblem}.
At consensus, identifying the shared copies and eliminating the flows recovers the original problem, with the same optimal value and corresponding minimizers.
\end{prop}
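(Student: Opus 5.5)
The plan is to exhibit two maps between feasible sets: a lift from \eqref{prob:uav-scvx-subproblem} to the node-separable problem, and a projection from the node-separable problem back to \eqref{prob:uav-scvx-subproblem}. Both maps will preserve objective values, which gives equal optimal values and a correspondence of minimizers.

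\emph{Projection.} Take a point feasible for all the constraints \eqref{prob:uav-node-separable} at every agent.
\begin{itemize}
\item Since $\scriptG$ is connected, the edgewise agreements \eqref{eq:node-tau-consensus} force $\tau_i=\tau$ for a single common $\tau$.
\item The edgewise agreements \eqref{eq:node-zeta-consensus} and \eqref{eq:node-y-consensus} identify the endpoint copies, so we may write $\zeta_{ij}^i=\zeta_{ij}^j\triangleq\zeta_{ij}$ and $Y_{ij}^i=Y_{ij}^j$. These recover \eqref{eq:joint-pair-shooting} and \eqref{eq:joint-y-consensus}.
\item Summing the balance equations \eqref{eq:node-flow-balance} over $i\in\scriptV$ regroups the left side as $\sum_{\{i,j\}\in\scriptE}(f_{i\to j}+f_{j\to i})$, which vanishes by \eqref{eq:node-flow-antisymmetry}. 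Hence $0=N_a^{-1}\sum_i(A_i\vartheta_i-\tau)$, which is exactly \eqref{eq:uav-scvx-global-sum}.
\item The local constraints and trust regions carry over unchanged.
\end{itemize}
For the objective, $\sum_i N_a^{-1}\scriptE_{\hat\mu}(\tau_i)=\scriptE_{\hat\mu}(\tau)$. Each edge penalty appears twice with weight $\tfrac12$, once from each endpoint, so it sums to $\mu_{ij}^\intercal\zeta_{ij}+\tfrac w2\|\zeta_{ij}\|^2$. The projection therefore maps feasible points to feasible points with the same cost.

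\emph{Lift.} Given a feasible point of \eqref{prob:uav-scvx-subproblem}, set $\tau_i=\tau$ for every $i$ and duplicate $\zeta_{ij}$ and the accumulator copies at both endpoints. The remaining task is to choose flows. Let $b_i\triangleq N_a^{-1}(A_i\vartheta_i-\tau)$; then $\sum_ib_i=0$ by \eqref{eq:uav-scvx-global-sum}. We need antisymmetric edge flows whose net outflow at each node is $b_i$, that is, $Bf=b$, where $B$ is the node–edge incidence matrix applied componentwise in $\R^{|\scriptK|}$. For a connected graph, $\operatorname{range}(B)=\mathbf 1^\perp$, so a solution exists. Concretely, I would fix a spanning tree and route $b$ along it, setting $f_{i\to j}$ on each tree edge equal to the total of $b$ on the subtree cut off by that edge, and zero flow on the other edges. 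This point satisfies every constraint of \eqref{prob:uav-node-separable} and has the same cost.

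The two value-preserving maps give equal optimal values. Minimizers of the node problem project to minimizers of \eqref{prob:uav-scvx-subproblem}, and lifts of minimizers are minimizers, any flow choice being admissible. The flows are not unique (they are determined up to circulations in $\ker B$), but they do not enter the objective, so eliminating them is harmless. This is the sense of ``corresponding minimizers.''

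The main obstacle is the flow existence step in the lift, which is the only place connectivity is used nontrivially beyond the consensus propagation. The spanning-tree construction, or the rank fact $\operatorname{rank}B=M-1$, handles it. A second point needing care is the pair-shooting constraint \eqref{eq:node-pair-shooting}. It involves $\vartheta_j$, so "local" problems are separable only after the ADMM copies are introduced. For the equivalence statement, however, we consider the joint problem, and \eqref{eq:node-pair-shooting} imposed at both endpoints is the same constraint once $\zeta$ copies agree, so no conflict arises.
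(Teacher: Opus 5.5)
Your projection half matches the paper's proof essentially step for step. Connectivity and \eqref{eq:flow-tau-consensus} give a common $\tau$. Summing the balance equations and cancelling edgewise by antisymmetry recovers \eqref{eq:uav-scvx-global-sum}. The $\zeta$ and $Y$ copies are identified, and the two half-weighted edge penalties combine into one.

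The paper stops there. That direction alone only shows that each node-feasible point yields a feasible point of \eqref{prob:uav-scvx-subproblem} with the same cost, so the node problem's optimal value is no smaller than the original one. Your lift supplies the converse, which the paper leaves implicit in the phrase ``eliminating the flows.'' Any feasible point of \eqref{prob:uav-scvx-subproblem} admits antisymmetric flows meeting the local balance equations, because the right-hand sides $b_i$ sum to zero and the incidence matrix of a connected graph has range $\mathbf{1}^\perp$. This is what actually gives equal optimal values and the two-way correspondence of minimizers. It also shows that connectivity is needed a second time: on a disconnected graph, $b$ would have to sum to zero on every component, which \eqref{eq:uav-scvx-global-sum} does not ensure.

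One clarification for your spanning-tree formula. With the net-outflow convention $\sum_{j\in\scriptN_i}f_{i\rightarrow j}=b_i$, delete edge $\{i,j\}$ from the tree and take $f_{i\rightarrow j}$ to be the sum of $b$ over the resulting component that contains $i$. The other side gives the negative of this, because $\sum_i b_i=0$.
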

\begin{proof}
Take a point satisfying \eqref{prob:uav-node-separable} for every agent.
The coefficient agreement in \eqref{eq:flow-tau-consensus}, together with connectedness of $\scriptG$, gives $\tau_i=\tau$ for all $i\in\scriptV$.
Summing the local balance equations \eqref{eq:flow-local-balance} over all agents therefore gives
\[
    \sum_{i\in\scriptV}\sum_{j\in\scriptN_i}f_{i\rightarrow j}
    =\frac{1}{N_a}\sum_{i\in\scriptV}(A_i\vartheta_i-\tau_i)
    =\frac{1}{N_a}\sum_{i\in\scriptV}A_i\vartheta_i-\tau.
\]
Each unordered edge contributes both directed flows to the left-hand side.
Grouping these terms by edge and applying antisymmetry \eqref{eq:flow-antisymmetry} yields
\[
    \sum_{i\in\scriptV}\sum_{j\in\scriptN_i}f_{i\rightarrow j}
    =\sum_{\{i,j\}\in\scriptE}(f_{i\rightarrow j}+f_{j\rightarrow i})=0.
\]
Combining the two identities gives $\tau=N_a^{-1}\sum_{i\in\scriptV}A_i\vartheta_i$, which is exactly \eqref{eq:uav-scvx-global-sum}.

The virtual-control agreement in \eqref{eq:node-zeta-consensus} identifies a single $\zeta_{ij}$ for each pair, so \eqref{eq:node-pair-shooting} becomes \eqref{eq:joint-pair-shooting}.
The local constraints \eqref{eq:node-local-constraints}, trust regions \eqref{eq:node-trust-region}, and accumulator agreement \eqref{eq:node-y-consensus} reproduce \eqref{eq:joint-local-constraints}, \eqref{eq:joint-trust-region}, and \eqref{eq:joint-y-consensus}, respectively.
Thus all constraints of \eqref{prob:uav-scvx-subproblem} are recovered.
For the objective, coefficient consensus gives
\[
    \sum_{i\in\scriptV}\frac{1}{N_a}\scriptE_{\hat\mu}(\tau_i)
    =\scriptE_{\hat\mu}(\tau).
\]
Each unordered edge appears twice in the neighbor sums in \eqref{eq:node-objective}.
At virtual-control consensus, its two identical half-weighted terms sum to the single pair penalty in \eqref{eq:joint-objective}.
The local costs $J_i$ and penalties on $\xi_i$ each appear once, so the sum of the objectives in \eqref{eq:node-objective} equals \eqref{eq:joint-objective}.
\end{proof}
\endgroup
\begingroup\color{black}
\hypersetup{citecolor=black,linkcolor=black,urlcolor=black}
The remaining neighbor coupling lies in the pairwise shooting constraint \eqref{eq:node-pair-shooting} and the agreement and antisymmetry constraints \eqref{eq:node-zeta-consensus}--\eqref{eq:node-tau-consensus}.
To separate these constraints, first collect agent $i$'s decision variables as
\begin{equation*}
    \nu_i\triangleq\operatorname{col}\!\left(
    \vartheta_i,\xi_i,\tau_i,
    \{f_{i\rightarrow j}\}_{j\in\scriptN_i},
    \{\zeta_{ij}^{i}\}_{j\in\scriptN_i}\right)
    \in\R^{d_{\nu,i}},\qquad\forall i\in\scriptV,
\end{equation*}
where $d_{\nu,i}=d_{\vartheta,i}+N(n+|\scriptK|)+|\scriptK|+|\scriptN_i|(N+|\scriptK|)$.
Let $\scriptJ_i(\nu_i)$ denote the local objective in \eqref{eq:node-objective}.
The local constraints \eqref{eq:node-local-constraints}, \eqref{eq:node-trust-region}, and \eqref{eq:node-flow-balance} define the convex set
\begin{equation}\label{eq:uav-local-set}
    N_i\triangleq\left\{\nu_i:\begin{aligned}
        &\Pi^{\mathrm{cvx}}(\vartheta_i)=\xi_i,\quad\vartheta_i\in\Theta_i,\quad
        \|\vartheta_i-\hat\vartheta_i\|_\infty\leq\Delta_{\mathrm{tr}},\\
        &\sum_{j\in\scriptN_i}f_{i\rightarrow j}
        =\frac{1}{N_a}(A_i\vartheta_i-\tau_i)
    \end{aligned}\right\},\qquad\forall i\in\scriptV.
\end{equation}
Collect all pairwise equality constraints \eqref{eq:node-pair-shooting} and \eqref{eq:node-zeta-consensus}--\eqref{eq:node-tau-consensus} in a single affine map as
\begin{equation}\label{eq:uav-coupling-map}
    \widehat H_{ij}(\nu_i,\nu_j)\triangleq
    \begin{bmatrix}
        \pi_{ij}^{\mathrm{cvx}}(\vartheta_i,\vartheta_j)-\zeta_{ij}^{i}\\
        Y_{ij}^{i}-Y_{ij}^{j}\\
        \zeta_{ij}^{i}-\zeta_{ij}^{j}\\
        f_{i\rightarrow j}+f_{j\rightarrow i}\\
        \tau_i-\tau_j
    \end{bmatrix},\qquad\forall i\in\scriptV,\quad\forall j\in\scriptN_i.
\end{equation}
Summing the local objectives and imposing all agent constraints gives
\begin{equation}\label{prob:uav-node-coupled}
    \begin{aligned}
        \min_{\{\nu_i\}_{i\in\scriptV}}\quad
        &\sum_{i\in\scriptV}\scriptJ_i(\nu_i),\\
        \mathrm{s.t.}\quad&\nu_i\in N_i,\quad\forall i\in\scriptV,\\
        &\widehat H_{ij}(\nu_i,\nu_j)=0,
        \quad\forall i\in\scriptV,\quad\forall j\in\scriptN_i.
    \end{aligned}
\end{equation}
To apply graph C-ADMM from Section~\ref{subsec:C-ADMM}, introduce at agent $i$ a copy $w_i\in\R^{d_{\nu,i}}$ of its own vector $\nu_i$ and a copy $w_{ij}\in\R^{d_{\nu,j}}$ of each neighbor's vector $\nu_j$.
Evaluate the coupling map on these copies and enforce agreement with the agent vectors:
\begin{subequations}\label{prob:uav-consensus-copies}
    \begin{empheq}[box=\fbox]{align}
        \min_{\substack{\{\nu_i,w_i\}_{i\in\scriptV}\\
                        \{w_{ij}\}_{i\in\scriptV,\,j\in\scriptN_i}}}\quad
        &\sum_{i\in\scriptV}\scriptJ_i(\nu_i),\\
        \mathrm{s.t.}\quad&\nu_i\in N_i,\quad\forall i\in\scriptV,\\
        &\widehat H_{ij}(w_i,w_{ij})=0,
        \quad\forall i\in\scriptV,\quad\forall j\in\scriptN_i,\\
        &\nu_i=w_i,\quad\nu_j=w_{ij},
        \quad\forall i\in\scriptV,\quad\forall j\in\scriptN_i.
    \end{empheq}
\end{subequations}
At copy agreement, substitution of $w_i=\nu_i$ and $w_{ij}=\nu_j$ recovers \eqref{prob:uav-node-coupled}.
Within each outer SCvx* iteration, graph C-ADMM updates the agent vectors, the local copies, and the dual variables through neighbor communication, as described in Section~\ref{subsec:C-ADMM}.
The resulting agent vectors supply the candidate update for the outer SCvx* reference.
\begin{rmk}[Computational complexity]
For fixed state and control dimensions and a fixed number of spectral modes, agent $i$'s primal vector has $d_{\nu,i}=O\!\left(N(1+|\scriptN_i|)\right)$ entries.
Thus, with a uniformly bounded number of neighbors, the local primal problem size grows linearly with the horizon and is independent of the total fleet size.
Total runtime also depends on the solver and ADMM iteration counts, the consensus updates, and communication.
Runtime can be improved by exploiting temporal sparsity and warm-starting successive solves~\citep{wang2010fastmpc}, while graph-based decomposition enables parallel local computations~\citep{boyd2011admm}.
\end{rmk}
\endgroup

\section{RFS Bayesian Inference}\label{sec:rfs-inference}
At any given time $t_k$, we use a Poisson multi-Bernoulli (PMB) RFS to approximate the posterior distribution of targets in the domain~\citep{garciafernandez2018pmbm,williams2015marginal}.
In particular, the PPP RFS is used to capture \textit{undetected} objects in the domain and is given by $\rv X^u \sim \mathrm{PPP}(\lambda)$.
Repeated sensing of a particular area without any evidence of a target would naturally decay the posterior PPP field and encourage the UGV to plan its resulting trajectory in those areas.
Similarly, the MB RFS is used to capture already \textit{detected} targets, and each detected track $\ell$ admits the Bernoulli RFS $\rv B_{\ell} \sim \mathrm{BernoulliRFS}(r_{\ell}, \rho_{\ell})$.
Here, we take the single-target state space to be $\scriptX_t=\R^2$ and specialize the conditional target belief to a full-space Gaussian, i.e., $\rv x_{\ell} \mid |\rv B_{\ell}| = 1 \sim \scriptN(\mu_{\ell}, P_{\ell})$.
{The initial PPP intensity is supported on $\Omega$, and $p^D$ is extended by zero outside $\Omega$.}
The existence probability $r_{\ell}$ quantifies whether the current hypothesis corresponds to a real target, while $\scriptN(\mu_{\ell},P_{\ell})$ quantifies where that target is, if it exists.
Hence, at time step $k$, the posterior RFS is approximated by
\begin{equation}
    \rv X_k = \rv X_k^u \sqcup \bigsqcup_{\ell=1}^{L_k} \rv B_{k,\ell},
\end{equation}
where $L_k \geq 0$ denotes the number of detected (potential) target tracks.

Given this ansatz of the PMB posterior form, let us walk through a cycle of the RFS filter, consisting of both a time update and measurement update, just as with a traditional Kalman filter.
Starting at sensing epoch $k$, the UAVs have access to the belief snapshot
\begin{equation}
    \scriptB_k = \left(\lambda_k, \{r_{k,\ell}, \mu_{k,\ell}, P_{k,\ell}\}_{\ell=1}^{L_k}\right).
\end{equation}
\begingroup\color{black}
\hypersetup{citecolor=black,linkcolor=black,urlcolor=black}
For prediction, assume independent target motion, unit survival probability, and no births.
Let $p_k^{\mathrm{tr}}(\xi\mid\xi')$ denote the transition density from position $\xi'$ at $t_{k-1}$ to position $\xi$ at $t_k$.
We assume this motion model applies to both detected and undetected targets.
The Bernoulli existence parameters remain $r_{k,\ell}^{-}=r_{k-1,\ell}^{+}$, while the target densities and undetected-target intensity propagate as~\citep{garciafernandez2018pmbm}
\begin{equation*}
    \begin{aligned}
        \rho_{k,\ell}^{-}(\xi)
        &=\int_{\scriptX_t}p_k^{\mathrm{tr}}(\xi\mid\xi')\rho_{k-1,\ell}^{+}(\xi')\,\d\xi',
        \quad\forall\ell\in\{1,\ldots,L_{k-1}\},\\
        \lambda_k^{-}(\xi)
        &=\int_{\scriptX_t}p_k^{\mathrm{tr}}(\xi\mid\xi')\lambda_{k-1}^{+}(\xi')\,\d\xi',
        \quad\forall\xi\in\scriptX_t.
    \end{aligned}
\end{equation*}
Let $\rv x_{k,\ell}\in\R^2$ denote the random position of target $\ell$ at $t_k$, conditional on its existence.
For an affine Gaussian motion model, take
\begin{equation*}
    \rv x_{k,\ell}=F_k^{\mathrm t}\rv x_{k-1,\ell}+b_k^{\mathrm t}+\eta_{k,\ell},
    \qquad\eta_{k,\ell}\sim\scriptN(0,Q_k),
\end{equation*}
where $F_k^{\mathrm t}\in\R^{2\times2}$ is the transition matrix, $b_k^{\mathrm t}\in\R^2$ is a known displacement, and the process noise is independent of the preceding target state, with $Q_k\succeq0$.
For $Q_k\succ0$, this model gives the transition density
\begin{equation*}
    p_k^{\mathrm{tr}}(\xi\mid\xi')
    =\scriptN\!\left(\xi;F_k^{\mathrm t}\xi'+b_k^{\mathrm t},Q_k\right),
    \qquad\xi,\xi'\in\scriptX_t.
\end{equation*}
Substituting this transition density into the target-density prediction gives
\begin{equation}\label{eq:target-motion-prediction}
    \begin{aligned}
        \mu_{k,\ell}^{-}&=F_k^{\mathrm t}\mu_{k-1,\ell}^{+}+b_k^{\mathrm t},\\
        P_{k,\ell}^{-}&=F_k^{\mathrm t}P_{k-1,\ell}^{+}(F_k^{\mathrm t})^\intercal+Q_k,
        \qquad\forall\ell\in\{1,\ldots,L_{k-1}\}.
    \end{aligned}
\end{equation}
\begin{rmk}[Special cases of target motion]
Setting $F_k^{\mathrm t}=I_2$ and $b_k^{\mathrm t}=0$ gives the random-walk model with covariance inflation $Q_k$.
If also $Q_k=0$, the targets are static, the transition is a point mass at $\xi'$, and both $\rho_{k,\ell}^{-}=\rho_{k-1,\ell}^{+}$ and $\lambda_k^{-}=\lambda_{k-1}^{+}$.
For nonlinear motion, the same prediction integrals apply, but retaining Gaussian target beliefs generally requires a moment approximation.
\end{rmk}
\endgroup

Given the sensor probability of detection along with its measurement model from Section~\ref{subsec:sensor-model}, we can naturally encompass both these aspects into a unified observation RFS.
To see this, note that an object measurement only happens \textit{conditional} on its detection.
Hence, this phenomenon is precisely described by a Bernoulli RFS; letting $\rv Y_{i,k}(\xi)$ denote the target-generated measurement RFS for a target at position $\xi \in \scriptX_t$ (with $p^D_{i,k}(\xi)=0$ outside $\Omega$), we have 
\begin{equation}
    \rv Y_{i,k}(\xi) = 
    \begin{cases}
        \varnothing, &\mathrm{w.p.}\ 1 - p^D_{i,k}(\xi), \\
        \{\rv Z\},\, \rv Z \sim g_{i,k}(\cdot \mid \xi) &\mathrm{w.p.}\ p^D_{i,k}(\xi),
    \end{cases}
\end{equation}
\begingroup\color{black}
\hypersetup{citecolor=black,linkcolor=black,urlcolor=black}
Here $g_{i,k}(z\mid\xi)$ is the measurement density conditional on detecting a target at $\xi$.
From the angles-only model \eqref{eq:meas-model}, it is
\begin{equation*}
    g_{i,k}(z\mid\xi)
    =\scriptN\!\left(z;h(\xi;x_i(t_k)),R(x_i(t_k)\mid\xi)\right),\qquad z\in\R^2.
\end{equation*}
The detection probability $p^D_{i,k}(\xi)$ is accounted for separately in the Bernoulli observation model above.
\endgroup
In vision-based sensors, it is also possible to obtain spurious measurements (e.g., false positives), which we call clutter, and we model this phenomenon as $\rv C_{i,k} \sim \mathrm{PPP}(\kappa_{i,k})$ in measurement space, with a finite intensity function $\kappa_{i,k}$.
Assuming independent clutter and target-generated measurement RFSs, the complete sensor set at sensing epoch $k$ is their RFS superposition
\begin{equation}\label{eq:measurement-rfs}
    \rv Z_{i,k} = \rv C_{i,k} \sqcup \bigsqcup_{\xi \in \rv X_k^-} \rv Y_{i,k}(\xi).
\end{equation}
Notice that due to this construction, \eqref{eq:measurement-rfs} is necessarily a random union; conditioned on one target-set realization $X = \{x_1,\ldots, x_n\}$, the random union contains exactly $n$ target-generated Bernoulli RFSs.
For clarity, the update below is written for one sensor and suppresses the indices $(i,k)$, so $p^D\equiv p^D_{i,k}$, $g\equiv g_{i,k}$, and $\kappa\equiv\kappa_{i,k}$; multiple conditionally independent sensors may be incorporated through sequential updates or an equivalent joint likelihood. Let the corresponding observed measurement set be $Z_k = \{z_1, \ldots, z_m\}$, write $m=m_k$, and let $L$ denote the number of predicted existing tracks at this update.
The local update construction consists of three ingredients, followed by association and marginalization.

\subsection{Posterior component updates}
First, we have to update the existing parameters of the PPP and MB RFSs with the information of the new measurement $Z_k$.
For the PPP, we perform an operation called ``thinning", whereby we subtract intensity from the PPP in exactly the places where detections did not occur.
Explicitly, define $q^D(\xi) \triangleq 1 - p^D(\xi)$ as the probability of \textit{not} detecting a target at $\xi\in\scriptX_t$.
A target remains in the never-detected population only if it is missed, hence the identity
\begin{equation}
    \lambda_k^+(\xi) = q^D(\xi) \lambda_k^-(\xi).
\end{equation}
This is the independent-thinning property of a PPP~\citep{lastpenrose2017poisson}, and an example is illustrated in Figure~\ref{fig:ppp-thinning} for a simple lawnmower sweep pattern over a uniform prior intensity $\lambda^- \sim U(\Omega)$.
\begin{figure}[H]
    \centering
    \includegraphics[width=\linewidth]{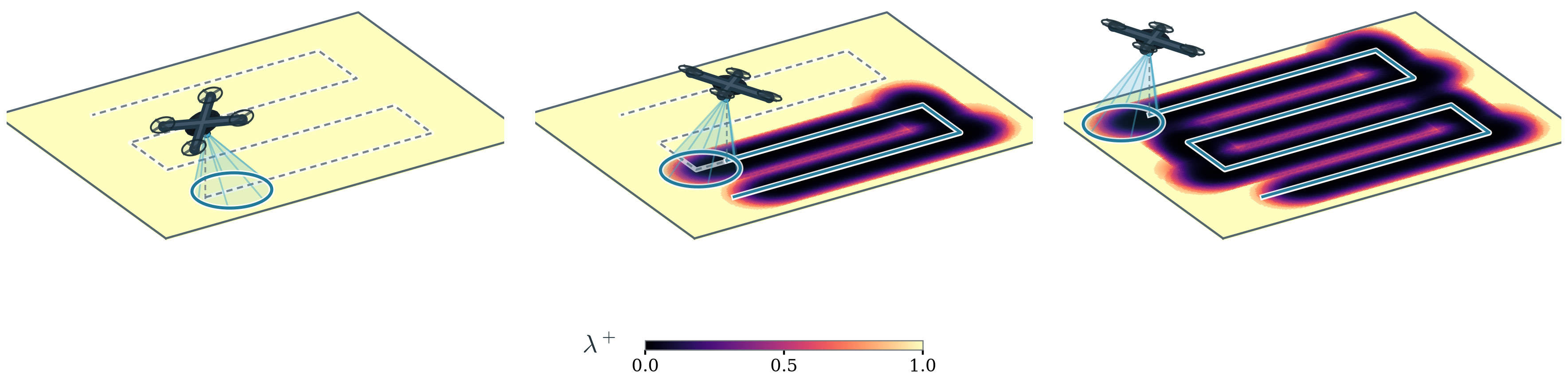}
    \caption{Undetected target intensity $\lambda^+(\xi)$ Poisson point process thinning during UAV execution and sensing.}
    \label{fig:ppp-thinning}
\end{figure}

\begingroup\color{black}
\hypersetup{citecolor=black,linkcolor=black,urlcolor=black}
For each existing track $\ell$, let $E_\ell$ be the event that the target exists and $D_\ell$ the event that it is detected by the current sensor.
Conditional on $E_\ell$, its position has density $\rho_\ell^-$, and $\P(E_\ell)=r_\ell^-$.
The detection model gives $\P(D_\ell\mid E_\ell,\rv x_\ell=\xi)=p^D(\xi)$.
All branch probabilities below are conditioned on the prior information available before this measurement update.

The local hypothesis $\mathcal A_{\ell,0}$ assigns no measurement to track $\ell$.
It is the disjoint union
\begin{equation*}
    \mathcal A_{\ell,0}=E_\ell^c\sqcup(E_\ell\cap D_\ell^c),
\end{equation*}
because the target is either absent or present but missed.

\sbox{\rfsBranchesImageBox}{\includegraphics[width=0.61\textwidth]{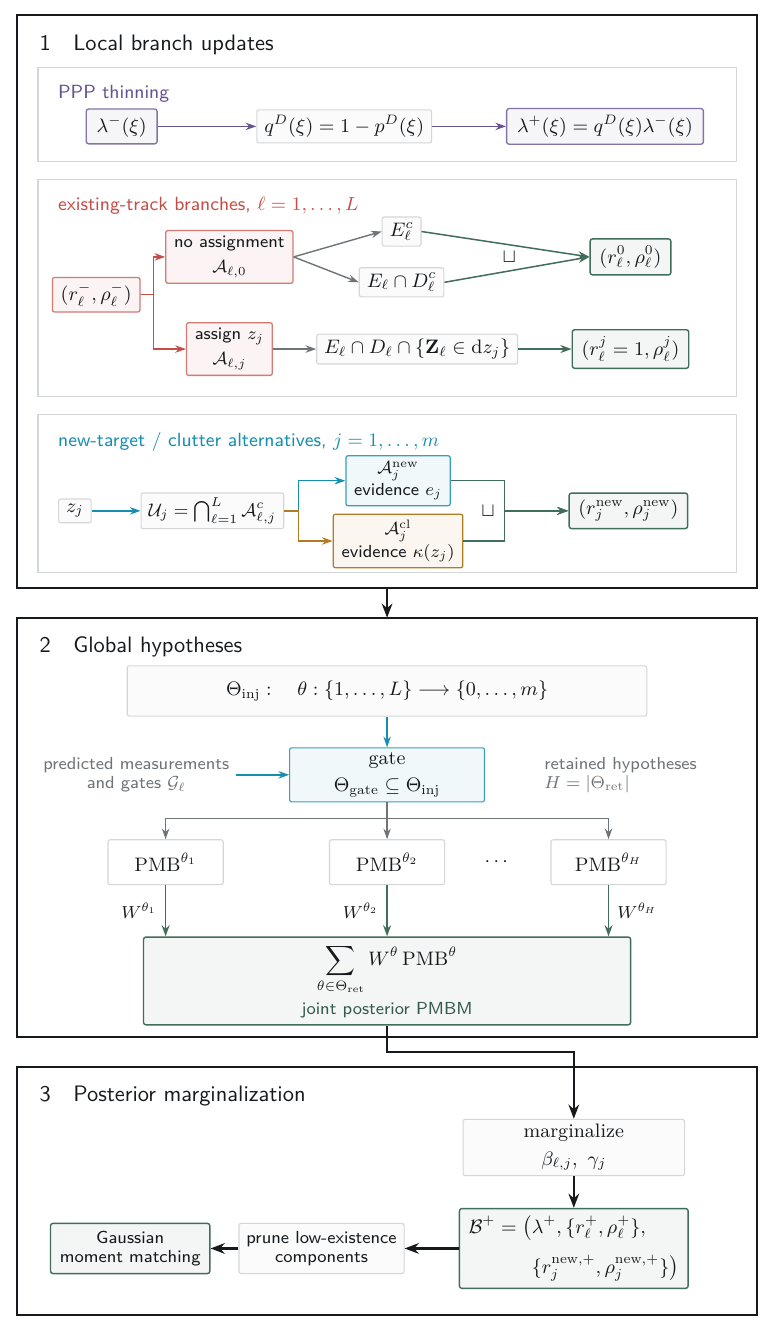}}
\Needspace{\dimexpr\ht\rfsBranchesImageBox+\dp\rfsBranchesImageBox+4\baselineskip\relax}
\noindent\begin{minipage}[t]{0.36\textwidth}
\vspace{0pt}
For $j\in\{1,\ldots,m\}$, the local hypothesis $\mathcal A_{\ell,j}$ assigns measurement $z_j$ to track $\ell$.
Let $\rv Z_\ell$ denote the target-generated measurement vector, defined when $E_\ell\cap D_\ell$ occurs.
In differential event notation, this hypothesis is
\begin{equation*}
    \mathcal A_{\ell,j}=E_\ell\cap D_\ell\cap\{\rv Z_\ell\in\d z_j\}.
\end{equation*}
The notation $\rv Z_\ell\in\d z_j$ represents a measurement in an infinitesimal neighborhood of $z_j$, so its evidence is a density with respect to $\d z_j$.
For an observed measurement $z_j$, let $\mathcal U_j$ be the hypothesis that no existing track generated it.
Let $\mathcal A_j^{\mathrm{new}}$ denote the hypothesis that it is the first detection of a previously undetected target, and let $\mathcal A_j^{\mathrm{cl}}$ denote the hypothesis that it is clutter.
These source hypotheses distinguish the two ways an unassigned measurement can arise.
Figure~\ref{fig:rfs-measurement-update-branches} summarizes the local event decompositions, global association, and posterior marginalization.
We next compute the evidence for each local hypothesis and apply Bayes' rule to obtain its posterior parameters.
\end{minipage}\hfill
\begin{minipage}[t]{0.61\textwidth}
    \vspace{0pt}\centering
    \includegraphics[width=\linewidth]{figures/illustrations/rfs-measurement-update-branches.pdf}
    \par
    \makeatletter
    \def\@captype{figure}
    \long\def\@makecaption#1#2{\vskip 4pt\nointerlineskip
        \vbox{\raggedright\noindent #1: #2\par}}
    \makeatother
    \caption{{RFS measurement update for one sensor.}}
    \label{fig:rfs-measurement-update-branches}
\end{minipage}
\par\medskip
The conditional probability of a missed detection is
\begin{equation}
    \begin{aligned}
        \bar q_\ell^D&=\P(D_\ell^c\mid E_\ell)\\
        &=\int_{\scriptX_t}q^D(\xi)\rho_\ell^-(\xi)\,\d\xi.
    \end{aligned}
\end{equation}
Adding the probabilities of the two disjoint events gives the branch evidence
\begin{equation}
    \begin{aligned}
        a_{\ell,0}&=\P(\mathcal A_{\ell,0})\\
        &=\P(E_\ell^c)+\P(E_\ell\cap D_\ell^c)\\
        &=(1-r_\ell^-)+r_\ell^-\bar q_\ell^D.
    \end{aligned}
\end{equation}
Since $E_\ell\cap\mathcal A_{\ell,0}=E_\ell\cap D_\ell^c$, the joint contribution from a present but missed target is
\begin{equation*}
    \P(\rv x_\ell\in\d\xi,D_\ell^c\mid E_\ell)
    =q^D(\xi)\rho_\ell^-(\xi)\,\d\xi.
\end{equation*}
Bayes' rule then gives
\begin{equation}\label{eq:no-assignment-track-update}
    r_\ell^0=\P(E_\ell\mid\mathcal A_{\ell,0})
    =\frac{r_\ell^-\bar q_\ell^D}{a_{\ell,0}},\qquad
    \rho_\ell^0(\xi)=\frac{q^D(\xi)\rho_\ell^-(\xi)}{\bar q_\ell^D}.
\end{equation}
Here $\rho_\ell^0$ is the position density conditional on $E_\ell\cap\mathcal A_{\ell,0}$.
Branches with zero evidence are omitted.
If $\bar q_\ell^D=0<a_{\ell,0}$, the branch has zero existence probability and does not require a conditional position density.

For the assignment hypothesis $\mathcal A_{\ell,j}$, the joint state and measurement contribution is
\begin{equation*}
    \P(\rv x_\ell\in\d\xi,\mathcal A_{\ell,j})
    =r_\ell^-\rho_\ell^-(\xi)p^D(\xi)g(z_j\mid\xi)\,\d\xi\,\d z_j.
\end{equation*}
Integrating out the target position gives $\P(\mathcal A_{\ell,j})=a_{\ell,j}\,\d z_j$, where
\begin{equation}
    L_{\ell j}=\int_{\scriptX_t}p^D(\xi)g(z_j\mid\xi)\rho_\ell^-(\xi)\,\d\xi,
    \qquad a_{\ell,j}=r_\ell^-L_{\ell j}.
\end{equation}
Since $\mathcal A_{\ell,j}\subseteq E_\ell$, the target exists under this hypothesis.
Dividing the joint contribution by the branch evidence gives
\begin{equation}\label{eq:assigned-track-update}
    r_\ell^j=\P(E_\ell\mid\mathcal A_{\ell,j})=1,\qquad
    \rho_\ell^j(\xi)=\frac{p^D(\xi)g(z_j\mid\xi)\rho_\ell^-(\xi)}{L_{\ell j}}.
\end{equation}
Thus, the posterior accounts for the intersection of target existence, detection, and the observed measurement.

The new-target and clutter hypotheses are disjoint and exhaust $\mathcal U_j$, so
\begin{equation*}
    \mathcal U_j=\bigcap_{\ell=1}^{L}\mathcal A_{\ell,j}^{c}
    =\mathcal A_j^{\mathrm{new}}\sqcup\mathcal A_j^{\mathrm{cl}},
    \qquad\mathcal A_j^{\mathrm{new}}\cap\mathcal A_j^{\mathrm{cl}}=\varnothing.
\end{equation*}
The undetected-target intensity $\lambda_k^-(\xi)$, detection probability $p^D(\xi)$, and measurement density $g(z_j\mid\xi)$ give the joint new-target evidence density
\begin{equation*}
    p^D(\xi)g(z_j\mid\xi)\lambda_k^-(\xi).
\end{equation*}
Integrating over the possible target positions gives the evidence for $\mathcal A_j^{\mathrm{new}}$,
\begin{equation}
    e_j=\int_{\scriptX_t}p^D(\xi)g(z_j\mid\xi)\lambda_k^-(\xi)\,\d\xi.
\end{equation}
The evidence for $\mathcal A_j^{\mathrm{cl}}$ is $\kappa(z_j)$.
By Poisson superposition, these source intensities add to give the local evidence $e_j+\kappa(z_j)$ for an unassigned measurement~\citep{garciafernandez2018pmbm}.
Bayes' rule therefore gives the source probabilities
\begin{equation*}
    \P(\mathcal A_j^{\mathrm{new}}\mid\mathcal U_j,z_j)
    =\frac{e_j}{e_j+\kappa(z_j)},\qquad
    \P(\mathcal A_j^{\mathrm{cl}}\mid\mathcal U_j,z_j)
    =\frac{\kappa(z_j)}{e_j+\kappa(z_j)}.
\end{equation*}
A new target exists only under $\mathcal A_j^{\mathrm{new}}$.
Its Bernoulli existence probability and its position density conditional on that hypothesis are therefore
\begin{equation}
    r_j^{\mathrm{new}}=\frac{e_j}{e_j+\kappa(z_j)},\qquad
    \rho_j^{\mathrm{new}}(\xi)=\frac{p^D(\xi)g(z_j\mid\xi)\lambda_k^-(\xi)}{e_j}.
\end{equation}
If $e_j=0<\kappa(z_j)$, only the clutter hypothesis remains and the conditional target density is immaterial.
If $e_j+\kappa(z_j)=0$, the unassigned-measurement branch is omitted.
Each measurement defines a candidate new Bernoulli component whose final existence probability also depends on whether the global association leaves it unassigned.
\endgroup

\begingroup\color{black}
\hypersetup{citecolor=black,linkcolor=black,urlcolor=black}
Before constructing global hypotheses, we discard track--measurement pairs that fail a statistical gate.
For a Gaussian track $\ell$, define the predicted measurement $\hat z_\ell$, measurement Jacobian $H_\ell$, and innovation covariance $S_\ell$ by
\begin{equation*}
    \hat z_\ell=h(\mu_\ell^-),\qquad
    H_\ell=\left.\frac{\partial h}{\partial\xi}\right|_{\mu_\ell^-},\qquad
    S_\ell=H_\ell P_\ell^-H_\ell^\intercal+R.
\end{equation*}
Here the sensor pose is fixed, and $R$ is evaluated at the predicted target position $\mu_\ell^-$.
Under the local Gaussian approximation, the squared normalized innovation
\begin{equation*}
    d_{\ell j}^2\triangleq(z_j-\hat z_\ell)^\intercal S_\ell^{-1}(z_j-\hat z_\ell)
\end{equation*}
is approximately $\chi^2_{n_z}$ distributed when $z_j$ is generated by track $\ell$, with $n_z=2$ for the two-angle measurement~\citep{barshalom2001estimation}.
For a gate probability $1-\alpha_{\mathrm{gate}}$, retain the measurement indices
\begin{equation*}
    \mathcal G_\ell\triangleq\bigl\{j\in\{1,\ldots,m\}:d_{\ell j}^2\leq\chi_{n_z}^2(1-\alpha_{\mathrm{gate}})\bigr\}.
\end{equation*}
The threshold $\chi_{n_z}^2(1-\alpha_{\mathrm{gate}})$ is the corresponding chi-square quantile.
Each track retains $\mathcal A_{\ell,0}$ and the assignment hypotheses $\mathcal A_{\ell,j}$ with $j\in\mathcal G_\ell$.
\endgroup

\subsection{Global Association Hypotheses}

A global hypothesis is a map 
\begin{equation*}
    \theta: \{1,\ldots, L\} \rightarrow \{0,1,\ldots, m\},
\end{equation*}
{which selects $\mathcal A_{\ell,0}$ when $\theta(\ell)=0$ and $\mathcal A_{\ell,j}$ when $\theta(\ell)=j$.}
A structurally admissible map has positive assignments that are \textit{injective}%
\footnote{Formally, if $\ell_1\neq\ell_2$ and $\theta(\ell_1),\theta(\ell_2)>0$, then $\theta(\ell_1)\neq\theta(\ell_2)$.}, meaning a measurement $z_j$ cannot be assigned to two existing tracks simultaneously.
{\color{black}
Let $\Theta_{\mathrm{inj}}$ denote the set of all such positive-injective maps and define its statistically gated subset as
\begin{equation*}
    \Theta_{\mathrm{gate}}\triangleq
    \bigl\{\theta\in\Theta_{\mathrm{inj}}:\theta(\ell)\in\{0\}\cup\mathcal G_\ell,\quad\forall\ell\in\{1,\ldots,L\}\bigr\}.
\end{equation*}
If computational truncation is required, let $\Theta_{\mathrm{ret}}\subseteq\Theta_{\mathrm{gate}}$ be the retained bounded set of high-weight hypotheses; otherwise, set $\Theta_{\mathrm{ret}}=\Theta_{\mathrm{gate}}$.
}
Define $U(\theta) \triangleq \{1,\ldots, m\} \setminus \{\theta(\ell) : \theta(\ell) > 0\}$ as the subset of measurements unassigned to existing tracks.
{For a fixed association $\theta$, independence of the prior components gives the product of the selected local evidences.
The common Poisson factor cancels when these weights are normalized, giving}
{\color{black}
\begin{equation}
    \begin{aligned}
        \tilde{W}^{\theta}
        &=\prod_{\ell=1}^{L}a_{\ell,\theta(\ell)}
        \prod_{j\in U(\theta)}\bigl(e_j+\kappa(z_j)\bigr),\\[3pt]
        W^\theta
        &=\frac{\tilde W^\theta}{\sum_{\theta'\in\Theta_{\mathrm{ret}}}\tilde W^{\theta'}},
        \qquad\theta\in\Theta_{\mathrm{ret}}.
    \end{aligned}
\end{equation}
}
{\color{black}
When $\Theta_{\mathrm{ret}}\subsetneq\Theta_{\mathrm{gate}}$, the normalized weights are conditional on the retained set.
Each retained association hypothesis $\theta$ defines one PMB posterior RFS; their weighted sum is generally a PMB \textit{mixture} (PMBM).
The final step marginalizes this retained PMBM to obtain the PMB posterior approximation.
}

\subsection{Marginalization}
For each existing track $\ell=1,\ldots, L$, define the weights
{\color{black}
\begin{equation*}
    \beta_{\ell,0} = \sum_{\substack{\theta\in\Theta_{\mathrm{ret}}:\, \theta(\ell) = 0}} W^{\theta}, \quad \beta_{\ell,j} = \sum_{\substack{\theta\in\Theta_{\mathrm{ret}}:\, \theta(\ell) = j}} W^{\theta},
\end{equation*}
}
as the aggregate weights corresponding to no measurement associations, and measurement-$j$ associations.
For each existing track, $\beta_{\ell,0}+\sum_{j=1}^{m}\beta_{\ell,j}=1$.
Likewise, for each measurement $j=1,\ldots,m$, define 
{\color{black}
\begin{equation*}
    \gamma_j = \sum_{\substack{\theta\in\Theta_{\mathrm{ret}}:\, j \in U(\theta)}} W^{\theta},
\end{equation*}
}
as the probability that it is unused by all existing tracks.
The next result gives the posterior marginalized PMB approximation of the resulting PMBM.
\begin{prop}
    The marginalized posterior MB RFS of existing and new tracks is
    \begin{align}
        r_{\ell}^+ &= \beta_{\ell,0} r_{\ell}^{0} + \sum_{j=1}^{m} \beta_{\ell,j}, \quad r_j^{\mathrm{new},+} = \gamma_j \frac{e_j}{e_j + \kappa(z_j)}, \label{eq:marginal-existence}\\
        \rho_{\ell}^+(\xi) &= \frac{\beta_{\ell,0}r_{\ell}^{0}\rho_{\ell}^0(\xi)+\sum_{j=1}^{m} \beta_{\ell,j} \rho_{\ell}^{j}(\xi)}{r_{\ell}^{+}}, \quad \rho_j^{\mathrm{new},+} = \rho_j^{\mathrm{new}}.\label{eq:marginal-position-density}
    \end{align}
\end{prop}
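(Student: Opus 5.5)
The plan is to treat the retained posterior as the PMBM $\sum_{\theta\in\Theta_{\mathrm{ret}}}W^\theta\,\mathrm{PMB}^\theta$ and to marginalize it Bernoulli component by Bernoulli component. For each $\theta$, the PPP part is the same thinned intensity $\lambda_k^+=q^D\lambda_k^-$, so only the Bernoulli parts need work. I will fix a component label and replace its mixture over $\theta$ by the single Bernoulli RFS whose existence probability and first-order (intensity) density match those of the mixture. This is the marginal association construction of~\citet{williams2015marginal}. The justification is that this matching preserves the probability hypothesis density of every component. With the components kept independent, it therefore preserves the posterior PHD of the whole multi-object belief.

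\emph{Existing track $\ell$.} Under hypothesis $\theta$, the track is the Bernoulli RFS $(r_\ell^{\theta(\ell)},\rho_\ell^{\theta(\ell)})$ given by \eqref{eq:no-assignment-track-update} or \eqref{eq:assigned-track-update}. This Bernoulli RFS depends on $\theta$ only through $\theta(\ell)$. Its PHD is $r_\ell^{\theta(\ell)}\rho_\ell^{\theta(\ell)}(\xi)$. Averaging over $\theta$ with weights $W^\theta$ and grouping the hypotheses by the value $\theta(\ell)=j$ turns the average into sums with the weights $\beta_{\ell,j}$. This gives the mixture PHD
\[
\beta_{\ell,0}r_\ell^0\rho_\ell^0(\xi)+\sum_{j}\beta_{\ell,j}\rho_\ell^j(\xi),
\]
where I have used $r_\ell^j=1$ for $j\geq1$. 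Integrating over $\xi$ gives the marginal existence probability $r_\ell^+$ in \eqref{eq:marginal-existence}. The density $\rho_\ell^+$ in \eqref{eq:marginal-position-density} is the PHD divided by $r_\ell^+$. This ratio is a probability density whenever $r_\ell^+>0$, because each $\rho_\ell^j$ integrates to one.

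\emph{New components.} For measurement $j$, the new Bernoulli under $\theta$ exists with probability $r_j^{\mathrm{new}}$ and has density $\rho_j^{\mathrm{new}}$ if $j\in U(\theta)$. Otherwise the measurement is explained by an existing track, and the component is empty with existence probability $0$. The mixture existence is therefore $\sum_{\theta:j\in U(\theta)}W^\theta r_j^{\mathrm{new}}=\gamma_j r_j^{\mathrm{new}}$. The conditional density is unchanged, because only one nonempty branch contributes to it.

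The main obstacle is making precise what "marginalized PMB approximation" means, since the components are correlated across $\theta$. The approximation step is to discard these correlations. Each marginal Bernoulli is exact as a marginal of the mixture, and the resulting PMB matches the PHD of the PMBM. I will state this matching property explicitly instead of claiming exactness. I will also check the normalization $\beta_{\ell,0}+\sum_j\beta_{\ell,j}=1$, which ensures that $r_\ell^+\in[0,1]$.
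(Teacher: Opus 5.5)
Your proposal is correct and follows essentially the same argument as the paper. Both average the per-hypothesis quantities $r_\ell^{\theta(\ell)}$ and $r_\ell^{\theta(\ell)}\rho_\ell^{\theta(\ell)}(\xi)$ over $W^\theta$, grouped by $\theta(\ell)$ and using $r_\ell^j=1$; both obtain the new components through $\gamma_j$ and then discard the association dependence to get the MB approximation. The only difference is wording: you call the matched quantity the PHD, while the paper calls it the joint existence-and-location density and applies the law of total probability.
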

\begingroup\color{black}
\hypersetup{citecolor=black,linkcolor=black,urlcolor=black}
\begin{proof}
All probabilities in this proof refer to the posterior mixture over the retained hypotheses, whose weights satisfy $\sum_{\theta\in\Theta_{\mathrm{ret}}}W^\theta=1$.
Under a fixed hypothesis $\theta$, existing track $\ell$ has the Bernoulli parameters $(r_\ell^{\theta(\ell)},\rho_\ell^{\theta(\ell)})$.
The law of total probability therefore gives
\begin{equation*}
    r_\ell^+=\sum_{\theta\in\Theta_{\mathrm{ret}}}W^\theta r_\ell^{\theta(\ell)}
    =\beta_{\ell,0}r_\ell^0+\sum_{j=1}^{m}\beta_{\ell,j},
\end{equation*}
where we grouped hypotheses by $\theta(\ell)$ and used $r_\ell^j=1$ from \eqref{eq:assigned-track-update} for every retained assignment branch.
This proves the existing-track existence formula in \eqref{eq:marginal-existence}.

For a Bernoulli component, $r\rho(\xi)\,\d\xi$ is the joint probability that the target exists and lies in $\d\xi$.
A branch with zero existence probability contributes zero to this joint density.
Averaging this joint probability over the same hypotheses gives
\begin{equation*}
    \begin{aligned}
        r_\ell^+\rho_\ell^+(\xi)
        &=\sum_{\theta\in\Theta_{\mathrm{ret}}}W^\theta
          r_\ell^{\theta(\ell)}\rho_\ell^{\theta(\ell)}(\xi)\\
        &=\beta_{\ell,0}r_\ell^0\rho_\ell^0(\xi)
          +\sum_{j=1}^{m}\beta_{\ell,j}\rho_\ell^j(\xi).
    \end{aligned}
\end{equation*}
When $r_\ell^+>0$, division by $r_\ell^+$ gives the existing-track density in \eqref{eq:marginal-position-density}.
The remaining probability is $1-r_\ell^+$ and corresponds to the empty set, so this marginal is a Bernoulli RFS.

Consider a candidate new component with $e_j+\kappa(z_j)>0$; candidates with zero local evidence are omitted as specified in the local update.
The component associated with $z_j$ is present only in hypotheses with $j\in U(\theta)$.
In each such hypothesis it has the same parameters $(r_j^{\mathrm{new}},\rho_j^{\mathrm{new}})$, and in every other hypothesis it is empty.
Consequently,
\begin{equation*}
    \begin{aligned}
        r_j^{\mathrm{new},+}
        &=\sum_{\substack{\theta\in\Theta_{\mathrm{ret}}:\,j\in U(\theta)}}
          W^\theta r_j^{\mathrm{new}}
          =\gamma_j r_j^{\mathrm{new}},\\
        r_j^{\mathrm{new},+}\rho_j^{\mathrm{new},+}(\xi)
        &=\gamma_j r_j^{\mathrm{new}}\rho_j^{\mathrm{new}}(\xi).
    \end{aligned}
\end{equation*}
Substituting $r_j^{\mathrm{new}}=e_j/(e_j+\kappa(z_j))$ proves the new-track existence formula in \eqref{eq:marginal-existence}.
For $r_j^{\mathrm{new},+}>0$, division in the second identity gives $\rho_j^{\mathrm{new},+}=\rho_j^{\mathrm{new}}$, as stated in \eqref{eq:marginal-position-density}.
Components with zero existence probability are empty almost surely and need no conditional position density.
Taking the resulting Bernoulli marginals to be independent gives the stated MB approximation, preserving the individual component marginals while discarding their association dependence.
\end{proof}
\endgroup
If $r_\ell^+=0$, the corresponding Bernoulli component is omitted and its conditional density need not be defined.
\begin{cor}
    Assume $r_\ell^+>0$. Approximate each retained local branch with a Gaussian $(\mu_{\ell,b}, P_{\ell,b})$, \textcolor{black}{set the weights of branches absent from the retained hypotheses to zero}, and define the existence-conditioned branch weights
    \begin{equation*}
        \alpha_{\ell,0}=\frac{\beta_{\ell,0}r_\ell^0}{r_\ell^+}, \qquad \alpha_{\ell,j}=\frac{\beta_{\ell,j}}{r_\ell^+}, \quad j=1,\ldots,m.
    \end{equation*}
    The posterior Gaussian belief state under moment matching is
    \begin{align}
        \mu_{\ell}^+ &= \sum_{b\in\{0,1,\ldots,m\}} \alpha_{\ell,b} \mu_{\ell,b}, \label{eq:matched-track-mean}\\
        P_{\ell}^+ &= \sum_{b\in\{0,1,\ldots,m\}} \alpha_{\ell, b} \left[ P_{\ell,b} + (\mu_{\ell,b} - \mu_{\ell}^{+})(\mu_{\ell,b} - \mu_{\ell}^{+})^\intercal \right].\label{eq:matched-track-covariance}
    \end{align}
\end{cor}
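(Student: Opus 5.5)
The plan is to start from the marginal density identity in the preceding proposition, namely that $\rho_\ell^+$ is the finite mixture
\begin{equation*}
    \rho_\ell^+(\xi)=\sum_{b\in\{0,1,\ldots,m\}}\alpha_{\ell,b}\,\rho_\ell^b(\xi),
\end{equation*}
with $\alpha_{\ell,b}$ exactly as defined in the corollary. This follows by dividing \eqref{eq:marginal-position-density} by $r_\ell^+>0$. Branches absent from $\Theta_{\mathrm{ret}}$ have $\beta_{\ell,b}=0$, so setting their weights to zero changes nothing.

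The first step is to check that the $\alpha_{\ell,b}$ form a probability vector. Nonnegativity is immediate from $\beta_{\ell,b}\ge0$ and $r_\ell^0\ge0$. Summing gives $(\beta_{\ell,0}r_\ell^0+\sum_j\beta_{\ell,j})/r_\ell^+=1$ by \eqref{eq:marginal-existence}. So $\rho_\ell^+$ is a genuine mixture density.

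Next, each branch density $\rho_\ell^b$ is replaced by its Gaussian approximation $\scriptN(\mu_{\ell,b},P_{\ell,b})$, for example from an EKF or UKF update of \eqref{eq:no-assignment-track-update} or \eqref{eq:assigned-track-update}. This yields a Gaussian mixture. Moment matching means choosing the single Gaussian whose first two moments equal those of the mixture; equivalently, it is the minimizer of $\mathrm{KL}(\text{mixture}\,\|\,\scriptN(\mu,P))$ over $(\mu,P)$. I would note this KL characterization but rely on the direct moment computation.

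The mean follows from linearity of expectation: $\E[\rv x]=\sum_b\alpha_{\ell,b}\mu_{\ell,b}$, which is \eqref{eq:matched-track-mean}. For the covariance I would apply the law of total covariance with the branch label as the conditioning variable, or equivalently expand $\E[(\rv x-\mu_\ell^+)(\rv x-\mu_\ell^+)^\intercal]$ branch by branch. Writing $\rv x-\mu_\ell^+=(\rv x-\mu_{\ell,b})+(\mu_{\ell,b}-\mu_\ell^+)$ within branch $b$, the cross terms vanish because each branch has mean $\mu_{\ell,b}$. This leaves $P_{\ell,b}+(\mu_{\ell,b}-\mu_\ell^+)(\mu_{\ell,b}-\mu_\ell^+)^\intercal$, which gives \eqref{eq:matched-track-covariance}.

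The only delicate point is bookkeeping rather than analysis. The weights must include the factor $r_\ell^0$ on the missed-detection branch but not on the assignment branches, since $r_\ell^j=1$. Zero-existence branches must be handled so that their undefined densities enter only with weight zero. I would also remark that $P_\ell^+\succeq\sum_b\alpha_{\ell,b}P_{\ell,b}$, so positive definiteness is inherited whenever the branch covariances are positive definite.
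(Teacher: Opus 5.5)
Your proposal is correct and follows the paper's proof essentially step for step. Both arguments normalize the branch weights by $r_\ell^+$, check via \eqref{eq:marginal-existence} that they sum to one, substitute the Gaussian branch approximations into the mixture, get the mean by linearity, and get the covariance by splitting $\rv x_\ell-\mu_\ell^+$ within each branch so the cross terms vanish. Your extra remarks (the KL characterization of moment matching and $P_\ell^+\succeq\sum_b\alpha_{\ell,b}P_{\ell,b}$) are correct but not needed for the statement.
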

\begingroup\color{black}
\hypersetup{citecolor=black,linkcolor=black,urlcolor=black}
\begin{proof}
Conditioning the branch probabilities on target existence gives the weights $\alpha_{\ell,b}$ by Bayes' rule.
They are nonnegative, and \eqref{eq:marginal-existence} gives
\begin{equation*}
    \sum_{b=0}^{m}\alpha_{\ell,b}
    =\frac{\beta_{\ell,0}r_\ell^0+\sum_{j=1}^{m}\beta_{\ell,j}}{r_\ell^+}=1.
\end{equation*}
Terms with zero weight are omitted from the following sums, so those branches require no moments.
Substituting the Gaussian branch approximations into \eqref{eq:marginal-position-density} therefore gives the conditional Gaussian mixture
\begin{equation*}
    \rho_\ell^+(\xi)\approx\sum_{b=0}^{m}\alpha_{\ell,b}
    \scriptN(\xi;\mu_{\ell,b},P_{\ell,b}).
\end{equation*}
Integrating $\xi$ against this mixture gives
\begin{equation*}
    \mu_\ell^+=\sum_{b=0}^{m}\alpha_{\ell,b}
    \int_{\scriptX_t}\xi\,\scriptN(\xi;\mu_{\ell,b},P_{\ell,b})\,\d\xi
    =\sum_{b=0}^{m}\alpha_{\ell,b}\mu_{\ell,b},
\end{equation*}
which is \eqref{eq:matched-track-mean}.

Within branch $b$, write $\rv x_\ell-\mu_\ell^+=(\rv x_\ell-\mu_{\ell,b})+(\mu_{\ell,b}-\mu_\ell^+)$.
The first term has zero conditional mean, so the cross terms vanish when the centered outer product is averaged within that branch.
Thus its contribution to the covariance about $\mu_\ell^+$ is
\begin{equation*}
    \int_{\scriptX_t}(\xi-\mu_\ell^+)(\xi-\mu_\ell^+)^\intercal
    \scriptN(\xi;\mu_{\ell,b},P_{\ell,b})\,\d\xi
    =P_{\ell,b}+(\mu_{\ell,b}-\mu_\ell^+)(\mu_{\ell,b}-\mu_\ell^+)^\intercal.
\end{equation*}
Averaging these contributions with weights $\alpha_{\ell,b}$ gives \eqref{eq:matched-track-covariance}.
The Gaussian $\scriptN(\mu_\ell^+,P_\ell^+)$ therefore matches the first two moments of the branch mixture.
\end{proof}
\endgroup
\josh{}

In practice, a small $r_j^{\mathrm{new},+}$ indicates clutter or a measurement likely used by an existing track, and such components are pruned and discarded from the set.
Similarly, a decaying $r_{\ell}^{+}$ indicates a confirmed track that most likely does not exist, and is also pruned below a configurable threshold.
In summary, the output target distribution arising from each measurement event is a new PMB posterior given by
\begin{equation}
    \rv X_k^+ = \rv X_k^{u,+} \sqcup \bigsqcup_{\ell=1}^{L_k^+} \hat{\rv B}_{k,\ell}^+ \sqcup \bigsqcup_{j=1}^{m_k^+} \hat{\rv B}_{k,j}^{\mathrm{new},+},
\end{equation}
where $L_k^+ \leq L_{k-1}$ denotes the remaining existing Bernoulli tracks, and $m_k^+ \leq m_k$ denotes the retained new Bernoulli tracks.
The above approximation preserves each Bernoulli first-order marginal of the retained PMBM approximation but replaces the global association dependence with independence among the output Bernoulli RFSs~\citep{williams2015marginal}.

\section{UGV Risk-Aware Planning}~\label{sec:UGV-traj-opt}
The ground vehicles are tasked with safely navigating an unknown planar environment filled with targets.
At each planning instant, the UGV fleet receives the latest posterior RFS $\rv X_k$, containing the distribution of the already-detected target tracks, in the form of an MB RFS $\sqcup_{j=1}^{L_k} \rv B_{k,j},\, \rv B_{k,j} \sim \mathrm{BernoulliRFS}(r_{k,j}, \rho_{k,j})$ with Gaussian belief states $\rho_{k,j}(\xi) = \scriptN(\xi; \mu_{k,j}, P_{k,j})$, along with the posterior distribution of undetected targets, in the form of a PPP $\rv X_k^u \sim \mathrm{PPP}(\lambda_k)$ with associated intensity function $\lambda_k(\xi),\, \xi \in \Omega$.
Given this information, the UGV team's responsibilities are three-fold: (i) to safely plan around the detected threats; (ii) to enforce inter-agent collision avoidance constraints with neighbors, and (iii) to navigate through regions of the domain with low residual threat of unknown targets.
In the following, we rigorously formulate this risk-sensitive joint trajectory optimization problem and subsequently present a fully decentralized method akin to that of the UAV team in Section~\ref{sec:UAV-traj-opt}, based on SCP and C-ADMM, to seek candidate local solutions.

\subsection{UGV Problem Formulation}\label{subsec:ugv-problem}
At each planning epoch, consider the graph $\scriptG = (\scriptV, \scriptE)$ composed of UGV nodes and connected by edges if agent $i$ and agent $j$ can communicate with each other.
As before, we assume $\scriptG$ is connected, and that the communication radius $\Delta_{\mathrm{com}} > 0$ is large enough so that agents detect each other well before the need for coordination.
For each UGV $j = 1,\ldots,N_g$, we use an acceleration-unicycle model, defined by the state $x_j \triangleq [p_j^\intercal, \psi_j, v_j]^\intercal \in \R^{4}$, control input $u_j = [a_j, \omega_j]^\intercal\in\R^{2}$, and dynamics
\begingroup\color{black}
\hypersetup{citecolor=black,linkcolor=black,urlcolor=black}
\begin{equation}\label{eq:ugv-dynamics}
    \dot x_j=f_j^g(x_j,u_j)\triangleq
    \begin{bmatrix}v_j\cos\psi_j\\v_j\sin\psi_j\\\omega_j\\a_j\end{bmatrix},
    \qquad j\in\scriptV,
\end{equation}
\endgroup

over a planning horizon $t \in [0,T]$.
{Here $p_j\in\R^2$ is position, $\psi_j$ is heading, $v_j$ is speed, and $a_j$ and $\omega_j$ are the commanded acceleration and turn rate.}
We constrain the velocities and accelerations of the vehicle within some maximum bounds, and further define an equivalent physical UGV radius $R_j^g > 0$ for all $j =1,\ldots,N_g$, which must stay in the search domain $\Omega$.
Assuming $\Omega$ is convex, aggregating all of these convex constraints gives the constraint set
\begin{equation}
    \Xi_j \triangleq \left\{(x,u)\in\R^4\times\R^{2} :
    \begin{aligned}
        &0 \leq v \leq v_{j,\max},\quad |a| \leq a_{j,\max},\\
        &|\omega| \leq \omega_{j,\max},\quad p \in \Omega \ominus \B(0,R_j^g)
    \end{aligned}\right\}.
\end{equation}
For brevity, we will also sometimes use the shorthand $\Omega_j \triangleq \Omega \ominus\B(0,R_j^g)$ as the domain containment set for the UGV's position, and write $\Xi_j^x\triangleq\operatorname{proj}_x\Xi_j$ for the admissible-state projection.

\paragraph{Threat avoidance}
Since each detected target track is a Gaussian belief, we can consider the associated confidence set as in the following proposition; confidence-set approximations of this kind are standard in chance constrained obstacle avoidance~\citep{blackmore2011chance}.
\begin{prop}
    Given $\rv\xi\sim\scriptN(\mu,\Sigma)$ with mean $\mu\in\R^{d}$, covariance matrix $\Sigma \succ 0$, and $\delta\in(0,1)$, let $\chi_d^2(1-\delta)$ denote the $(1-\delta)$ quantile of a chi-squared random variable with $d$ degrees of freedom.
    The associated uncertainty set
    \begin{equation}
        \scriptC_{\delta} \triangleq \left\{\xi \in \R^d: (\xi-\mu)^\intercal \Sigma^{-1} (\xi - \mu) \leq \chi_{d}^2(1-\delta)\right\},
    \end{equation}
    contains exactly probability $1-\delta$, that is, $\P(\rv\xi\in\scriptC_{\delta}) = 1 - \delta$.
    We refer to $\scriptC_{\delta}$ as the $1-\delta$ confidence set of $\scriptN(\mu,\Sigma)$.
\end{prop}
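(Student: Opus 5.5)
The plan is to reduce the statement to the distribution of the squared Mahalanobis distance by whitening. Since $\Sigma\succ0$, it has a symmetric positive definite square root $\Sigma^{1/2}$ (from the spectral decomposition $\Sigma=U\operatorname{diag}(s_1,\ldots,s_d)U^\intercal$ with $s_i>0$). Define the whitened vector $\rv w\triangleq\Sigma^{-1/2}(\rv\xi-\mu)$. As an affine image of a Gaussian, $\rv w$ is Gaussian with mean $0$ and covariance $\Sigma^{-1/2}\Sigma\Sigma^{-1/2}=I_d$. Hence its components $\rv w_1,\ldots,\rv w_d$ are i.i.d.\ standard normal; uncorrelated jointly Gaussian variables are independent.

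Next, rewrite the quadratic form defining $\scriptC_\delta$ as
\begin{equation*}
(\rv\xi-\mu)^\intercal\Sigma^{-1}(\rv\xi-\mu)=\|\rv w\|_2^2=\sum_{i=1}^d\rv w_i^2 .
\end{equation*}
By definition of the chi-squared law with $d$ degrees of freedom, the right-hand side is $\chi^2_d$-distributed. Therefore
\begin{equation*}
\P(\rv\xi\in\scriptC_\delta)=\P\bigl(\|\rv w\|^2\le\chi^2_d(1-\delta)\bigr)=F_{\chi^2_d}\bigl(\chi^2_d(1-\delta)\bigr),
\end{equation*}
where $F_{\chi^2_d}$ is the chi-squared cumulative distribution function.

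The only point needing care is the word \emph{exactly}, that is, showing that the CDF evaluated at the quantile returns $1-\delta$ rather than merely at least $1-\delta$. I will handle this by noting that the $\chi^2_d$ law has a density $x^{d/2-1}e^{-x/2}/(2^{d/2}\Gamma(d/2))$ that is positive on $(0,\infty)$. Its CDF is therefore continuous and strictly increasing on $(0,\infty)$, so for $\delta\in(0,1)$ the quantile $\chi^2_d(1-\delta)$ is the unique point where the CDF equals $1-\delta$. There is no atom at the threshold, so the boundary of $\scriptC_\delta$ carries zero mass and the inequality in the definition can be $\le$ or $<$ without changing the probability. This gives $\P(\rv\xi\in\scriptC_\delta)=1-\delta$.
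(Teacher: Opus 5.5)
Your proof is correct and complete. Whitening with $\Sigma^{-1/2}$ turns the Mahalanobis form into a sum of $d$ squared i.i.d.\ standard normals. Continuity of the $\chi^2_d$ CDF (no atom at the threshold, so the boundary ellipsoid carries zero mass) is exactly what justifies the word ``exactly'' in the statement.

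The paper gives no proof of this proposition. It treats the result as the standard fact underlying confidence-set chance constraints and points to the chance-constrained path-planning literature. Your argument is the standard derivation that fills this in.
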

For each threat $\ell = 1,\ldots, L_k$, conditional on its existence, we construct the associated confidence set $\scriptC_{\delta_\ell}$.
{Let $d_{j\ell}^{\mathrm{clr}}=R_j^g+R_\ell^{\mathrm t}+\Delta_{j\ell}^{\mathrm t}>0$, where $R_\ell^{\mathrm t}\geq0$ is the target radius and $\Delta_{j\ell}^{\mathrm t}\geq0$ is an additional clearance margin.}
We then aim to enforce the continuous-time constraints
\begin{equation}
    p_j(t) \notin \bigcup_{\ell=1}^{L_k}\left(\scriptC_{\delta_{\ell}}\oplus\B(0,d_{j\ell}^{\mathrm{clr}})\right), \quad j \in \scriptV,\, t \in [0,T].
\end{equation}
{A simultaneous confidence budget $\epsilon_{\mathrm{th}}\in(0,1)$ can be allocated by requiring $\sum_{\ell=1}^{L_k}r_{k,\ell}\delta_\ell\leq\epsilon_{\mathrm{th}}$.
By the union bound, the probability that any existing track lies outside its confidence set is then at most $\epsilon_{\mathrm{th}}$.}

\paragraph{Collision avoidance}
{\color{black}
As for the UAVs, we impose geometric separation between UGV pairs:
\begin{equation*}
    \|p_i(t)-p_j(t)\|_2\geq d_{ij},
    \qquad \{i,j\}\in\scriptE,
    \qquad t\in[0,T],
\end{equation*}
where $d_{ij}=R_i^g+R_j^g+\Delta>0$ includes the UGV safety radii and an additional margin.
The planning graph must again include every pair that could violate separation during the horizon.
We use the shared isoperimetric construction in Section~\ref{subsec:ctcs-separation} with $s_i=p_i$.
With $\Pi_p=[I_2\ \ 0_{2\times2}]$, define
\begin{equation*}
    g_{ij}^g(x_i,x_j)\triangleq1-\frac{\|\Pi_p x_i-\Pi_p x_j\|_2^2}{d_{ij}^2},\qquad
    g_{c,ij}^g(x_i,x_j)\triangleq[g_{ij}^g(x_i,x_j)]_+^2.
\end{equation*}
The separation constraints are equivalently represented by
\begin{equation}\label{eq:ugv-pair-accumulator}
    \dot y_{ij}=g_{c,ij}^g(x_i,x_j),\qquad y_{ij}(0)=y_{ij}(T)=0,
    \qquad\forall\{i,j\}\in\scriptE.
\end{equation}
Stacking the physical states and pair accumulators gives
\begin{equation*}
    z^g\triangleq\operatorname{col}(\{x_i\}_{i\in\scriptV},\{y_{ij}\}_{\{i,j\}\in\scriptE})
    \in\R^{d_g},\qquad d_g=4N_g+|\scriptE|.
\end{equation*}
For $u=\operatorname{col}(\{u_i\}_i)\in\R^{2N_g}$, define the augmented vector field and initial state by
\begin{equation*}
    \begin{aligned}
        f_{\mathrm{aug}}^g(z^g,u)&\triangleq
        \operatorname{col}(\{f_i^g(x_i,u_i)\}_{i\in\scriptV},\{g_{c,ij}^g(x_i,x_j)\}_{\{i,j\}\in\scriptE}),\\
        z_0^g&\triangleq\operatorname{col}(\{x_{0,i}\}_{i\in\scriptV},0_{|\scriptE|\times1}).
    \end{aligned}
\end{equation*}
}

\paragraph{Void Probability}\phantomsection\label{par:void-probability}
For a given UGV path $p(t),\, t \in [0,T]$, let $\Gamma(p) \triangleq \{p(t) \in \R^2 : t \in [0,T]\}$ be its geometric trace. 
Consider the swept corridor around the path, defined as
\begin{equation}
    \scriptC_{\rho}(p) = \{\xi \in \Omega : \mathrm{dist}\big(\xi, \Gamma(p)\big) \leq \rho\},
\end{equation}
where $\rho > 0$ is the corridor radius.
Given the posterior PPP intensity $\lambda_k$ at time step $k$, the expected PPP count in the corridor is
\begin{equation}
    \Lambda_k (p) = \int_{\scriptC_{\rho}(p)} \lambda_k(\xi)\, \d\xi.
\end{equation}
Similarly, for posterior Bernoulli component $j$, define
\begin{equation}
    q_{k,j}(p) = \int_{\scriptC_{\rho}(p)} \rho_{k,j}(\xi)\, \d\xi,
\end{equation}
from which it follows that $r_{k,j} q_{k,j}(p)$ is the probability that track $j$ exists and lies in the corridor.
Under the assumed PMB independence approximation, the probability that the corridor contains \textit{no} targets is
\begin{equation}\label{eq:pmb-void-probability}
    P_{0,k}(p) = \exp\big(-\Lambda_k(p)\big)\prod_{j=1}^{L_k}\big(1 - r_{k,j}q_{k,j}(p)\big).
\end{equation}
We call the probability in \eqref{eq:pmb-void-probability} the \textit{void probability}, as it represents the probability that there are no targets in the corridor under the current target belief $\scriptB_k$.
As a result, it is natural to want the UGV path to maximize this void probability, so that it navigates through well-explored areas of the domain that have a low likelihood of containing targets.
For optimization, however, it is more convenient to use the negative log-void form, that is,
\begin{equation}\label{eq:negative-log-void}
    \scriptH_{k}(p) = -\log P_{0,k}(p) = \Lambda_k(p) - \sum_j \log\big(1 - r_{k,j} q_{k,j}(p)\big),
\end{equation}
which is additive in all the individual component terms and is understood as $+\infty$ if any factor $1-r_{k,j}q_{k,j}(p)$ vanishes.
We may choose to either minimize $\scriptH_k(p)$ in the objective function (as a soft constraint) or alternatively enforce the chance constraint 
\begin{equation}
    \P(|\rv X_k \cap \scriptC_{\rho}(p)| = 0) \geq 1 - \epsilon \iff \scriptH_k(p) \leq -\log(1-\epsilon),
\end{equation}
with respect to a risk level $\epsilon \in (0, 0.5]$.
In this work, we choose the soft constraint version to avoid issues of infeasibility for a given risk level $\epsilon$.
For agent $i$, let $\scriptH_{k,i}(p_i)$ denote \eqref{eq:negative-log-void} evaluated on its corridor $\scriptC_{\rho_i}(p_i)$. 
Let $\scriptX_{i,T}^{\mathrm{goal}}$ denote a prescribed nonempty closed convex terminal mission set, which may encode a goal region, progress condition, or terminal state, and assume the given initial states satisfy the threat and inter-agent safe sets.
{Let $\beta>0$ and $R_i\in\R^{2\times2}$, $R_i\succ0$, weight the control effort.}
{Define the UGV running cost $\ell_i^g(u_i)\triangleq\frac{\beta}{2}u_i^\intercal R_i u_i$.}
In total, the multi-agent UGV trajectory optimization problem becomes
\begingroup\color{black}
\hypersetup{citecolor=black,linkcolor=black,urlcolor=black}
\begin{subequations}\label{prob:ugv-centralized}
\begin{empheq}[box=\fbox]{align}
    \min_{z^g,u}\quad &J_g(z^g,u)=\sum_{i=1}^{N_g}
    \left[\int_0^T\ell_i^g(u_i)\,\d t+\scriptH_{k,i}(p_i)\right],\\
    \mathrm{s.t.}\quad &\dot z^g=f_{\mathrm{aug}}^g(z^g,u),\quad z^g(0)=z_0^g,\quad\forall t\in[0,T],\\
    &y_{ij}(T)=0,\quad\forall\{i,j\}\in\scriptE,\\
    &(x_i(t),u_i(t))\in\Xi_i,\quad\forall i\in\scriptV,\quad\forall t\in[0,T],\\
    &p_i(t)\notin\bigcup_{\ell=1}^{L_k}\left(\scriptC_{\delta_\ell}\oplus\B(0,d_{i\ell}^{\mathrm{clr}})\right),
      \quad\forall i\in\scriptV,\quad\forall t\in[0,T],\\
    &x_i(T)\in\scriptX_{i,T}^{\mathrm{goal}},\quad\forall i\in\scriptV.
\end{empheq}
\end{subequations}

\endgroup
\subsection{UGV Problem Solution}\label{subsec:ugv-solution}
To solve the nonlinear, coupled optimization problem \eqref{prob:ugv-centralized}, we proceed exactly as in Section~\ref{subsec:uav-solution} by (i) performing a direct transcription to obtain a finite-dimensional discrete-time program, (ii) using SCvx* to linearize the non-convex program to obtain a convex sub-problem around a current reference $\{\hat{x}_i, \hat{u}_i\}_{i=1}^{N_g}$, and (iii) reformulating the convex program into a form amenable to distributed optimization through C-ADMM.

\begingroup\color{black}
\hypersetup{citecolor=black,linkcolor=black,urlcolor=black}
First, introduce epigraph variables $\kappa_i\in\R$ for the individual void costs.
Fix $N\geq1$, $\Delta t=T/N$, and $t_n=n\Delta t$, and hold $u_i(t)=u_{i,n}$ on each interval $[t_n,t_{n+1})$.
Assume the augmented dynamics have a unique solution on each interval for the chosen initial state and held control.
Their exact flow is
\begin{equation*}
    F_n^g(z_n^g,u_n)\triangleq\zeta_n^g(t_{n+1}),\qquad
    \dot\zeta_n^g(t)=f_{\mathrm{aug}}^g(\zeta_n^g(t),u_n),\quad\zeta_n^g(t_n)=z_n^g.
\end{equation*}
Multiple shooting treats the node states as independent variables and imposes the matching equations $z_{n+1}^g=F_n^g(z_n^g,u_n)$.
We approximate the flow by numerical integration and compute its Jacobians through the variational equations in Appendix~\ref{app:flow-sensitivities}.
The corridor $\scriptC_\rho$ in the void cost $\scriptH_{k,i}$ is parameterized by $P_i\triangleq\operatorname{col}(p_{i,0},\ldots,p_{i,N})\in\R^{2(N+1)}$, which gives the interpolated continuous path $p_i^P(t)=\sum_{n=0}^{N}B_n(t)p_{i,n}$, where $B_n:[0,T]\to\R$ are scalar, piecewise-linear hat basis functions.
Multiple shooting and the relaxed increment bounds \eqref{eq:pair-ctcs-relaxed}, with $\varepsilon_{ij,n}>0$, give the NLP
\endgroup
\begin{subequations}\label{prob:ugv-transcription}
    \begin{align}
        &\min_{x,u,y,\kappa} &&J_g(x,u,\kappa) = \sum_{i=1}^{N_g}
        \left[
        \frac{\beta \Delta t}{2}
        \sum_{n=0}^{N-1}
        u_{i,n}^\intercal R_i u_{i,n}
        +
        \kappa_i
        \right] \\
        &\quad\mathrm{s.t.} &&z^g_{n+1}=F_n^g(z^g_n,u_n), \quad n=0,\ldots,N-1,\\
        &&&x_{i,0}\ \mathrm{given}, \quad i\in\scriptV, \qquad y_{ij,0}=0, \quad \{i,j\}\in\scriptE,\\
        &&&(x_{i,n},u_{i,n})\in\Xi_i, \quad x_{i,N}\in\Xi_i^x, \quad i\in\scriptV, \quad n=0,\ldots,N-1,\\
        &&&\mathrm{dist}\left(p_{i,n}, \scriptC_{\delta_{\ell}}\right) \geq d_{i\ell}^{\mathrm{clr}}, \quad i\in\scriptV, \quad \ell = 1,\ldots, L_k, \quad  n = 1,\ldots, N, \label{eq:ugv-distance-constraint}\\ 
        &&&\textcolor{black}{y_{ij,n+1}-y_{ij,n}\leq\varepsilon_{ij,n}}, \quad \{i,j\}\in\scriptE, \quad n=0,\ldots,N-1,
        \\
        &&&\scriptH_{k,i}(P_i)\leq\kappa_i, \quad i\in\scriptV,\label{eq:ugv-void-epigraph}
        \\
        &&&x_{i,N}\in\scriptX_{i,T}^{\mathrm{goal}}, \quad i\in\scriptV,
    \end{align}
\end{subequations}
{\color{black}
Here $x$, $u$, and $y$ collect the physical-state, control, and separation-accumulator shooting sequences.
The fixed initial states satisfy the geometric threat and inter-agent separation constraints.
Pairwise separation is enforced through the relaxed CTCS formulation, while threat-distance and admissibility constraints remain imposed at nodes.
}
We have decomposed avoidance of the union of all threat confidence sets into individual constraints with respect to each threat itself%
\footnote{%
    Avoiding $\bigcup_\ell(\scriptC_{\delta_\ell}\oplus\B(0,d_{i\ell}^{\mathrm{clr}}))$ is equivalent to imposing $\mathrm{dist}(p,\scriptC_{\delta_\ell})\geq d_{i\ell}^{\mathrm{clr}}$ for every $\ell$.
}.
\begingroup\color{black}
\hypersetup{citecolor=black,linkcolor=black,urlcolor=black}
Let $X_i\triangleq\operatorname{col}(x_{i,0},\ldots,x_{i,N})\in\R^{4(N+1)}$ and $U_i\triangleq\operatorname{col}(u_{i,0},\ldots,u_{i,N-1})\in\R^{2N}$.
As in the UAV decomposition, introduce symmetric endpoint copies
\begin{equation*}
    Y_{ij}^{i}=\operatorname{col}(y_{ij,0}^{i},\ldots,y_{ij,N}^{i}),\qquad
    Y_{ij}^{j}=\operatorname{col}(y_{ij,0}^{j},\ldots,y_{ij,N}^{j}),\qquad
    Y_{ij}^{i}=Y_{ij}^{j},\quad\forall\{i,j\}\in\scriptE.
\end{equation*}
Define agent $i$'s decision vector by
\begin{equation*}
    \vartheta_i\triangleq\operatorname{col}(X_i,U_i,\{Y_{ij}^{i}\}_{j\in\scriptN_i},\kappa_i)
    \in\R^{d_{\vartheta,i}},\qquad
    d_{\vartheta,i}=4(N+1)+2N+(N+1)|\scriptN_i|+1,
\end{equation*}
and let $J_i(\vartheta_i)\triangleq\frac{\beta\Delta t}{2}\sum_{n=0}^{N-1}u_{i,n}^\intercal R_i u_{i,n}+\kappa_i$.
Let $\Theta_i$ collect the convex state, input, domain, and terminal constraints, the prescribed initial conditions, and the affine accumulator increment bounds for each endpoint copy at agent $i$.
The epigraph constraint in \eqref{eq:ugv-void-epigraph} relegates each non-convex individual void probability surrogate to a local inequality.
Together with the local threat-distance constraints, these inequalities form
\begin{equation*}
    \gamma_i(\vartheta_i)\triangleq\operatorname{col}\!\left(
    \scriptH_{k,i}(P_i)-\kappa_i,
    \{d_{i\ell}^{\mathrm{clr}}-\operatorname{dist}(p_{i,n},\scriptC_{\delta_\ell})\}_{\ell=1,\ldots,L_k;\,n=1,\ldots,N}
    \right)\in\R^{1+NL_k}.
\end{equation*}
Let $\gamma_i^{\mathrm{cvx}}$ denote their affine approximation at the current reference.
For the pairwise shooting defects, set $\bar y_{ij,n}=(y_{ij,n}^{i}+y_{ij,n}^{j})/2$ and let $\bar z_n^g$ stack the physical states and these averages.
Define
\begin{equation*}
    \pi_{ij,n}(\vartheta_i,\vartheta_j)\triangleq
    \bar y_{ij,n+1}-[F_n^g(\bar z_n^g,u_n)]_{y_{ij}},\qquad n=0,\ldots,N-1,
\end{equation*}
and stack the interval defects into $\pi_{ij}\in\R^N$.
Let $F_{i,n}^g$ denote the physical-state component of the flow for agent $i$, and define its local shooting defects by
\begin{equation*}
    \Pi_i(\vartheta_i)\triangleq
    \operatorname{col}(\{x_{i,n+1}-F_{i,n}^g(x_{i,n},u_{i,n})\}_{n=0}^{N-1})\in\R^{4N}.
\end{equation*}
Their affine approximations $\pi_{ij}^{\mathrm{cvx}}$ and $\Pi_i^{\mathrm{cvx}}$ follow from the augmented variational equations in Appendix~\ref{app:flow-sensitivities}, composed with the averaging map for the pairwise defects.
With virtual controls $\xi_i\in\R^{4N}$, $\zeta_{ij}\in\R^N$, and inequality slacks $s_i\in\R_+^{1+NL_k}$, performing the steps for SCvx* results in the convex sub-problem
\endgroup
\begingroup\color{black}
\hypersetup{citecolor=black,linkcolor=black,urlcolor=black}
\begin{subequations}\label{prob:ugv-scvx-subproblem}
    \begin{align}
        &\min_{\substack{\forall i\in\scriptV: \vartheta_i,\xi_i,\,s_i \\ \forall \{i,j\}\in\scriptE: \zeta_{ij}}} &&\sum_{i=1}^{N_g} J_i(\vartheta_i) + \sum_{i=1}^{N_g} \left( \lambda_i^\intercal \xi_i + \frac{w}{2}\|\xi_i\|^2 +\mu_i^\intercal s_i+\frac{w}{2}\|s_i\|^2\right) \nonumber\\[-1mm]
        &&&\hspace{4cm}+ \sum_{\{i,j\}\in\scriptE} \left( \mu_{ij}^\intercal \zeta_{ij} + \frac{w}{2}\|\zeta_{ij}\|^2 \right) \\
        &\quad\quad \mathrm{s.t.} &&\Pi_i^{\mathrm{cvx}}(\vartheta_i) = \xi_i, \quad i \in\scriptV, \\
        &&&\gamma_i^{\mathrm{cvx}}(\vartheta_i)\leq s_i,\quad s_i\geq0, \quad \vartheta_i\in\Theta_i, \quad i\in\scriptV,\\
        &&&\pi_{ij}^{\mathrm{cvx}}(\vartheta_i,\vartheta_j) = \zeta_{ij}, \quad \{i,j\} \in \scriptE,\label{eq:ugv-convex-pair} \\
        &&&Y_{ij}^{i}=Y_{ij}^{j},\quad\{i,j\}\in\scriptE,\label{eq:ugv-convex-copy}\\
        &&&\|\vartheta_i - \hat{\vartheta}_i\|_{\infty} \leq \Delta_{\mathrm{tr}}, \quad i \in \scriptV.
    \end{align}
\end{subequations}
\endgroup
The shooting virtual controls $\xi_i$ and $\zeta_{ij}$ and their multipliers $\lambda_i$ and $\mu_{ij}$ are unrestricted in sign; $s_i$ and its multiplier $\mu_i$ are nonnegative because they correspond to local inequalities.
Expressions for the distance function in \eqref{eq:ugv-distance-constraint} and negative log void probability linearization in \eqref{eq:negative-log-void} are given in Appendices~\ref{app:distance} and~\ref{app:void}, respectively, for completeness.
In particular, the distance function gradient $\nabla_p\,\mathrm{dist}(p, \scriptC_{\delta})$ is {the unit normal vector from the projection $\Pi_{\scriptC_{\delta}}(p)$ for $p\notin\scriptC_\delta$}, and the void probability gradient $\nabla_{P_i} \scriptH_{k,i}(P_i)$ comes from using shape calculus to analyze variations of the set $\scriptC_{\rho}(p)$.
{Only \eqref{eq:ugv-convex-pair} and \eqref{eq:ugv-convex-copy} couple neighboring agents.
As in Section~\ref{subsec:uav-solution}, introduce endpoint copies of $\zeta_{ij}$, split its penalty equally between the endpoints, and include the pairwise shooting and copy-agreement constraints in the unified coupling map $\widehat H_{ij}$.
The UGV objective is already separable, so no global coefficient copies or flow variables are needed.}
The remaining steps needed to turn the convex program \eqref{prob:ugv-scvx-subproblem} into a fully separable form amenable to C-ADMM are largely the same as those in Section~\ref{subsec:uav-solution}, so they are omitted here for brevity.
Thus, as for the UAV team, every outer SCvx* iteration produces a coupled convex subproblem that is solved by the inner graph C-ADMM iterations before the outer reference is updated.

\section{Vehicle Coordination and Re-Planning}~\label{sec:coordination}
It would not be prudent for the ground vehicle fleet to start moving before the aerial vehicles have sufficiently searched the corridors $\scriptC_{\rho_i}$ for all $i = 1,\ldots, N_g$.
The void probability construction above gives the probability that there are no represented threats in the union of the remaining UGV corridors.
At planning epoch $k$, let $p_{i,k}^{\mathrm{rem}}$ denote the unexecuted portion of the route committed to UGV $i$.
\begin{samepage}
We define $\scriptC_k^{\mathrm g}$ as the remaining fleet corridor and $q_{k,j}^{\mathrm g}$ as the conditional probability mass of Bernoulli track $j$ inside that corridor.
\begin{equation}
    \scriptC_k^{\mathrm g}
    \triangleq
    \bigcup_{i=1}^{N_g}
    \scriptC_{\rho_i}\!\left(p_{i,k}^{\mathrm{rem}}\right),
    \qquad
    q_{k,j}^{\mathrm g}
    \triangleq
    \int_{\scriptC_k^{\mathrm g}}\rho_{k,j}(y)\,\d y.
    \label{eq:remaining-corridor-track-overlap}
\end{equation}
\end{samepage}
In particular, applying \eqref{eq:pmb-void-probability} to this union gives the joint void probability $P_{0,k}^{\mathrm g}=\exp(-\scriptH_k^{\mathrm g})$, where
\begin{equation}\label{eq:fleet-log-void}
    \scriptH_k^{\mathrm g}
    =\int_{\scriptC_k^{\mathrm g}}\lambda_k(y)\,\d y
    -\sum_{j=1}^{L_k}\log\big(1-r_{k,j}q_{k,j}^{\mathrm g}\big).
\end{equation}
A simple yet effective motion gate is to allow the fleet to move only when
\begin{equation}\label{eq:fleet-motion-gate}
    P_{0,k}^{\mathrm g}\geq1-\epsilon_{\mathrm{go}}
    \quad\Longleftrightarrow\quad
    \scriptH_k^{\mathrm g}\leq-\log(1-\epsilon_{\mathrm{go}}),
    \qquad\epsilon_{\mathrm{go}}\in(0,0.5].
\end{equation}
This is one joint fleet gate evaluated under the current PMB belief.
All UGVs begin each committed plan together and execute on the common time base used by the joint optimization and its inter-agent collision-avoidance constraints.
The motion gate is evaluated on the currently committed routes and is distinct from the event that requests a new UGV trajectory optimization.

A newly formed Bernoulli component is declared confirmed once its existence probability satisfies $r_{k,j}\geq r_{\mathrm{conf}}$, where $r_{\mathrm{conf}}\in(0,1)$ is a configurable confirmation threshold.
The UGV fleet requests a new trajectory solve when there exists a newly confirmed track $j$ such that
\begin{equation}
    r_{k,j}\geq r_{\mathrm{conf}},
    \qquad
    r_{k,j}q_{k,j}^{\mathrm g}>\epsilon_{\mathrm{trigger}},
    \label{eq:bernoulli-replan-trigger}
\end{equation}
where $\epsilon_{\mathrm{trigger}}\in(0,1)$ specifies the minimum existence-weighted route-overlap probability considered operationally significant.
A confirmed track with negligible probability mass inside the remaining corridor therefore does not cause a route change, whereas a spatially uncertain track may trigger re-planning even when its posterior mean lies outside the corridor.

Upon a trigger, the UGV trajectory optimization problem is re-solved from the current UGV states using the latest PMB posterior.
The candidate routes replace only the unexecuted portions of the previously committed routes after the SCvx*--C-ADMM iterations return a candidate satisfying the required dynamics, terminal, threat-clearance, and inter-agent safety conditions.
Following commitment, the remaining-route map, posterior-dependent base density, and UAV trajectories are updated, while UGV execution remains governed by the joint motion gate \eqref{eq:fleet-motion-gate}.
Figure~\ref{fig:bernoulli-triggered-replanning} summarizes this trigger, route-update, and execution sequence.
\begin{figure}[H]
    \centering
    \includegraphics[width=\linewidth]{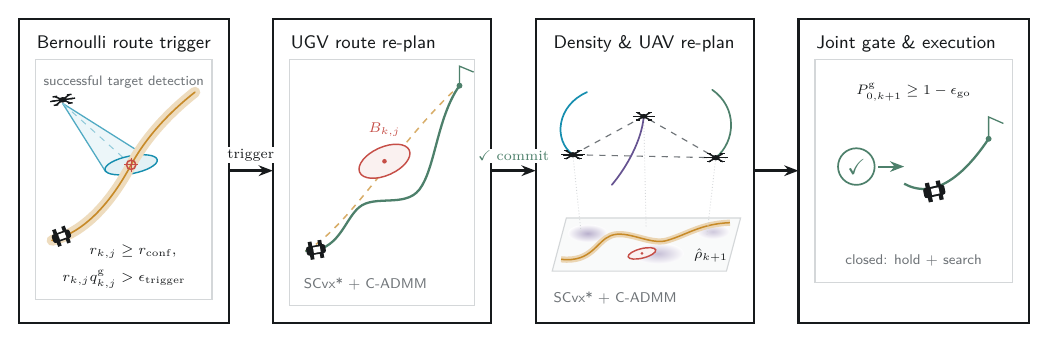}
    \caption{Closed-loop Bernoulli-triggered re-planning flow: route-overlap evaluation, UGV route re-optimization, UAV search-density and trajectory updates, and joint PMB-gated UGV execution.}
    \label{fig:bernoulli-triggered-replanning}
\end{figure}

Next, we describe how we construct and update the core base distribution $\hat\mu$ that drives the UAV fleet ergodic search.
We begin with the UGV route: let $p(t), \ t \in [0,T]$ denote the most recently committed UGV route, or only its unexecuted suffix if the UGV has already begun moving.
Define the un-normalized Gaussian tube
\begin{equation}
    \tilde{\Phi}(\xi) = \int_{0}^{T} w_k(t) \exp\left(-\frac{\|\xi - p(t)\|^2}{2\sigma_r^2} \right)\, \d t,
\end{equation}
where \joshrevision{$\sigma_r>0$ is the tube width and} $w_k(t)\geq0$ is a temporal weighting function (e.g., \joshrevision{$w_k(t)=e^{-\lambda_r t}$ with $\lambda_r>0$} biasing the density towards the beginning of the UGV path).
\joshrevision{Assume $0<\int_0^T w_k(t)\,\d t<\infty$.}
\joshrevision{Aggregating each UGV's route, we obtain the normalized route map $\Phi=\operatorname{Norm}_{\Omega}[\sum_{i=1}^{N_g}\tilde\Phi_i]$.}
Concentrating the base density along the UGV routes alone may waste sensing effort in portions of the routes that have already been sufficiently explored.
In particular, the route map depends only on geometry and does not incorporate the current posterior probability of an undetected target.
To emphasize its dependence on the route available at planning epoch $k$, we henceforth denote the normalized route map by $\Phi_k$.
We define the \textit{route-conditioned} PPP map as
\begin{equation}
    U_k(\xi)
    \triangleq
    \operatorname{Norm}_{\Omega}\!\left[\Phi_k(\cdot)\lambda_k(\cdot)\right](\xi)
    =
    \frac{\Phi_k(\xi)\lambda_k(\xi)}
    {\displaystyle\int_{\Omega}\Phi_k(z)\lambda_k(z)\,\d z}.
    \label{eq:route_conditioned_ppp_map}
\end{equation}
This map assigns sensing effort to locations that are simultaneously relevant to the current UGV routes and likely, according to the residual PPP intensity, to contain an as-yet-undetected target.
Note that if the denominator in \eqref{eq:route_conditioned_ppp_map} is zero, this component is omitted and the remaining mixture weights are renormalized.

It remains to incorporate targets that have already generated Bernoulli components in the PMB posterior.
Simply using $T_{k,j}(\xi)=\operatorname{Norm}_{\Omega}[\rho_{k,j}](\xi)$ is generally undesirable.
First, this is merely the in-domain normalization of the conditional track density and hence does not account for the existence probability $r_{k,j}$.
Second, every track would contribute unit mass after normalization, irrespective of its localization uncertainty.
Thus, a highly uncertain track and an accurately localized track could receive the same total sensing attention.
Finally, $\rho_{k,j}(\xi)$ describes where the target is believed to be, whereas the desired base density must describe where the UAV sensor should be placed.
For an angles-only sensor, these locations need not coincide because informative viewpoints depend on the sensor-to-target geometry.

We therefore construct a Bayesian expected information density over the planar search domain, building on information-driven ergodic exploration~\joshrevision{\citep{miller2016information,mavrommati2018realtime,abraham2018localization}}.
Let $s\in\Omega$ denote a candidate planar sensor coordinate, which we take to be the center of the UAV ground footprint.
Importantly, $s$ is not the complete UAV state.
Let $x=\mathfrak c(s,\zeta)$ denote a full sensing configuration whose footprint center is $s$, where the set $\mathcal Z$ is the admissible nuisance-configuration space and $\zeta\in\mathcal Z$ collects the remaining sensing variables, including altitude and attitude.
At planning epoch $k$, let $\varpi_k(\d\zeta\mid s)$ denote a prescribed conditional probability measure over these nuisance variables.
For example, the simplest construction fixes a nominal altitude and a nominal downward-looking attitude,
\begin{equation}
    \varpi_k(\d\zeta\mid s)
    =
    \delta_{\bar{\zeta}_k(s)}(\d\zeta).
\end{equation}
Alternatively, $\varpi_k$ may represent the distribution of altitude and attitude induced by the current reference UAV trajectories.
In either case, $\varpi_k$ is frozen while solving the UAV trajectory optimization problem at epoch $k$.

Suppose that the angles-only measurement generated by a target at $y\in\Omega$ from sensing configuration $\mathfrak c(s,\zeta)$ follows the model in \eqref{eq:meas-model}, written here as
\begin{equation}
    z
    =
    h\big(y;\mathfrak c(s,\zeta)\big)+v,
    \qquad
    v\sim\mathcal N\!\left(0,R\big(y;s,\zeta\big)\right),
\end{equation}
\joshrevision{where $R(y;s,\zeta)\triangleq R(\mathfrak c(s,\zeta)\mid y)\succ0$ is the covariance from \eqref{eq:meas-model}.}
Define the measurement Jacobian with respect to the planar target position as
\begin{equation}
    H_y(y;s,\zeta)
    \triangleq
    \frac{\partial h\big(y;\mathfrak c(s,\zeta)\big)}{\partial y} \in \R^{2\times 2}.
\end{equation}
\joshrevision{Let $p_k^D(y\mid\mathfrak c(s,\zeta))$ denote the detection probability from Section~\ref{sec:rfs-inference}, evaluated at this sensing configuration.}
The detection-weighted information surrogate associated with this potential measurement is defined as
\begin{equation}
    \mathsf I(y;s,\zeta)
    \triangleq
    p_k^{D}\big(y\mid\mathfrak c(s,\zeta)\big)
    H_y^\intercal(y;s,\zeta)
    R^{-1}(y;s,\zeta)
    H_y(y;s,\zeta).
    \label{eq:conditional_fim}
\end{equation}
The expected information surrogate for Bernoulli component $j$, conditional on that target existing, is then
\begin{equation}
    \overline{\mathsf I}_{k,j}(s)
    \triangleq
    \int_{\mathcal Z}
    \int_{\Omega}
    \mathsf I(y;s,\zeta)\rho_{k,j}(y)
    \,\d y\,
    \varpi_k(\d\zeta\mid s).
    \label{eq:expected_track_fim}
\end{equation}
Consequently, $\overline{\mathsf I}_{k,j}:\Omega\rightarrow\mathbb S_+^2$ is a matrix-valued field indexed by the planar footprint center $s$.
When $R$ depends on $y$, \eqref{eq:conditional_fim} omits covariance derivative and detection event information terms and is therefore an information surrogate rather than the complete Fisher information~\citep{wilson2014trajectory,chaloner1995bayesian}.
The determinant is taken after computing the expectation in \eqref{eq:expected_track_fim}.
The classical expected information density (EID) construction would use
\begin{equation}
    E^{\mathrm{FIM}}_{k,j}(s)
    =
    \det\!\left(\overline{\mathsf I}_{k,j}(s)\right).
\end{equation}
\joshrevision{D-optimality maximizes the determinant, or equivalently the log-determinant, of the information matrix.
Under a Gaussian approximation, this minimizes the volume of the corresponding confidence ellipsoid~\citep{boyd2004convex,chaloner1995bayesian}.}
For the present application, we instead use the Bayesian incremental D-optimal \joshrevision{surrogate}
\begin{align}
    E_{k,j}(s)
    &\triangleq
    \log\det\!\left(
        P_{k,j}^{-1}
        +
        \overline{\mathsf I}_{k,j}(s)
    \right)
    -
    \log\det\!\left(P_{k,j}^{-1}\right)
    \nonumber\\
    &=
    \log\det\!\left(
        I_2+
        P_{k,j}^{1/2}\overline{\mathsf I}_{k,j}(s)P_{k,j}^{1/2}
    \right),
    \label{eq:bayesian_track_eid}
\end{align}
where $P_{k,j}\succ0$ is the covariance of the planar position density $\rho_{k,j}$, or the planar position marginal if the Bernoulli state contains additional variables.
\joshrevision{The second equality follows by factoring $P_{k,j}^{-1/2}$ from both sides of $P_{k,j}^{-1}+\overline{\mathsf I}_{k,j}(s)$ and taking determinants.}
The quantity in \eqref{eq:bayesian_track_eid} approximates the improvement in the log-determinant of the track information matrix produced by a measurement from footprint center $s$.
In particular, $E_{k,j}(s)\geq 0$ and tends to zero as $P_{k,j}\to0$ provided $\overline{\mathsf I}_{k,j}(s)$ remains bounded.
Thus, unlike a separately normalized copy of $\rho_{k,j}$, an accurately localized track naturally receives less sensing attention under this bounded information condition.

The conditional information density in \eqref{eq:bayesian_track_eid} does not, by itself, account for whether track $j$ exists or whether it is relevant to a UGV route.
{\color{red}
Recall from \eqref{eq:remaining-corridor-track-overlap} that $\scriptC_k^{\mathrm g}$ is the union of the remaining UGV-route corridors and $q_{k,j}^{\mathrm g}$ is the conditional probability mass of track $j$ inside that union.
}
The corresponding Bernoulli route-risk weight is
\begin{equation}
    a_{k,j}^{\mathrm B}
    \triangleq
    -\log\!\left(
        1-r_{k,j}q_{k,j}^{\mathrm g}
    \right).
    \label{eq:bernoulli_route_risk_weight}
\end{equation}
This is precisely the contribution of track $j$ to the negative log-void probability of the remaining fleet corridor.
It is small when the track has low existence probability or negligible probability mass inside the relevant corridors, and becomes large when an existing track is likely to lie along a UGV route.
\joshrevision{For the density construction, assume $r_{k,j}q_{k,j}^{\mathrm g}<1$ and finite, integrable information fields so the weights and normalization below are well defined.}
The aggregate, unnormalized Bernoulli information map is therefore
\begin{equation}
    \widetilde T_k(s)
    \triangleq
    \sum_{j=1}^{L_k}
    a_{k,j}^{\mathrm B}E_{k,j}(s),
    \qquad
    Z_k^{\mathrm B}
    \triangleq
    \int_{\Omega}\widetilde T_k(s)\,\d s.
    \label{eq:aggregate_track_eid}
\end{equation}
When $Z_k^{\mathrm B}>0$, we define
\begin{equation}
    T_k(s)
    \triangleq
    \operatorname{Norm}_{\Omega}[\widetilde T_k](s).
    \label{eq:normalized_track_eid}
\end{equation}
\joshrevision{If $Z_k^{\mathrm B}=0$, this component is omitted, its mixture weight is set to zero, and the remaining weights are renormalized.}
Notice that the individual fields $E_{k,j}$ are not normalized before being combined.
Consequently, both the weight $a_{k,j}^{\mathrm B}$ and the magnitude of the available localization information affect the relative contribution of track $j$.

Writing $\hat\mu_k(\d\xi)=\hat\rho_k(\xi)\,\d\xi$, the updated base density is finally constructed as the convex mixture
\begin{equation}
    \hat{\rho}_k(\xi)
    =
    \omega_{k}^{\mathrm r}\Phi_k(\xi)
    +
    \omega_{k}^{\mathrm u}U_k(\xi)
    +
    \omega_{k}^{\mathrm B}T_k(\xi),
    \label{eq:updated_base_density}
\end{equation}
where
$
    \omega_{k}^{\mathrm r},
    \omega_{k}^{\mathrm u},
    \omega_{k}^{\mathrm B}\geq 0,
    \
    \omega_{k}^{\mathrm r}
    +
    \omega_{k}^{\mathrm u}
    +
    \omega_{k}^{\mathrm B}
    =1.
$
Because each active component is a probability density on $\Omega$, \eqref{eq:updated_base_density} defines a valid probability measure $\hat\mu_k$ on $\Omega$.

The mixture weights may be fixed design parameters.
Alternatively, they may be adapted using the current posterior RFS.
One possible construction is to define
\begin{equation}
    \Lambda_k^{\mathrm g}
    \triangleq
    \int_{\scriptC_k^{\mathrm g}}\lambda_k(y)\,\d y,
    \qquad
    \eta_k^{\mathrm u}
    \triangleq
    \frac{\Lambda_k^{\mathrm g}}
    {\Lambda_k^{\mathrm g}+\Lambda_{\mathrm{ref}}},
    \qquad
    \eta_k^{\mathrm B}
    \triangleq
    \frac{Z_k^{\mathrm B}}
    {Z_k^{\mathrm B}+Z_{\mathrm{ref}}},
\end{equation}
where $\Lambda_{\mathrm{ref}}>0$ and $Z_{\mathrm{ref}}>0$ are reference scales.
Given nominal preferences
$\bar\omega^{\mathrm r}>0$,
$\bar\omega^{\mathrm u}\geq0$, and
$\bar\omega^{\mathrm B}\geq0$, let
\begin{equation}
    s_k^{\mathrm r}=\bar\omega^{\mathrm r},
    \qquad
    s_k^{\mathrm u}=\bar\omega^{\mathrm u}\eta_k^{\mathrm u},
    \qquad
    s_k^{\mathrm B}=\bar\omega^{\mathrm B}\eta_k^{\mathrm B},
\end{equation}
and set
\begin{equation}
    \omega_k^{m}
    =
    \frac{s_k^{m}}
    {s_k^{\mathrm r}+s_k^{\mathrm u}+s_k^{\mathrm B}},
    \qquad
    m\in\{\mathrm r,\mathrm u,\mathrm B\}.
    \label{eq:adaptive_base_density_weights}
\end{equation}
This rule retains a baseline emphasis on the committed UGV routes, increases the residual-PPP contribution when appreciable undetected target intensity remains inside the corridors, and decreases the track contribution when the route-relevant tracks are either unlikely or already accurately localized.

At every planning epoch, the PMB posterior, remaining UGV routes, nuisance-state distribution $\varpi_k$, and all component maps are updated once and then held fixed while the ensuing UAV trajectory optimization problem is solved.

\section{Case Studies}~\label{sec:examples}
\begingroup
\color{black}
\raggedbottom
\setlength{\textfloatsep}{12pt plus 2pt minus 2pt}
\setlength{\floatsep}{12pt plus 2pt minus 2pt}
\setlength{\intextsep}{10pt plus 2pt minus 2pt}
\makeatletter
\setlength{\@fptop}{0pt}
\setlength{\@fpsep}{12pt plus 2pt minus 2pt}
\setlength{\@fpbot}{0pt plus 1fil}
\long\def\@makecaption#1#2{\vskip6pt{\small\raggedright #1: #2\par}\vskip4pt}
\makeatother

\caseedit{The case studies evaluate aerial coverage, risk-sensitive ground planning, and closed-loop collaboration.}
\caseedit{After specifying the simulation settings, we examine multi-UAV ergodic search and solver performance, followed by the UGV risk--effort tradeoff.}
\caseedit{Missions with one and two UGVs then demonstrate how aerial sensing, ground replanning, and risk-gated motion interact during execution.}
\josh{}

\color{black}
\Needspace{7\baselineskip}
\subsection{Simulation Setup}\label{subsec:case-setup}

The simulations are run on an Intel Core Ultra 9 275HX processor ($24$ cores).
The examples use the workspace $\Omega=[0,1]^2$ and the quadrotor and accelerated-unicycle models in \eqref{eq:uav-dynamics} and \eqref{eq:ugv-dynamics}, respectively.
Distances are in normalized workspace units, time is in seconds, and angles are in radians unless degrees are indicated.
Tables~\ref{tab:case-sensor}--\ref{tab:case-setup} list the sensor, vehicle, boundary, and numerical parameters.
The sensing mission uses three UAVs, one UGV, and three stationary targets.
The initial belief is a uniform PPP with intensity $\lambda_0^-(\xi)=3/|\Omega|$ and no Bernoulli tracks.
\begingroup
\color{black}
\makeatletter
\def\@captype{table}
\long\def\@makecaption#1#2{\vskip6pt{\small\color{black}\raggedright #1: #2\par}\vskip4pt}
\makeatother
\newsavebox{\caseSensorTable}
\begin{lrbox}{\caseSensorTable}
\begin{minipage}[t]{0.47\textwidth}
        \vspace{0pt}
        \caption{\color{black}Sensor model.}
        \label{tab:case-sensor}
        \color{black}\footnotesize
        \setlength{\tabcolsep}{2pt}
        \renewcommand{\arraystretch}{1.08}
        \begin{tabular*}{\linewidth}{@{\extracolsep{\fill}}lr@{}}
            \toprule
            Parameter & Value \\
            \midrule
            \multicolumn{2}{@{}l}{\textit{Footprint and sensing quality}} \\
            FoV half-angle, $\alpha$ & $15^\circ$ \\
            Attitude exponent, $\gamma$ & $2.2$ \\
            Altitude scale, $\bar h$ & $0.42$ \\
            Footprint scale, $\kappa_{\mathrm f}$ & $0.50$ \\
            Covariance floor, $\varepsilon_\Sigma$ & $10^{-6}$ \\
            \addlinespace[3pt]
            \multicolumn{2}{@{}l}{\textit{Angular observations}} \\
            Noise floor, $\sigma_{\min}$ & $0.8^\circ$ \\
            Noise gain, $k_\sigma$ & $2^\circ$ \\
            Noise ceiling, $\sigma_{\max}$ & $4^\circ$ \\
            Sensor update rate & $5$ Hz \\
            Mean clutter count, $\bar\kappa$ & $0.01$/scan \\
            \bottomrule
        \end{tabular*}
    \end{minipage}
\end{lrbox}
\newsavebox{\caseVehicleTable}
\begin{lrbox}{\caseVehicleTable}
\begin{minipage}[t]{0.47\textwidth}
        \vspace{0pt}
        \caption{\color{black}Vehicle models and limits.}
        \label{tab:case-vehicles}
        \color{black}\footnotesize
        \setlength{\tabcolsep}{2pt}
        \renewcommand{\arraystretch}{1.08}
        \begin{tabular*}{\linewidth}{@{\extracolsep{\fill}}lr@{}}
            \toprule
            Parameter & Value \\
            \midrule
            \multicolumn{2}{@{}l}{\textit{UAVs: thrust and body-rate control}} \\
            Mass, $m_i$ & $1$ kg \\
            Altitude range & $[0.15,1.0]$ \\
            Initial altitude & $0.45$ \\
            Velocity bound, $\|v_i\|_\infty$ & $1.25$ \\
            Roll / pitch bound & $0.42$ rad \\
            Yaw bound & $0.75$ rad \\
            Thrust range & $[7,13.5]$ \\
            Body-rate bound, $\|\omega_i\|_\infty$ & $1.0$ rad/s \\
            Inter-UAV separation & $0.22$ \\
            \addlinespace[3pt]
            \multicolumn{2}{@{}l}{\textit{UGV: acceleration and turn-rate control}} \\
            Speed range & $[0,0.50]$ \\
            Acceleration limit & $0.80$ \\
            Turn-rate limit & $1.5$ rad/s \\
            UGV / target radius & $0.035,\ 0.015$ \\
            Additional safety margin, $\Delta$ & $0.010$ \\
            Swept-corridor radius, $\rho$ & $0.060$ \\
            Terminal speed, $v_f$ & $0$ \\
            \bottomrule
        \end{tabular*}
    \end{minipage}
\end{lrbox}
\newsavebox{\caseInitialTable}
\begin{lrbox}{\caseInitialTable}
\begin{minipage}[t]{0.47\textwidth}
        \vspace{0pt}
        \caption{\caseedit{Initial positions and goals.}}
        \label{tab:case-initial}
        \color{black}\footnotesize
        \setlength{\tabcolsep}{2pt}
        \renewcommand{\arraystretch}{1.08}
        \begin{tabular*}{\linewidth}{@{\extracolsep{\fill}}lr@{}}
            \toprule
            Parameter & Value \\
            \midrule
            UAV 1 position, $r_{1,0}$ & $(0.1,0.9,0.45)$ \\
            UAV 2 position, $r_{2,0}$ & $(0.9,0.9,0.45)$ \\
            UAV 3 position, $r_{3,0}$ & $(0.5,0.5,0.45)$ \\
            Ground-route start, $p_0$ & $(0.1,0.1)$ \\
            Ground-route goal, $p_f$ & $(0.9,0.9)$ \\
            \bottomrule
        \end{tabular*}
    \end{minipage}
\end{lrbox}
\newsavebox{\caseNumericsTable}
\begin{lrbox}{\caseNumericsTable}
\begin{minipage}[t]{0.47\textwidth}
        \vspace{0pt}
        \caption{\color{black}Discretization and cost weights.}
        \label{tab:case-setup}
        \color{black}\footnotesize
        \setlength{\tabcolsep}{2pt}
        \renewcommand{\arraystretch}{1.08}
        \begin{tabular*}{\linewidth}{@{\extracolsep{\fill}}lr@{}}
            \toprule
            Parameter & Value \\
            \midrule
            \multicolumn{2}{@{}l}{\textit{Horizon and sensing-density parameters}} \\
            Planning horizon & $10$ s \\
            UAV / UGV, $N:\Delta t$ & $20/40:0.5/0.25$ s \\
            Maximum Fourier index, $K$ & $3$ \\
            Route-tube width, $\sigma_r$ & $0.08$ \\
            \addlinespace[3pt]
            \multicolumn{2}{@{}l}{\textit{Objective and risk weights}} \\
            UAV input, $R^a$ & $0.015I_4$ \\
            UAV velocity / attitude, $Q^a$ & $0.0025I_6$ \\
            UGV input, $R^g$ & $I_2$ \\
            Full-PMB objective, $w_H$ & $10$ \\
            \caseedit{Motion-gate budget, $\epsilon_{\mathrm{go}}$} & \caseedit{$0.05$} \\
            \caseedit{Track confirmation, $r_{\mathrm{conf}}$} & \caseedit{$0.80$} \\
            \caseedit{Replan trigger, $\epsilon_{\mathrm{trigger}}$} & \caseedit{$0.05$} \\
            \bottomrule
        \end{tabular*}
    \end{minipage}
\end{lrbox}

\newsavebox{\caseLeftTables}
\newsavebox{\caseRightTables}
\begin{lrbox}{\caseLeftTables}
    \begin{minipage}[t]{0.47\textwidth}
        \vspace{0pt}
        \usebox{\caseSensorTable}\par\vspace{8pt}
        \usebox{\caseInitialTable}
    \end{minipage}
\end{lrbox}
\begin{lrbox}{\caseRightTables}
    \begin{minipage}[t]{0.47\textwidth}
        \vspace{0pt}
        \usebox{\caseVehicleTable}\par\vspace{8pt}
        \usebox{\caseNumericsTable}
    \end{minipage}
\end{lrbox}
\makeatletter
\dimen@=\dimexpr\ht\caseRightTables+\dp\caseRightTables-\ht\caseLeftTables-\dp\caseLeftTables-\baselineskip\relax
\count@=\dimen@
\divide\count@ by \baselineskip
\advance\count@ by 2
\edef\caseTableWrapLines{\the\count@}
\makeatother
\Needspace{\dimexpr\ht\caseRightTables+\dp\caseRightTables+2\baselineskip\relax}
\setlength{\columnsep}{0.06\textwidth}
\setlength{\intextsep}{0pt}
\begin{wraptable}[\caseTableWrapLines]{r}{0.47\textwidth}
    \vspace{1.4ex}
    \usebox{\caseRightTables}
\end{wraptable}
\noindent\usebox{\caseLeftTables}\par\vspace{\dimexpr8pt+\baselineskip\relax}

\color{black}
Table~\ref{tab:case-vehicles} lists the vehicle parameters and bounds.
The Fourier indices satisfy $k_1,k_2\in\{0,\ldots,K\}$.
The ground objective is $E_u+w_H\scriptH$, where $E_u=\int_0^T\ell_i^g\,\d t$ and $w_H\geq0$ weights the PMB negative log-void cost in \eqref{eq:negative-log-void}.
The hover input is $\bar u_i=(m_i g,0,0,0)^\intercal$.
\par
\makeatletter
\ifnum\c@WF@wrappedlines>1
  \vspace{\dimexpr\c@WF@wrappedlines\baselineskip\relax}
\fi
\makeatother
\WFclear
The simulation running costs use the matrices in Table~\ref{tab:case-setup} and are given by the quadratic functions
\begin{equation*}
    \begin{aligned}
        \ell_i^a &=(u_i-\bar u_i)^\intercal R^a(u_i-\bar u_i)
            +[v_i^\intercal\ q_i^\intercal]Q^a[v_i^\intercal\ q_i^\intercal]^\intercal,\\
        \ell_i^g &=[a_i\ \omega_i]R^g[a_i\ \omega_i]^\intercal.
    \end{aligned}
\end{equation*}

\par
\endgroup %

We assume two disparate UAV graph topologies for the Fourier coefficient consensus and the collision avoidance consensus rounds.
At each planning epoch, a minimum-length spanning tree is constructed from the current UAV positions for coefficient consensus and held fixed during that solve.
Collision avoidance, on the other hand, assumes a complete graph so that constraints are imposed for every UAV pair.

We use the quality and detection functions $\beta_s$ and $G_i$ as defined in Section~\ref{subsec:sensor-model}, and the angles-only measurement model \eqref{eq:meas-model} with angular measurement covariance $R(x_i\mid\xi)=\sigma^2(\chi_{i,k}(\xi))I_2$, with the clipped noise law from \eqref{eq:angular-noise-law} and the values in Table~\ref{tab:case-sensor}.
Measurements are fused (sequentially) in UAV-index order, with one prediction step per fleet scan.
Clutter is uniform on the angular image window $[-\alpha,\alpha]^2$.

\color{black}
\FloatBarrier

\subsection{Multi-UAV Ergodic Trajectory Optimization}\label{subsec:case-uav}

The UAVs start from the positions in Table~\ref{tab:case-initial} and optimize coverage of a Gaussian tube around the ground route from $p_0$ to $p_f$.
This initial base density appears in Figure~\ref{fig:case-base-density}(a); its later updates are discussed in Section~\ref{subsec:case-feedback}.
Figure~\ref{fig:case-uav} shows how the team distributes its footprints along that tube through both horizontal motion and altitude changes.
The vehicles cover different portions of the route while maintaining separation; the minimum inter-agent distance during execution is $0.2216$, above the required $0.22$.
For each executed prefix, we average the footprint coefficients over its elapsed duration and evaluate the ergodic mismatch against the fixed desired density (Figure~\ref{fig:case-uav}(d)), reaching $2.16\times10^{-3}$ at $10$~s.

\begin{figure}[htbp]
    \centering
    \includegraphics[width=\textwidth,clip]{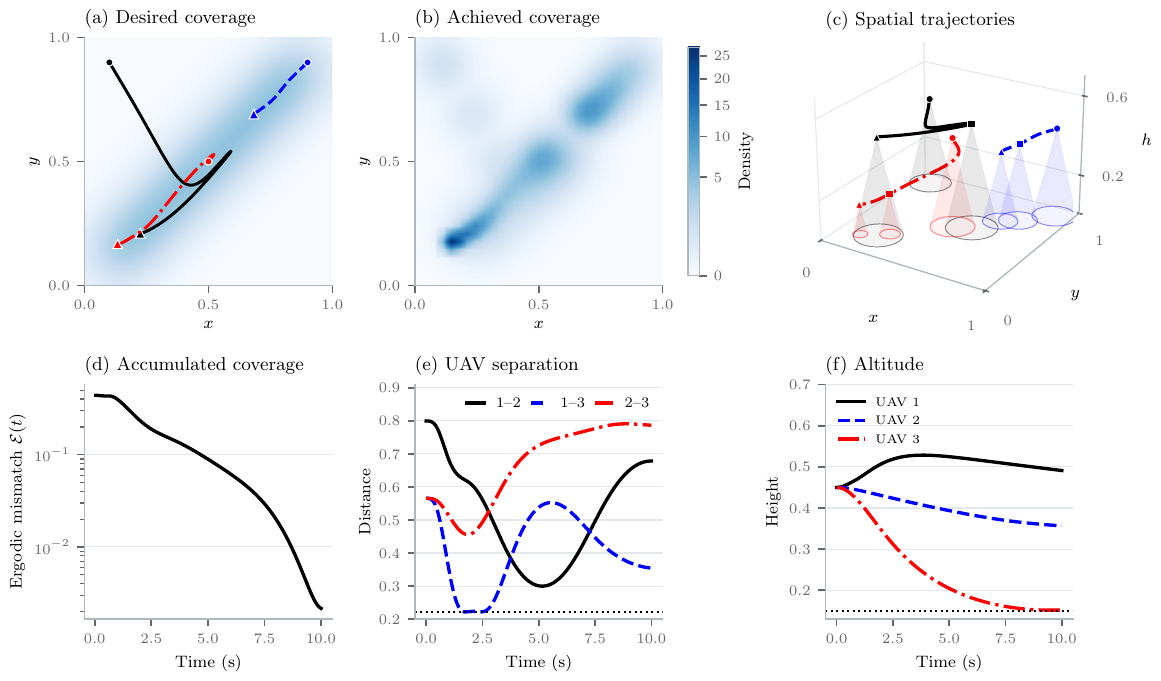}
    \caption{\color{black}Three-UAV coverage and trajectories.
Circles, squares, and triangles mark $0$, $5$, and $10$~s; translucent cones show the FoV, and dotted lines mark bounds.}
    \label{fig:case-uav}
\end{figure}

\color{black}
The SCvx*--C-ADMM solve takes $64$ outer attempts, with $46$ accepted updates, as shown in Figure~\ref{fig:case-uav-convergence}.
The improvement plots record the magnitude of the nonlinear penalty function and nonlinear infeasibility changes, including rejected trials.
The first inner C-ADMM solve converges after $56$ iterations.
\begin{figure}[htbp]
    \centering
    \includegraphics[width=\textwidth,clip]{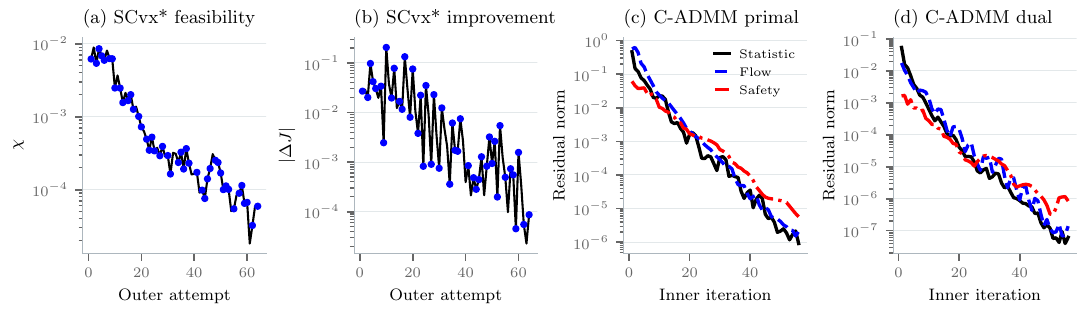}
    \caption{UAV solver convergence.
    Panels (c)--(d) show the C-ADMM primal and dual residuals through convergence for the first outer attempt of the same solve.
    Dots mark accepted steps.}
    \label{fig:case-uav-convergence}
\end{figure}
\color{black}
\Needspace{20\baselineskip}
\begin{wrapfigure}{r}{0.56\textwidth}
    \centering
    \includegraphics[width=\linewidth,clip]{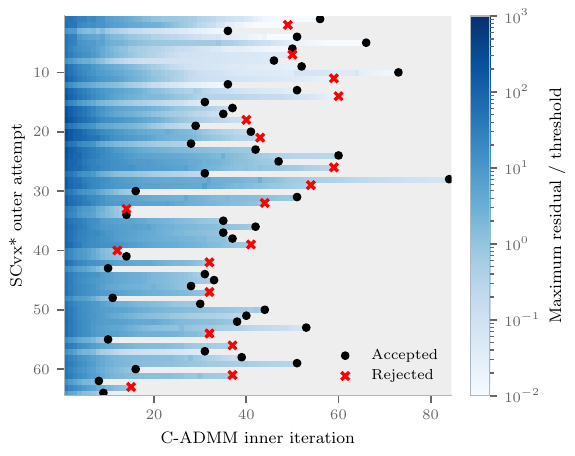}
    \makeatletter
    \long\def\@makecaption#1#2{\vskip6pt{\small\raggedright\noindent #1: #2\par}\vskip4pt}
    \makeatother
    \caption{Nested SCvx*--C-ADMM convergence for the UAV fleet problem.
    Colors show the largest residual-to-tolerance ratio; row-end markers show inner termination and outer acceptance.
    }
    \label{fig:case-nested-convergence}
\end{wrapfigure}
Figure~\ref{fig:case-nested-convergence} shows the inner C-ADMM solves across the outer SCvx* attempts for the same three-UAV, one-route planning instance.
Each row records one outer attempt, including rejected trials.

\color{black}

\leavevmode Next, we compare complete planning times for centralized IPOPT and the proposed decentralized SCvx*--C-ADMM framework.
For each fleet size and horizon, the initial states, desired density, and initial control trajectories are fixed across methods.
Table~\ref{tab:case-solvers} gives the solver settings for the computational comparison.
The C-ADMM penalties act on the statistic, flow, and affine safety residuals, respectively.
\color{black}
Figure~\ref{fig:case-timing} reports the comparison for $2$, $3$, $5$, and $8$ UAVs at $2.5$, $5$, and $10$~s horizons, with $\Delta t=0.5$~s.
The centralized solves use IPOPT~\citep{wachter2006ipopt}, and local C-ADMM QPs use concurrent PIQP~\citep{schwan2023piqp} processes.
The consensus tolerances follow $\epsilon_{\min}\max\{1,c_0 d^n\}$ at outer attempt $n=0,1,\ldots$, so that earlier iterations can converge quicker as they are less accurate, and we only need tighter tolerances near outer convergence.
The repeated trials in Figure~\ref{fig:case-timing} run sequentially on identical inputs, so the error bars measure runtime variability rather than variation across randomized scenarios.
Elapsed time includes model construction, solver setup, nonlinear evaluations, and distributed coordination.

\Needspace{13\baselineskip}
\begin{wraptable}{r}{0.53\textwidth}
    \centering
    \color{black}
    \makeatletter
    \long\def\@makecaption#1#2{\vskip4pt{\small\color{black}\raggedright #1: #2\par}\vskip6pt}
    \makeatother
    \caption{\color{black}UAV solver settings for the centralized versus decentralized computational comparisons.}
    \label{tab:case-solvers}
    \footnotesize
    \setlength{\tabcolsep}{2pt}
    \begin{tabular*}{\linewidth}{@{\extracolsep{\fill}}lr@{}}
        \toprule
        Parameter & Value \\
        \midrule
        \multicolumn{2}{l}{\textit{Inner C-ADMM}} \\
        Initial penalties, $(\rho_\tau,\rho_f,\rho_p)$ & $(0.04,0.05,0.02)$ \\
        Minimum tolerances, $\epsilon_{\min}$ & $10^{-5}$ \\
        Timing schedule, $(c_0,d)$ & $(100,0.8)$ \\
        \addlinespace[3pt]
        \multicolumn{2}{l}{\textit{Outer SCvx*}} \\
        Initial AL penalty, $w$ & $25$ \\
        Feasibility / optimality tolerance & $10^{-4},\ 3\times10^{-3}$ \\
        \bottomrule
    \end{tabular*}
\end{wraptable}
C-ADMM replaces each fleet-wide QP with smaller local QPs solved in parallel, but requires many consensus rounds.
For example, for eight UAVs and a $10$~s horizon, the slowest local QP averages $35$~ms per round.
The additional rounds and coordination yield comparable total planning times in this case: $242.6$~s distributed and $238.0$~s centralized.
Thus, the proposed decentralized framework achieves comparable planning time to a centralized solver in this case.

\begin{figure}[htbp]
    \centering
    \includegraphics[width=\textwidth,clip]{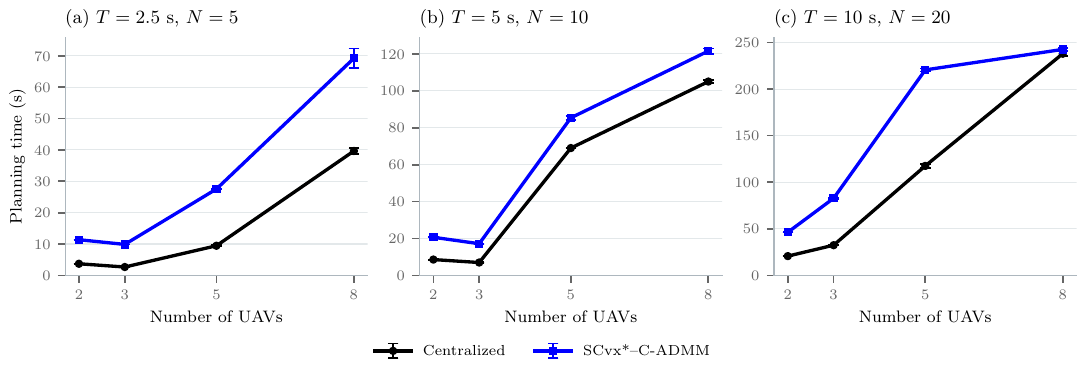}
    \caption{Comparison of centralized and decentralized SCvx*--C-ADMM planning times: mean $\pm$ one standard deviation over three cold solves per configuration.}
    \label{fig:case-timing}
\end{figure}

\color{black}
\Needspace{7\baselineskip}
\begin{rmk}[Planning under a time budget]
The results above use convergence-based termination.
For online execution, each cycle can be warm-started and both iteration counts capped to fit the available time.
For example, \citet{stomberg2026realtime} demonstrate a decentralized SQP--ADMM scheme with one outer SQP step and six inner ADMM iterations per sampling instant.
\end{rmk}

\FloatBarrier
\color{black}
\Needspace{11\baselineskip}
\subsection{Risk-Sensitive UGV Trajectory Optimization}\label{subsec:case-risk}

\begingroup
\setlength{\intextsep}{0pt}
\begin{wraptable}{r}{0.47\textwidth}
    \centering
    \makeatletter
    \long\def\@makecaption#1#2{\vskip6pt{\small\color{black}\raggedright #1: #2\par}\vskip4pt}
    \makeatother
        \caption{\color{black}Target belief used by UGV planner comparisons.}
        \label{tab:case-belief}
        \color{black}\footnotesize
        \setlength{\tabcolsep}{2pt}
        \renewcommand{\arraystretch}{1.08}
        \begin{tabular*}{\linewidth}{@{\extracolsep{\fill}}lr@{}}
            \toprule
            Parameter & Value \\
            \midrule
            \multicolumn{2}{@{}l}{\textit{One Bernoulli track}} \\
            Track existence probability, $r_\ell$ & $0.95$ \\
            Track mean, $\mu_\ell$ & $(0.54,0.48)$ \\
            Principal std., $\sigma_1,\sigma_2$ & $0.10,\ 0.03$ \\
            Ellipse orientation & $135^\circ$ \\
            Confidence level, $1-\delta_\ell$ & $95\%$ \\
            \bottomrule
        \end{tabular*}
\end{wraptable}
For the UGV trajectory optimization, we compare three ground planners between the endpoints in Table~\ref{tab:case-initial}.
The planners share the fixed Gaussian Bernoulli belief in Table~\ref{tab:case-belief}.
Here $\sigma_1,\sigma_2$ describe the major and minor axes, the orientation is measured counterclockwise from the positive horizontal axis.
\par
\makeatletter
\ifnum\c@WF@wrappedlines>1
  \vspace{\dimexpr\c@WF@wrappedlines\baselineskip-\baselineskip\relax}
\fi
\makeatother
\WFclear
\endgroup

The undetected-target PPP intensity is $0.70$ in the left half of the workspace ($\xi_1\leq0.5$) and $0.03$ in the right half, in expected targets per unit area.
Figure~\ref{fig:case-risk} shows the resulting optimal position trajectories of the UGV for various problem configurations.
The first planner avoids the track mean, the second avoids its confidence set, and the third retains that constraint while adding PMB risk (i.e., the void probability) to the objective.
All three selected solutions converge and satisfy the terminal constraints.

\begin{figure}[htbp]
    \centering
    \includegraphics[width=\textwidth,clip]{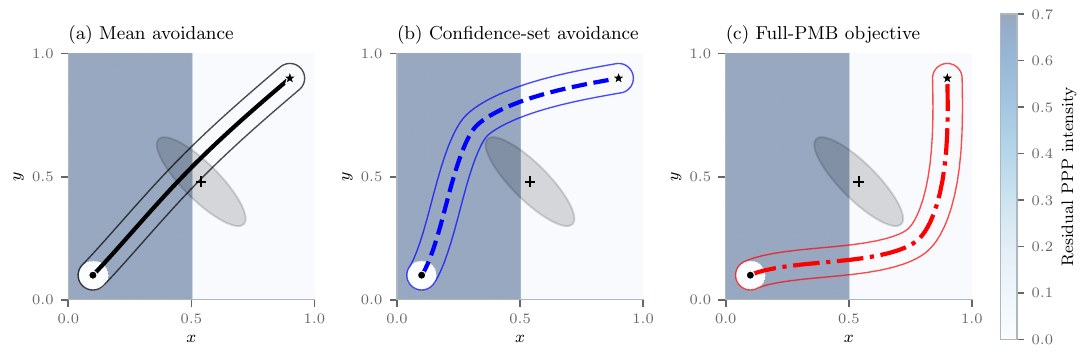}
    \caption{\color{black}UGV planner comparison.
    The ellipse shows track uncertainty; thin outlines show swept corridors, and circles/stars mark start/goal.}
    \label{fig:case-risk}
\end{figure}

\color{black}
From Figure~\ref{fig:case-risk}(a), we see that naive mean avoidance passes through appreciable track probability mass.
Confidence-set avoidance in Figure~\ref{fig:case-risk}(b) takes the upper detour but remains longer in the region of high undetected-target intensity, which is undesirable.
The full-PMB objective used in Figure~\ref{fig:case-risk}(c) instead selects the lower detour and enters the lower-intensity region earlier.
Table~\ref{tab:case-risk} quantifies the resulting risk--effort tradeoff, namely, optimizing the fully risk-aware cost yields the lowest threat probability, at the expense of slightly more control effort.

\begin{table}[htbp]
    \centering
    \color{black}
    \caption{\color{black}Risk and control effort for the routes in Figure~\ref{fig:case-risk}.}
    \label{tab:case-risk}
    \color{black}
    \begin{tabular}{lrr}
        \toprule
        Planner & PMB corridor probability & Control effort $E_u$ \\
        \midrule
        Mean avoidance & $37.99\%$ & $0.0214$ \\
        Confidence-set avoidance & $7.27\%$ & $0.4516$ \\
        Full-PMB objective & $3.64\%$ & $0.7366$ \\
        \bottomrule
    \end{tabular}
\end{table}

\FloatBarrier
\color{black}
\subsection{Closed-Loop Collaboration with One UGV}\label{subsec:case-feedback}

We now combine multi-UAV trajectory optimization, PMB inference, and risk-aware UGV planning in a closed-loop mission.
Figure~\ref{fig:case-base-density} shows the UAV base density updates, and Figure~\ref{fig:case-mission} shows the PMB intensity and ground execution during the closed-loop mission.
Initially, the UAVs search along the straight ground route while the UGV waits.
Every initial plan, ground replan, and UAV horizon refresh uses the three-component mixture in \eqref{eq:updated_base_density}, evaluated from the current posterior and remaining UGV route.
The posterior mixture changes as sensing reduces uncertainty along the remaining ground corridor.

We use equal nominal weights for route coverage, undetected-target search, and track information, renormalized over the nonzero components.
Thus, the weights are $(1/3,1/3,1/3)$ when all three components are active, and $(1/2,1/2,0)$ when the track-information component is absent.
The Bayesian track-information maps in \eqref{eq:bayesian_track_eid} use a point-mass nuisance-state distribution concentrated at altitude $0.45$ and zero attitude angles.
With the uniform initial PPP and no tracks, $U_0=\Phi_0$, so the initial mixture is the Gaussian tube in Figure~\ref{fig:case-base-density}(a).
At $3.0$~s, a confirmed target triggers a ground replan and a new posterior-dependent UAV plan (Figure~\ref{fig:case-base-density}(b)).
Ground planning minimizes $E_u+w_{H}\scriptH$ subject to track-avoidance constraints, using the current PPP intensity and Bernoulli tracks.

Finally, the PMB motion gate governs permission for the UGV to move and the UGV is released when the corridor risk bound falls below the motion-gate budget $\epsilon_{\mathrm{go}}=0.05$, in accordance with the motion gate in \eqref{eq:fleet-motion-gate}.
Here, we observe that continued sensing opens the gate at $31.0$~s, after which the UGV traverses the route and reaches its stopped goal at $41.0$~s.
The mission uses five UAV planning episodes and one ground replan.
Reported times are simulated execution times and exclude optimization wall time.
Figure~\ref{fig:case-mission}(a) shows the posterior and vehicle configurations at the principal mission events.
Panel (b) shows how the undetected-target and track contributions determine the motion gate, while panel (c) shows the UGV waiting and then progressing to its goal.
The first track-risk spike, at $1.4$~s, follows a clutter-generated tentative track, whereas the spike at $2.2$~s follows a true-target detection.
The route update at $3$~s produces the sharp risk reduction.
The later gradual increase reflects growing track uncertainty between informative observations; renewed target observations and the rejection of an unsupported tentative track subsequently reduce the corridor risk without triggering another ground replan.

\begin{figure}[htbp]
    \centering
    \includegraphics[width=\textwidth,clip]{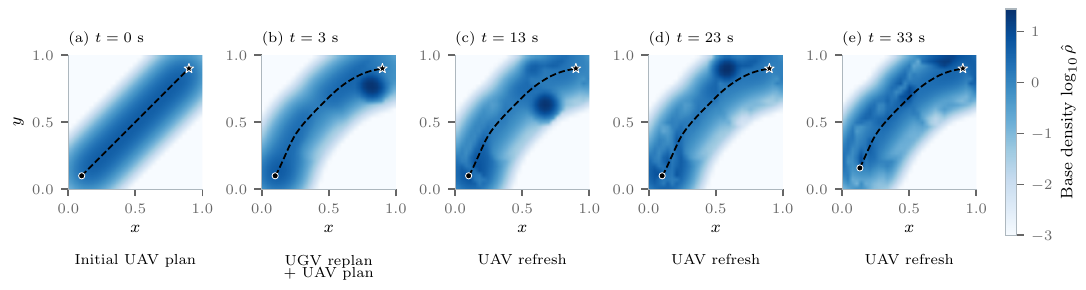}
    \caption{UAV base density updates. Dashed lines show the remaining ground routes.}
    \label{fig:case-base-density}
\end{figure}

\Needspace{25\baselineskip}
\begin{figure}[htbp]
    \centering
    \includegraphics[width=\textwidth,clip]{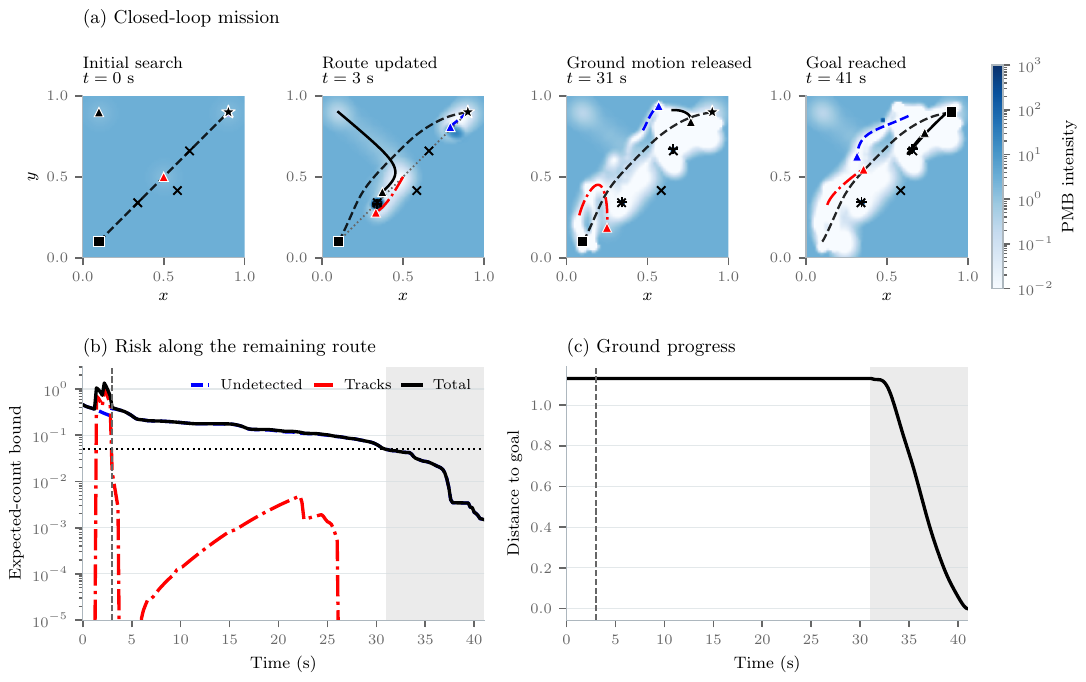}
    \caption{Closed-loop mission execution: (a) PMB intensity and vehicle configurations; (b) corridor risk bounds; (c) UGV progress.
    Crosses mark targets; the vertical dashed line marks replanning, and shading marks authorized motion.}
    \label{fig:case-mission}
\end{figure}
\FloatBarrier

\color{black}
\subsection{Full-System Collaboration with Two UGVs}\label{subsec:case-full-system}
Finally, we consider three UAVs supporting two UGVs whose initial routes intersect.
This example combines posterior-dependent aerial sensing with coupled ground trajectory optimization, inter-UGV separation constraints, and a joint motion gate evaluated over the union of the remaining ground corridors.
The UGVs start at $(0.1,0.1)$ and $(0.1,0.9)$ and have goals $(0.9,0.9)$ and $(0.9,0.1)$, respectively.

\color{black}
\begin{figure}[htbp]
    \centering
    \includegraphics[width=\textwidth,clip]{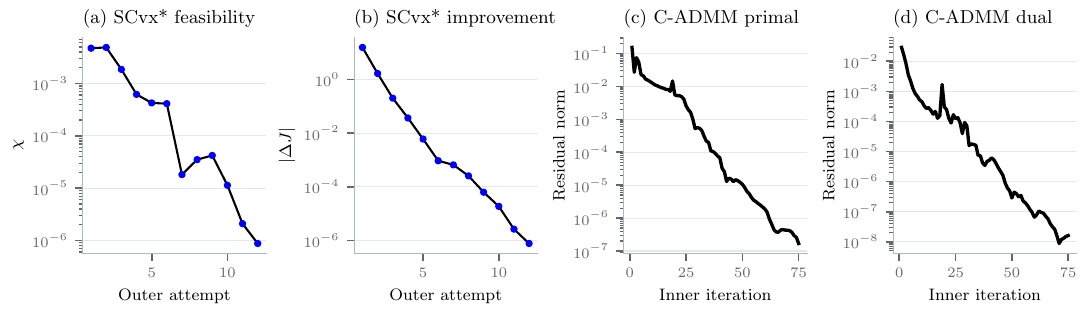}
    \caption{Two-UGV SCvx*--C-ADMM solver convergence. Inner residuals correspond to the first outer subproblem and terminate at convergence.}
    \label{fig:case-ugv-convergence}
\end{figure}
\FloatBarrier
\begingroup
\setlength{\intextsep}{4pt plus 1pt minus 1pt}
\emergencystretch=1em
\Needspace{20\baselineskip}
\begin{wrapfigure}{r}{0.56\textwidth}
    \centering
    \includegraphics[width=\linewidth,clip]{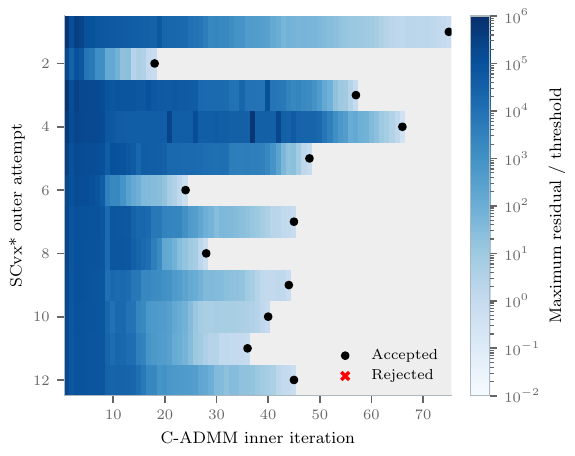}
    \makeatletter
    \long\def\@makecaption#1#2{\vskip6pt{\small\raggedright\noindent #1: #2\par}\vskip4pt}
    \makeatother
    \caption{\color{caseeditcolor}Nested SCvx*--C-ADMM convergence for the UGV fleet problem.
    Colors show the largest residual-to-tolerance ratio; row-end markers show inner termination and outer acceptance.
    }
    \label{fig:case-ugv-nested}
\end{wrapfigure}
First, we illustrate two-UGV SCvx*--C-ADMM convergence using the control-effort objective ($w_H=0$).
The solver converges in 12 outer iterations and Figure~\ref{fig:case-ugv-convergence} shows the SCvx* feasibility and merit improvement, together with the standard C-ADMM residuals for the first convex subproblem.
Figure~\ref{fig:case-ugv-nested} shows the nested iteration history with the same stopping tolerances.

\noindent\caseedit{The ground fleet replans at $2$, $22$, $42$~s as new route-relevant tracks are confirmed.}
\caseedit{The joint gate releases both UGVs at $57.6$~s, and both reach their stopped goals at $67.6$~s (Figure~\ref{fig:case-two-ugv-mission}).}
\caseedit{The mission uses $8$ UAV planning episodes and $3$ joint ground replans.}
\caseedit{The minimum inter-UGV separation during execution is $0.0995$, above the required $0.08$.}

\par
\makeatletter
\ifnum\c@WF@wrappedlines>1
  \vspace{\dimexpr\c@WF@wrappedlines\baselineskip\relax}
\fi
\makeatother
\WFclear

\begin{figure}[htbp]
    \centering
    \includegraphics[width=\textwidth,clip]{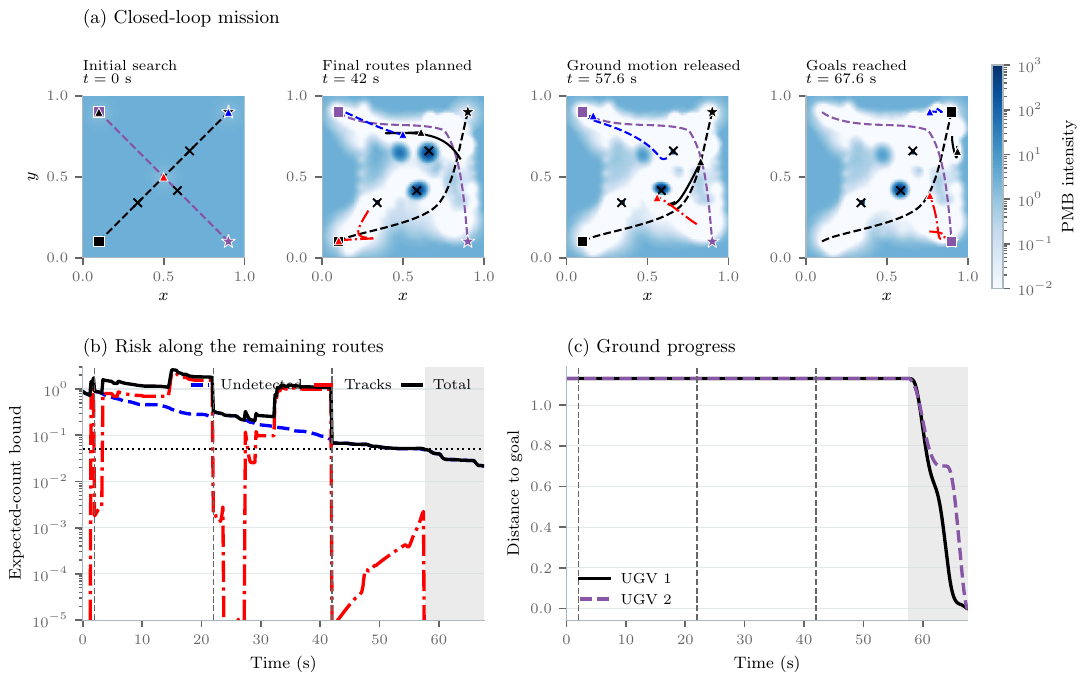}
    \caption{\color{caseeditcolor}Closed-loop collaboration with two UGVs: (a) PMB intensity and vehicle configurations; (b) risk along the union of the remaining ground corridors; (c) ground progress.
\caseedit{Vertical dashed lines mark ground replans, and shading marks authorized motion.}}
    \label{fig:case-two-ugv-mission}
\end{figure}
\par\endgroup

\FloatBarrier
\endgroup

\section{Conclusion}~\label{sec:conclusion}
\begingroup\color{red}
\hypersetup{citecolor=red,linkcolor=red,urlcolor=red}
In this work, we have developed a framework for information-driven collaboration between UAV and UGV fleets in a domain containing an unknown number of targets.
We formulated aerial search using an ergodic objective whose occupation measure accounts for the ground area observed by pose-dependent camera footprints.
Bayesian inference from angles-only measurements provides a PMB belief over detected and undetected targets, linking aerial search priorities to ground-route risk.
The resulting coordination scheme updates both fleets' plans as information becomes available and permits ground motion once the remaining route risk satisfies a prescribed threshold.

We developed a nested SCvx*--C-ADMM method for the nonconvex fleet trajectory problems, accounting for nonlinear dynamics, nonseparable objectives, and continuous-time inter-agent separation.
For the UAV problem, the graph-flow formulation replaces the global coefficient-sum constraint with local balance equations and neighbor agreement.
Its equivalence at consensus to the centralized convex subproblem allows the coupled ergodic objective to be optimized without a central coefficient aggregator.

The numerical studies demonstrate how the UAVs distribute camera coverage along the ground routes and how accounting for undetected threats changes the UGV risk--effort tradeoff.
The solver comparisons assess convergence, solution quality, and computational scaling relative to centralized optimization.
Closed-loop missions with one and two UGVs further demonstrate the interaction between aerial sensing, target confirmation, ground replanning, and motion authorization as the ground vehicles progress to their goals.

Future work will consider UGVs with onboard sensing and decentralized RFS fusion, allowing both fleets to contribute observations through local belief updates and inter-agent communication.
Further investigation of the nested optimization structure may improve convergence and reduce the computation and communication required by the consensus iterations.
Lastly, control-barrier-function safety filters adapted to the relative degree of the vehicle dynamics could address execution errors and disturbances~\citep{ames2017cbf,xiao2022hocbf}.
\endgroup

\iffalse
\begin{itemize}
    \item Extensions to scenarios where the UGVs themselves have on-board sensing capabilities, which leads to heterogeneous UAV/UGV fusion
    \item Distributed/decentralized RFS sensor fusion
    \item A more thorough investigation into non-convex distributed optimization methods, and exploiting the rich structure of SCP-ADMM
    \item \textcolor{red}{A control-barrier-function safety filter for execution errors and disturbances, with a relative-degree-appropriate construction for the UAV and UGV dynamics~\citep{ames2017cbf,xiao2022hocbf}.}
\end{itemize}
\fi

\newpage

\appendix
\section{Proof of Proposition~\ref{prop:FoV-ellipse}}
\label{app:A}

A point $s = [\xi^\intercal, 0]^\intercal\in\R^{3}$ lies in the sensor FoV cone exactly when
\begin{equation}\label{eq:cone-membership}
    o^\intercal (s - r) > 0, \quad \frac{o^\intercal (s - r)}{\|s - r\|} \geq \cos\alpha.
\end{equation}
Letting $d = \xi - p$ be the planar relative position vector, the boundary of the cone-plane intersection satisfies
\begin{equation}\label{eq:cone-plane-boundary}
    (b^\intercal d - o_z h)^2 = \cos^2 \alpha (\|d\|^2 + h^2).
\end{equation}
Together with the forward-cone condition in \eqref{eq:cone-membership}, the downward-facing intersection whose boundary satisfies \eqref{eq:cone-plane-boundary} is a bounded ellipse precisely when $D=o_z^2-\sin^2\alpha>0$ and $o_z<-\sin\alpha$. At the limiting and opposite cases the plane-cone intersection may instead be parabolic, hyperbolic, or lie on the opposite cone nappe.
For $b\neq0$, use the principal directions $(u,v)$, write $\zeta=u^\intercal d$ and $\eta=v^\intercal d$, expand \eqref{eq:cone-plane-boundary}, and complete the square to obtain
\begin{equation}
    D(\zeta - \zeta_c)^2 + \cos^2\alpha\eta^2 = \frac{h^2\sin^2\alpha\cos^2\alpha}{D}, \quad \zeta_c = -\frac{o_z h \|b\|}{D}.
\end{equation}
Consequently, transforming back to the original variable $\xi$ and collecting terms yields the desired result. If $b=0$, \eqref{eq:cone-plane-boundary} reduces directly to the stated circular nadir footprint.

\section{Void Probability Linearization}
\label{app:void}

To compute the derivative of the negative log void probability, we need to compute the variation in the \textit{integration domain} $\scriptC_{\rho}(p)$. We assume locally that the swept corridor is compactly contained in the interior of $\Omega$, has a sufficiently regular boundary, has a unique closest path point almost everywhere on that boundary, and undergoes no topology change under the perturbation; under these assumptions the following is a local shape derivative. If the corridor meets $\partial\Omega$, the fixed part of the clipped boundary must instead be excluded from the moving-boundary integral. We fix one agent throughout this appendix and suppress its index.
Let $\hat{P}$ be the current SCvx reference and perturb it in the direction $\Delta P$, i.e., $P^{\varepsilon} = \hat{P} + \varepsilon \Delta P$, which implies the position variation $\delta p(t) = \sum_{n=0}^{N} B_n(t) \Delta p_n$.
Denote the reference corridor by $\hat{\scriptC} = \scriptC_{\rho}(\hat{P})$.
Define the generic spatial integral over the corridor $\scriptC_{\rho}(P)$ with respect to a sufficiently regular function $f: \Omega \rightarrow \R_{+}$ as
\begin{equation*}
    \scriptI_f(P) \triangleq \int_{\scriptC_{\rho}(P)} f(\xi)\, \d\xi,
\end{equation*}
which implies $\scriptH(P) = \scriptI_{\lambda}(P) - \sum_j \log\big(1 - r_{j} \scriptI_{\rho_j}(P)\big)$.
Thus, it suffices to compute $\nabla_{p_n} \scriptI_f(\hat{P})$ to perform the objective linearization.
Consider a point $\xi \in \partial\hat{\scriptC}$ and the closest point on the corresponding path $p\big(\tau^\star(\xi)\big)$, where $\tau^\star(\xi) \triangleq \mathrm{argmin}_{t\in [0,T]} \, \|p(t) - \xi\|$ is the corresponding time.
It follows that $\|\xi - p(\tau^\star(\xi))\| = \rho$ and that the normal vector at $\xi$ is $\nu(\xi) = \big(\xi - p(\tau^\star(\xi))\big) / \rho$.
Next, consider the corridor $\scriptC_{\rho}^{\varepsilon}$ induced by the perturbed path $P^\varepsilon$.
Similarly, take a point on the boundary $\xi^\varepsilon$ with normal vector $\nu^\varepsilon(\xi^\varepsilon)$.
The \textit{velocity field} on the boundary $\partial\hat{\scriptC}$ is then given by
\begin{equation}\label{eq:tube-boundary-velocity}
    V(\xi)^\intercal\nu(\xi)=\delta p\big(\tau^\star(\xi)\big)^\intercal\nu(\xi),
\end{equation}
See Figure~\ref{fig:shape-calc} for an illustration of the various deformations and vectors used throughout the analysis.
Suppose that the perturbation of the path induces a boundary velocity field $V(\xi)$ on $\partial\hat{\scriptC}$.
It follows that
\begin{align*}
    D\scriptI_{f}(\hat{P})[\Delta P] \triangleq \frac{\d}{\d\varepsilon} \scriptI_{f}(P^{\varepsilon})\bigg|_{\varepsilon=0} = \frac{\d}{\d\varepsilon}\int_{\scriptC_{\rho}(P^{\varepsilon})} f(\xi)\, \d\xi\bigg|_{\varepsilon=0} &= \int_{\partial\hat{\scriptC}} f(\xi) V^\intercal (\xi) \nu(\xi)\, \d s \\
    &= \int_{\partial\hat{\scriptC}} f(\xi) \delta p^\intercal\big(\tau^\star(\xi)\big) \nu(\xi)\, \d s,
\end{align*}
where the second equality follows from the Hadamard formula~\citep{hiptmair2013shape}, and the third follows from \eqref{eq:tube-boundary-velocity}.
\begin{figure}[!htb]
    \centering
    \includegraphics[width=\linewidth]{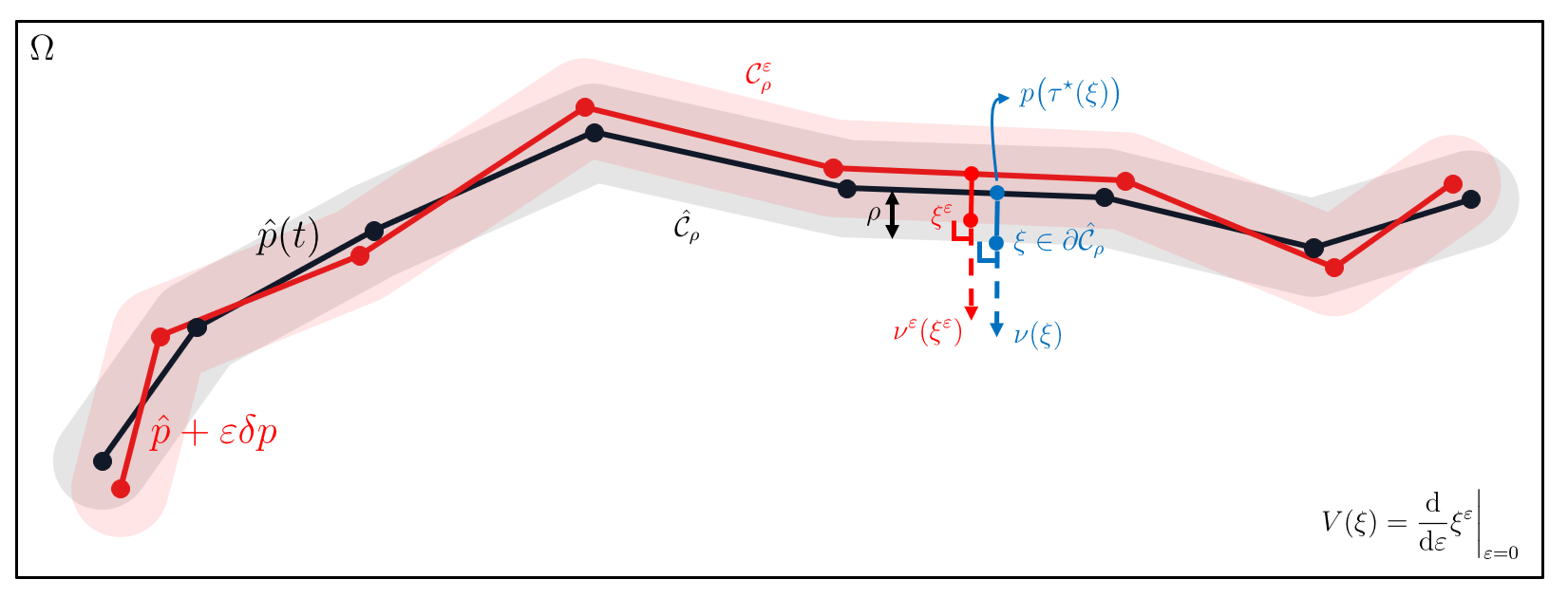}
    \caption{Illustration of various geometric objects, properties, and deformation of path-centered corridor in shape calculus.}
    \label{fig:shape-calc}
\end{figure}
Substituting the piecewise-linear path expression into the shape derivative yields
\begin{align*}
    D\scriptI_{f}(\hat{P})[\Delta P] &= \int_{\partial\hat{\scriptC}} f(\xi) \left[ \sum_{n=0}^{N} B_n\big(\tau^\star(\xi)\big) \Delta p_n \right]^\intercal \nu(\xi)\, \d s \\
    &= \sum_{n=0}^{N} \Delta p_n^\intercal \int_{\partial\hat{\scriptC}} f(\xi) B_n \big(\tau^\star(\xi)\big) \nu(\xi)\, \d s.
\end{align*}
Define the integral
\begin{equation}
    \sigma_{f,n}(\hat{P}) \triangleq \int_{\partial\hat{\scriptC}} f(\xi) B_n\big(\tau^\star(\xi)\big) \nu(\xi)\, \d s \in \R^2,
\end{equation}
from which it follows that $\nabla_{p_n}\scriptI_f(\hat{P}) = \sigma_{f,n}(\hat{P})$.
Consequently, provided $1-r_j\scriptI_{\rho_j}(\hat P)>0$ for every $j$, the void-probability linearization becomes
\begin{align}
    \scriptH(P) \approx \scriptH(\hat P)+\sum_{n=0}^{N}\left[\sigma_{\lambda,n}(\hat P)+\sum_{j=1}^{L}\frac{r_j\,\sigma_{\rho_j,n}(\hat P)}{1-r_j\scriptI_{\rho_j}(\hat P)}\right]^\intercal(p_n-\hat p_n).
\end{align}

\section{Distance Function Linearization}
\label{app:distance}
Let $d_{\scriptA}(p) \triangleq \mathrm{dist}(p,\scriptA)$ denote the minimum distance from point $p$ to the nonempty, closed, convex set $\scriptA \subseteq\Omega$.
Let $\hat{p} \notin\scriptA$ be the reference point and compute the projection
\begin{equation}
    \hat{q} \triangleq \Pi_{\scriptA}(\hat p) = \underset{q \in \scriptA}{\operatorname{argmin}}\, \|\hat p - q\|.
\end{equation}
Then it follows that $d_{\scriptA}(\hat p) = \|\hat p - \hat q\|$.
Define the outward unit vector $\hat{n} = (\hat p - \hat q) / \|\hat p - \hat q\|$.
The distance function then has the gradient~\citep{bauschke2017convex}
\begin{equation}
    \nabla d_{\scriptA}(p) = \frac{p - \Pi_{\scriptA}(p)}{\|p - \Pi_{\scriptA}(p)\|}, \quad p \notin \scriptA.
\end{equation}
Since $d_{\scriptA}$ is a convex function, we have the global lower bound
\begin{equation}
    d_{\scriptA}(p) \geq d_{\scriptA}(\hat p) + \nabla d_{\scriptA}^\intercal(\hat p) (p - \hat{p}),
\end{equation}
thus the linearized constraint is a conservative inner approximation of the feasible set, that is,
\begin{equation*}
    \left\{p : d_{\scriptA}(\hat p) + \hat{n}^\intercal (p - \hat p) \geq \rho \right\} \subseteq \left\{ p : d_{\scriptA}(p) \geq \rho \right\}.
\end{equation*}

\begingroup\color{red}
\hypersetup{citecolor=red,linkcolor=red,urlcolor=red}
\section{Flow Sensitivities and Shooting Linearization}\label{app:flow-sensitivities}
For a reference shooting state $\hat z_n^a$ and held control $\hat u_n$, let $\hat\zeta_n(t)$ solve
\begin{equation*}
    \dot{\hat\zeta}_n(t)=f_{\mathrm{aug}}(\hat\zeta_n(t),\hat u_n),\qquad
    \hat\zeta_n(t_n)=\hat z_n^a,\qquad t\in[t_n,t_{n+1}].
\end{equation*}
Where $f_{\mathrm{aug}}$ is continuously differentiable along this rollout, define
\begin{equation*}
    \mathsf A_n(t)=D_{z^a}f_{\mathrm{aug}}(\hat\zeta_n(t),\hat u_n),\qquad
    \mathsf B_n(t)=D_u f_{\mathrm{aug}}(\hat\zeta_n(t),\hat u_n).
\end{equation*}
The sensitivities to the initial state and held control satisfy the variational equations
\begin{equation}\label{eq:flow-variational-odes}
    \begin{aligned}
        \dot\Phi_{z,n}(t)&=\mathsf A_n(t)\Phi_{z,n}(t),
        &\Phi_{z,n}(t_n)&=I_{d_a},\\
        \dot\Phi_{u,n}(t)&=\mathsf A_n(t)\Phi_{u,n}(t)+\mathsf B_n(t),
        &\Phi_{u,n}(t_n)&=0_{d_a\times m_a},
    \end{aligned}
\end{equation}
where $d_a=\dim z^a$ and $m_a=\dim u_n$.
Their endpoint values give the flow Jacobians
\begin{equation*}
    A_n=\Phi_{z,n}(t_{n+1})=D_{z^a}F_n^a(\hat z_n^a,\hat u_n),\qquad
    B_n=\Phi_{u,n}(t_{n+1})=D_u F_n^a(\hat z_n^a,\hat u_n).
\end{equation*}
Thus, with $c_n=F_n^a(\hat z_n^a,\hat u_n)-A_n\hat z_n^a-B_n\hat u_n$, the linearized shooting equation is
\begin{equation}\label{eq:ctcs-affine-dynamics}
    z_{n+1}^a=A_nz_n^a+B_nu_n+c_n.
\end{equation}
We integrate the reference trajectory and variational equations together to obtain numerical approximations of the flow and its Jacobians.
For the symmetric accumulator representation, the shared pair defect uses $\bar z_n^a$, formed from the agent states and the averages of the endpoint copies.
Its derivatives follow by composing the flow Jacobians with this linear averaging map.
The copy-agreement constraints are already affine.
For the UGV formulation, the same equations apply with $z_n^a$, $F_n^a$, and $f_{\mathrm{aug}}$ replaced by $z_n^g$, $F_n^g$, and $f_{\mathrm{aug}}^g$.
The augmented state and held-control dimensions are then $d_g=4N_g+|\scriptE|$ and $2N_g$, respectively.
\endgroup

\vskip 0.2in
\begingroup
\color{black}
\hypersetup{citecolor=black,linkcolor=black,urlcolor=black}
\bibliography{refs}

@inproceedings{ayvali2017constrained,
  author    = {Elif Ayvali and Hadi Salman and Howie Choset},
  title     = {Ergodic Coverage in Constrained Environments Using Stochastic Trajectory Optimization},
  booktitle = {Proceedings of the IEEE/RSJ International Conference on Intelligent Robots and Systems},
  year      = {2017},
  pages     = {5204--5210},
  doi       = {10.1109/IROS.2017.8206410}
}

@inproceedings{dewan2013coordinated,
  author    = {Ayush Dewan and Aravindh Mahendran and Nikhil Soni and K. Madhava Krishna},
  title     = {Optimization Based Coordinated {UGV-MAV} Exploration for {2D} Augmented Mapping},
  booktitle = {Proceedings of the International Conference on Autonomous Agents and Multiagent Systems},
  year      = {2013},
  pages     = {1125--1126},
  note      = {Extended abstract},
  url       = {https://aamas.csc.liv.ac.uk/Proceedings/aamas2013/docs/p1125.pdf}
}

@inproceedings{butzke2016collaborative,
  author    = {Jonathan Butzke and Kalin Gochev and Benjamin Holden and Eui-jung Jung and Maxim Likhachev},
  title     = {Planning for a Ground-Air Robotic System with Collaborative Localization},
  booktitle = {Proceedings of the IEEE International Conference on Robotics and Automation},
  year      = {2016},
  pages     = {284--291},
  doi       = {10.1109/ICRA.2016.7487146}
}

@article{naveed2025adaptive,
  author  = {Kaleb Ben Naveed and Devansh R. Agrawal and Rahul Kumar and Dimitra Panagou},
  title   = {Adaptive Ergodic Search with Energy-Aware Scheduling for Persistent Multi-Robot Missions},
  journal = {Autonomous Robots},
  year    = {2025},
  volume  = {49},
  number  = {4},
  pages   = {27},
  doi     = {10.1007/s10514-025-10215-6}
}

@inproceedings{hughes2025ergodic,
  author    = {Christian Hughes and Houston Warren and Darrick Lee and Fabio Ramos and Ian Abraham},
  title     = {Ergodic Trajectory Optimization on Generalized Domains Using Maximum Mean Discrepancy},
  booktitle = {Proceedings of the IEEE International Conference on Robotics and Automation},
  year      = {2025},
  pages     = {1--7},
  doi       = {10.1109/ICRA55743.2025.11127573}
}

@article{nguyen2020multiobjective,
  author  = {Hoa Van Nguyen and Hamid Rezatofighi and Ba-Ngu Vo and Damith C. Ranasinghe},
  title   = {Multi-Objective Multi-Agent Planning for Jointly Discovering and Tracking Mobile Objects},
  journal = {Proceedings of the AAAI Conference on Artificial Intelligence},
  year    = {2020},
  volume  = {34},
  number  = {5},
  pages   = {7227--7235},
  doi     = {10.1609/aaai.v34i05.6213}
}

@article{beard2017void,
  author  = {Michael Beard and Ba-Tuong Vo and Ba-Ngu Vo and Sanjeev Arulampalam},
  title   = {Void Probabilities and {Cauchy--Schwarz} Divergence for Generalized Labeled Multi-{Bernoulli} Models},
  journal = {IEEE Transactions on Signal Processing},
  year    = {2017},
  volume  = {65},
  number  = {19},
  pages   = {5047--5061},
  doi     = {10.1109/TSP.2017.2723355}
}

@misc{han2026risk,
  author       = {Shaohang Han and Gang Chen and Yixi Cai and Ignacio Torroba and Ivan Stenius and Patric Jensfelt and Javier Alonso-Mora and Jana Tumova},
  title        = {Risk-Aware Belief Control Barrier Functions over Random Finite Sets},
  year         = {2026},
  howpublished = {arXiv:2607.15016},
  url          = {https://arxiv.org/abs/2607.15016}
}

@inproceedings{rey2018decentralized,
  author    = {Felix Rey and Zhoudan Pan and Adrian Hauswirth and John Lygeros},
  title     = {Fully Decentralized {ADMM} for Coordination and Collision Avoidance},
  booktitle = {Proceedings of the European Control Conference},
  year      = {2018},
  pages     = {825--830},
  doi       = {10.23919/ECC.2018.8550245}
}

@article{stomberg2026realtime,
  author  = {G{\"o}sta Stomberg and Alexander Engelmann and Moritz Diehl and Timm Faulwasser},
  title   = {Decentralized Real-Time Iterations for Distributed {NMPC}},
  journal = {IEEE Transactions on Automatic Control},
  year    = {2026},
  volume  = {71},
  number  = {3},
  pages   = {1584--1599},
  doi     = {10.1109/TAC.2025.3622000}
}

@article{mathew2011metrics,
  author  = {George Mathew and Igor Mezi{\'c}},
  title   = {Metrics for Ergodicity and Design of Ergodic Dynamics for Multi-Agent Systems},
  journal = {Physica D: Nonlinear Phenomena},
  year    = {2011},
  volume  = {240},
  number  = {4--5},
  pages   = {432--442},
  doi     = {10.1016/j.physd.2010.10.010}
}

@inproceedings{miller2013trajectory,
  author    = {Lauren M. Miller and Todd D. Murphey},
  title     = {Trajectory Optimization for Continuous Ergodic Exploration},
  booktitle = {Proceedings of the American Control Conference},
  year      = {2013},
  pages     = {4196--4201},
  doi       = {10.1109/ACC.2013.6580484}
}

@article{abraham2018measurement,
  author  = {Ian Abraham and Todd D. Murphey},
  title   = {Decentralized Ergodic Control: Distribution-Driven Sensing and Exploration for Multiagent Systems},
  journal = {IEEE Robotics and Automation Letters},
  year    = {2018},
  volume  = {3},
  number  = {4},
  pages   = {2987--2994},
  doi     = {10.1109/LRA.2018.2849588}
}

@article{miller2016information,
  author  = {Lauren M. Miller and Yonatan Silverman and Malcolm A. MacIver and Todd D. Murphey},
  title   = {Ergodic Exploration of Distributed Information},
  journal = {IEEE Transactions on Robotics},
  year    = {2016},
  volume  = {32},
  number  = {1},
  pages   = {36--52},
  doi     = {10.1109/TRO.2015.2500441}
}

@article{wilson2014trajectory,
  author  = {Andrew D. Wilson and Jarvis A. Schultz and Todd D. Murphey},
  title   = {Trajectory Synthesis for {Fisher} Information Maximization},
  journal = {IEEE Transactions on Robotics},
  year    = {2014},
  volume  = {30},
  number  = {6},
  pages   = {1358--1370},
  doi     = {10.1109/TRO.2014.2345918}
}

@book{chavel1984eigenvalues,
  author    = {Isaac Chavel},
  title     = {Eigenvalues in Riemannian Geometry},
  publisher = {Academic Press},
  address   = {Orlando, Florida},
  year      = {1984},
  isbn      = {978-0-12-170640-1}
}

@inproceedings{mao2016successive,
  author    = {Yuanqi Mao and Michael Szmuk and Beh{\c c}et A{\c c}{\i}kme{\c s}e},
  title     = {Successive Convexification of Non-Convex Optimal Control Problems and Its Convergence Properties},
  booktitle = {Proceedings of the IEEE Conference on Decision and Control},
  year      = {2016},
  pages     = {3636--3641},
  doi       = {10.1109/CDC.2016.7798816}
}

@inproceedings{oguri2023scvxstar,
  author    = {Kenshiro Oguri},
  title     = {Successive Convexification with Feasibility Guarantee via {Augmented Lagrangian} for Non-Convex Optimal Control Problems},
  booktitle = {Proceedings of the IEEE Conference on Decision and Control},
  year      = {2023},
  pages     = {3296--3302},
  doi       = {10.1109/CDC49753.2023.10383462}
}

@article{boyd2011admm,
  author  = {Stephen Boyd and Neal Parikh and Eric Chu and Borja Peleato and Jonathan Eckstein},
  title   = {Distributed Optimization and Statistical Learning via the Alternating Direction Method of Multipliers},
  journal = {Foundations and Trends in Machine Learning},
  year    = {2011},
  volume  = {3},
  number  = {1},
  pages   = {1--122},
  doi     = {10.1561/2200000016}
}

@article{wang2010fastmpc,
  author  = {Yang Wang and Stephen Boyd},
  title   = {Fast Model Predictive Control Using Online Optimization},
  journal = {IEEE Transactions on Control Systems Technology},
  year    = {2010},
  volume  = {18},
  number  = {2},
  pages   = {267--278},
  doi     = {10.1109/TCST.2009.2017934}
}

@inproceedings{bock1984multiple,
  author    = {Hans Georg Bock and Karl-Josef Plitt},
  title     = {A Multiple Shooting Algorithm for Direct Solution of Optimal Control Problems},
  booktitle = {Proceedings of the 9th IFAC World Congress},
  year      = {1984},
  pages     = {1603--1608},
  doi       = {10.1016/S1474-6670(17)61205-9}
}

@article{ames2017cbf,
  author  = {Aaron D. Ames and Xiangru Xu and Jessy W. Grizzle and Paulo Tabuada},
  title   = {Control Barrier Function Based Quadratic Programs for Safety Critical Systems},
  journal = {IEEE Transactions on Automatic Control},
  year    = {2017},
  volume  = {62},
  number  = {8},
  pages   = {3861--3876},
  doi     = {10.1109/TAC.2016.2638961}
}

@article{xiao2022hocbf,
  author  = {Wei Xiao and Calin Belta},
  title   = {High-Order Control Barrier Functions},
  journal = {IEEE Transactions on Automatic Control},
  year    = {2022},
  volume  = {67},
  number  = {7},
  pages   = {3655--3662},
  doi     = {10.1109/TAC.2021.3105491}
}

@book{mahler2007statistical,
  author    = {Ronald P. S. Mahler},
  title     = {Statistical Multisource-Multitarget Information Fusion},
  publisher = {Artech House},
  address   = {Norwood, Massachusetts},
  year      = {2007},
  isbn      = {978-1-59693-092-6}
}

@article{garciafernandez2018pmbm,
  author  = {{\'A}ngel F. Garc{\'i}a-Fern{\'a}ndez and Jason L. Williams and Karl Granstr{\"o}m and Lennart Svensson},
  title   = {Poisson Multi-{Bernoulli} Mixture Filter: Direct Derivation and Implementation},
  journal = {IEEE Transactions on Aerospace and Electronic Systems},
  year    = {2018},
  volume  = {54},
  number  = {4},
  pages   = {1883--1901},
  doi     = {10.1109/TAES.2018.2805153}
}

@article{williams2015marginal,
  author  = {Jason L. Williams},
  title   = {Marginal Multi-Bernoulli Filters: {RFS} Derivation of {MHT}, {JIPDA}, and Association-Based {MeMBer}},
  journal = {IEEE Transactions on Aerospace and Electronic Systems},
  year    = {2015},
  volume  = {51},
  number  = {3},
  pages   = {1664--1687},
  doi     = {10.1109/TAES.2015.130550}
}

@book{lastpenrose2017poisson,
  author    = {G{\"u}nter Last and Mathew Penrose},
  title     = {Lectures on the Poisson Process},
  publisher = {Cambridge University Press},
  address   = {Cambridge},
  year      = {2017},
  doi       = {10.1017/9781316104477},
  isbn      = {978-1-107-45843-7}
}

@book{barshalom2001estimation,
  author    = {Yaakov Bar-Shalom and X. Rong Li and Thiagalingam Kirubarajan},
  title     = {Estimation with Applications to Tracking and Navigation: Theory, Algorithms, and Software},
  publisher = {John Wiley \& Sons},
  address   = {New York},
  year      = {2001},
  doi       = {10.1002/0471221279},
  isbn      = {978-0-471-41655-5}
}

@article{chaloner1995bayesian,
  author  = {Kathryn Chaloner and Isabella Verdinelli},
  title   = {Bayesian Experimental Design: A Review},
  journal = {Statistical Science},
  year    = {1995},
  volume  = {10},
  number  = {3},
  pages   = {273--304},
  doi     = {10.1214/ss/1177009939}
}

@article{hiptmair2013shape,
  author  = {Ralf Hiptmair and Jingzhi Li},
  title   = {Shape Derivatives in Differential Forms {I}: An Intrinsic Perspective},
  journal = {Annali di Matematica Pura ed Applicata},
  year    = {2013},
  volume  = {192},
  number  = {6},
  pages   = {1077--1098},
  doi     = {10.1007/s10231-012-0259-9}
}

@book{bauschke2017convex,
  author    = {Heinz H. Bauschke and Patrick L. Combettes},
  title     = {Convex Analysis and Monotone Operator Theory in Hilbert Spaces},
  edition   = {Second},
  publisher = {Springer},
  address   = {Cham},
  year      = {2017},
  doi       = {10.1007/978-3-319-48311-5}
}

@article{blackmore2011chance,
  author  = {Lars Blackmore and Masahiro Ono and Brian C. Williams},
  title   = {Chance-Constrained Optimal Path Planning with Obstacles},
  journal = {IEEE Transactions on Robotics},
  year    = {2011},
  volume  = {27},
  number  = {6},
  pages   = {1080--1094},
  doi     = {10.1109/TRO.2011.2161160}
}

@misc{elango2024ctcs,
  author = {Purnanand Elango and Dayou Luo and Abhinav G. Kamath and Samet Uzun and Taewan Kim and Beh{\c{c}}et A{\c{c}}{\i}kme{\c{s}}e},
  title = {Successive Convexification for Trajectory Optimization with Continuous-Time Constraint Satisfaction},
  year = {2024},
  howpublished = {arXiv:2404.16826},
  url = {https://arxiv.org/abs/2404.16826}
}

@inproceedings{delatorre2016stochastic,
  author = {Gerardo {De La Torre} and Kathrin Fla{\ss}kamp and Ahalya Prabhakar and Todd D. Murphey},
  title = {Ergodic Exploration with Stochastic Sensor Dynamics},
  booktitle = {Proceedings of the American Control Conference},
  year = {2016},
  pages = {2971--2976},
  doi = {10.1109/ACC.2016.7525371}
}

@misc{berger2024range,
  author = {Henry Berger and Ian Abraham},
  title = {{RAnGE}: Reachability Analysis for Guaranteed Ergodicity},
  year = {2024},
  howpublished = {arXiv:2404.03186},
  doi = {10.48550/arXiv.2404.03186}
}

@inproceedings{abraham2018localization,
  author    = {Ian Abraham and Anastasia Mavrommati and Todd D. Murphey},
  title     = {Data-Driven Measurement Models for Active Localization in Sparse Environments},
  booktitle = {Proceedings of Robotics: Science and Systems},
  year      = {2018},
  address   = {Pittsburgh, Pennsylvania},
  doi       = {10.15607/RSS.2018.XIV.045}
}

@article{mavrommati2018realtime,
  author  = {Anastasia Mavrommati and Emmanouil Tzorakoleftherakis and Ian Abraham and Todd D. Murphey},
  title   = {Real-Time Area Coverage and Target Localization Using Receding-Horizon Ergodic Exploration},
  journal = {IEEE Transactions on Robotics},
  year    = {2018},
  volume  = {34},
  number  = {1},
  pages   = {62--80},
  doi     = {10.1109/TRO.2017.2766265}
}

@book{boyd2004convex,
  author    = {Stephen Boyd and Lieven Vandenberghe},
  title     = {Convex Optimization},
  publisher = {Cambridge University Press},
  year      = {2004},
  doi       = {10.1017/CBO9780511804441}
}

@article{wachter2006ipopt,
  author = {W{\"a}chter, Andreas and Biegler, Lorenz T.},
  title = {On the implementation of an interior-point filter line-search algorithm for large-scale nonlinear programming},
  journal = {Mathematical Programming},
  volume = {106},
  number = {1},
  pages = {25--57},
  year = {2006},
  doi = {10.1007/s10107-004-0559-y}
}

@article{schwan2023piqp,
  author = {Roland Schwan and Yuning Jiang and Daniel Kuhn and Colin N. Jones},
  title = {{PIQP}: A Proximal Interior-Point Quadratic Programming Solver},
  journal = {arXiv preprint arXiv:2304.00290},
  year = {2023}
}
\endgroup

\end{document}